\newcommand{\A}{{\cal A}}
\newcommand{\I}{{\cal I}}
\newcommand{\J}{{\cal J}}
\newcommand{\K}{{\cal K}}
\newcommand{\T}{{\cal T}}
\newcommand{\Ind}{{\mathbf{I}}}		
\newcommand{\Vars}{{\mathbf{X}}}		
\newcommand{\Concepts}{{\mathbf{C}}}	
\newcommand{\Roles}{{\mathbf{R}}}		
\newcommand{\ans}[2]{#1^{#2}}
\newcommand{\cert}[2]{#1^{#2}}
\newcommand{\ansy}[3]{[#1, #2]^{#3}}
\newcommand{\eval}{\lambda}
\newcommand{\eqv}{\sim}
\newcommand{\eqc}[1]{\tilde{#1}}
\newcommand{\bmodels}{\models^\textnormal{\texttt{b}}}
\newcommand{\ccl}[3]{\mathsf{ccl}_{#3}[#1,#2]}
\newcommand{\isa}{\sqsubseteq}
\newcommand{\nati}{\mathbb{N}}
\newcommand{\natinfz}{\nati^\infty_0}
\newcommand{\cansymb}{\mathcal{C}}
\newcommand{\can}[1]{\cansymb(#1)}
\newcommand{\pcan}[2]{\cansymb_{#2}(#1)}
\newcommand{\set}[1]{\{#1\}}
\newcommand{\tpl}[1]{\mathbf{#1}}
\newcommand{\emptytpl}{\langle\rangle} 
\newcommand{\dllite}{\textit{DL-Lite}}
\newcommand{\dlliteC}{\textit{DL-Lite}_{\textit{core}}}
\newcommand{\dlliteR}{\textit{DL-Lite}_{\cal R}}
\newcommand{\dlliteCbag}{\textit{DL-Lite}_{\textit{core}}^{\textit{bag}}}
\newcommand{\dlliteRbag}{\textit{DL-Lite}_{\cal R}^{\textit{bag}}}
\newcommand{\bagl}{\{\hspace{-.5ex}|}
\newcommand{\bagr}{|\hspace{-.5ex}\}}
\newcommand{\emptybag}{\emptyset}
\newcommand{\BagCert}{\textsc{BagCert}\xspace}
\newcommand{\balgonee}{\textsc{BALG}^1_\varepsilon}
\newcommand{\np}{\textsc{NP}}
\newcommand{\conp}{\textsc{coNP}}
\newcommand{\aczero}{\textsc{AC}^0}
\newcommand{\logspace}{\textsc{LogSpace}}
\newcommand{\ex}{{\textit{ex}}}
\newcommand{\minus}{\setminus}
\newcommand{\veedot}{%
\mathop{\mathchoice%
  {\mathrel{\ooalign{\hss$\vee$\hss\cr%
  \kern0.43ex\raise0.5ex\hbox{$\cdot$}\raise0.5ex\hbox{}}}}%
  {\mathrel{\ooalign{\hss$\vee$\hss\cr%
  \kern0.43ex\raise0.5ex\hbox{$\cdot$}\raise0.5ex\hbox{}}}}%
  {\mathrel{\ooalign{\hss$\vee$\hss\cr%
  \kern0.43ex\raise0.5ex\hbox{$\cdot$}\raise0.5ex\hbox{}}}}%
  {\mathrel{\ooalign{\hss$\vee$\hss\cr%
  \kern0.43ex\raise0.5ex\hbox{$\cdot$}\raise0.5ex\hbox{}}}}%
}}
\newcommand{\bigveedot}{%
\mathop{\mathchoice%
  {\mathrel{\ooalign{\hss$\bigvee$\hss\cr%
  \kern0.64ex\raise0.5ex\hbox{$\cdot$}\raise-0.5ex\hbox{}}}}%
  {\mathrel{\ooalign{\hss$\bigvee$\hss\cr%
  \kern0.64ex\raise0.5ex\hbox{$\cdot$}\raise-0.5ex\hbox{}}}}%
  {\mathrel{\ooalign{\hss$\bigvee$\hss\cr%
  \kern0.64ex\raise0.5ex\hbox{$\cdot$}\raise-0.5ex\hbox{}}}}%
  {\mathrel{\ooalign{\hss$\bigvee$\hss\cr%
  \kern0.64ex\raise0.5ex\hbox{$\cdot$}\raise-0.5ex\hbox{}}}}%
}}
\newcommand{\pair}[1]{( #1 )}
\newtheorem{theorem}{Theorem}
\newtheorem{definition}[theorem]{Definition}
\newtheorem{proposition}[theorem]{Proposition}  
\newtheorem{lemma}[theorem]{Lemma}
\newtheorem{myexample}[theorem]{Example}
\let\oldmyexample\myexample
\renewcommand{\myexample}{\oldmyexample\normalfont}
\newenvironment{example}{\begin{myexample}}{\hfill$\lozenge$\end{myexample}}
\def\namedlabel#1#2{\begingroup
   \def\@currentlabel{#2}%
   \label{#1}\endgroup
}
\begin{document}
\title{The Bag Semantics of Ontology-Based Data Access%
\thanks{This work was supported by the Royal Society under a University Research
Fellowship, the EPSRC projects ED3 and DBOnto, and the Research Council of
Norway via the Sirius SFI.}}

\author{Charalampos Nikolaou \and
        Egor V.\, Kostylev \and
        George Konstantinidis \and \\
        \textbf{Mark Kaminski} \and
        \textbf{Bernardo Cuenca Grau} \and
        \textbf{Ian Horrocks}\\
Department of Computer Science, University of Oxford, UK}

\maketitle
  
\begin{abstract}
Ontology-based data access (OBDA) is a popular approach
for integrating and 
querying multiple data sources by means of a shared ontology.
The ontology is linked to the  sources
using mappings, which assign views over the data to ontology predicates. 
Motivated by the need for OBDA systems supporting
database-style aggregate queries, we propose a 
bag semantics for  
OBDA, where duplicate tuples in the views defined by the mappings 
are retained, as is the case in standard databases.
We show that bag semantics
makes conjunctive query answering in OBDA $\conp$-hard in data complexity.
To regain  tractability, we consider a rather general class of queries and show 
its 
rewritability to a generalisation of the relational calculus to bags.
\end{abstract} 


\section{Introduction}

Ontology-based data access (OBDA) is an increasingly
popular approach to enable uniform access to 
multiple data sources with diverging schemas \cite{PoggiJDS08}.

In OBDA, an ontology 
provides a unifying conceptual model  for
the data sources together with
domain knowledge.
The ontology is linked to 
each source by  global-as-view (GAV)
mappings \cite{DBLP:conf/pods/Lenzerini02}, which 
assign  views over the data 
to ontology predicates.
Users 
access the data by means of
queries formulated using 
the vocabulary of the ontology; 
query answering amounts to 
computing the certain answers to the query over the union of
ontology and  the materialisation of the views
defined by the mappings. 
The formalism  of choice for representing ontologies in
OBDA is
the description logic
$\dlliteR$ \cite{CalvaneseJAR07}, which underpins OWL 2 QL \cite{OWL2-profiles}. $\dlliteR$ was 
designed to
ensure that 
queries
against the ontology are \emph{first-order rewritable}; that is, they
can be 
reformulated as a set of relational queries
over the sources \cite{CalvaneseJAR07}.

\begin{example}\label{ex:running}
A company stores data about  departments and their
employees in several databases. The 
sales department uses the  schema  
$\mathsf{SalEmployee}(\mathsf{id}, \mathsf{name}, \mathsf{salary}, \mathsf{loc}, \mathsf{mngr})$,
where attributes $\mathsf{id}$, $\mathsf{name}$, 
$\mathsf{salary}$, $\mathsf{loc}$, and $\mathsf{mngr}$ 
stand for employee ID within the department, their name, salary, location, and name of their manager.
In turn, the IT department stores data using the schema
$\mathsf{ITEmployee}(\mathsf{id}, \mathsf{surname}, \mathsf{salary}, \mathsf{city})$, where
managers
are not
specified.
To integrate employee data, the company relies on an  ontology
with TBox $\T_{\ex}$, which
defines
unary predicates such as $\mathsf{SalEmp}$,  $\mathsf{ITEmp}$, and 
$\mathsf{Mngr}$,
and binary predicates
such as  $\mathsf{hasMngr}$ 
relating employees to their managers.
The following mappings determine the extension of the
predicates based
on the data, where each $\mathit{att}_i$ represents the
attributes occurring only in the source:    
\[
\begin{array}{lcl}
\!\!\!\mathsf{SalEmployee}(\mathsf{name}, \mathit{att}_1) & \!\!\!\rightarrow\!\!\! & \mathsf{SalEmp}(\mathsf{name}), \\
\!\!\!\mathsf{SalEmployee}(\mathsf{name}, \mathsf{mngr}, \mathit{att}_2)\! &\!\!\!\rightarrow\!\!\! & \mathsf{hasMngr}(\mathsf{name}, \mathsf{mngr}), \\
\!\!\!\mathsf{SalEmployee}(\mathsf{mngr}, \mathit{att}_3) & \!\!\!\rightarrow\!\!\! & \mathsf{Mngr}(\mathsf{mngr}), \\
\!\!\!\mathsf{ITEmployee}(\mathsf{surname},  \mathit{att}_4) & \!\!\!\rightarrow\!\!\! & \mathsf{ITEmp}(\mathsf{surname}).
\end{array}
\]
TBox $\T_{\ex}$ 
specifies the meaning of its 
vocabulary using inclusions
\begin{inparaenum}[(i)]
\item $\mathsf{SalEmp} \sqsubseteq \mathsf{Emp}$ and
$\mathsf{ITEmp} \sqsubseteq \mathsf{Emp}$, which say that
both sales and IT employees are company employees;
\item $\exists \mathsf{hasMngr}^- \sqsubseteq \mathsf{Mngr}$, specifying the range of the
$\mathsf{hasMngr}$ relation, and
\item $\mathsf{Emp} \sqsubseteq \exists \mathsf{hasMngr}$, requiring that
employees have a (maybe unspecified) manager.
\end{inparaenum}
Such inclusions influence query answering: 
when asking for the names of all company employees,
the system will retrieve
all relevant sales and IT employees; this is achieved via query
rewriting, where the query is reformulated as the union of
queries over the sales and IT  databases.
\end{example}
OBDA has received a great deal of attention in recent years. 
Researchers have studied the limits of 
first-order rewritability in ontology languages \cite{CalvaneseJAR07,DBLP:journals/jair/ArtaleCKZ09}, 
established bounds on the size of rewritings \cite{DBLP:journals/ai/GottlobKKPSZ14,DBLP:conf/csl/KikotKPZ14},
developed optimisation techniques \cite{DBLP:conf/semweb/KontchakovRRXZ14}, and implemented
systems well-suited for real-world applications \cite{DBLP:journals/semweb/CalvaneseCKKLRR17,DBLP:journals/semweb/CalvaneseGLLPRRRS11}.

An important observation about
the conventional
semantics of OBDA  is that it is
set-based: the materialisation of the views
defined by the mappings is formalised as 
a \emph{virtual ABox}  consisting of a set of facts over the 
ontology predicates. This treatment is, however,
in contrast with the semantics of database views, which is based on bags
(multisets) and where duplicate tuples are retained by default.
The distinction between set and bag semantics in databases is 
very significant in practice; in particular, it influences the 
evaluation of aggregate queries, which combine various
aggregation functions such as $\mathsf{Min}$, $\mathsf{Max}$, $\mathsf{Sum}$, $\mathsf{Count}$ or
$\mathsf{Avg}$ with the grouping functionality provided in SQL by the
$\mathsf{GroupBy}$ construct. 
\begin{example}\label{ex:running-cont}
 Consider the query asking for the number of employees named
 Lee. Assume there are two different 
 employees named Lee, which are represented as different tuples in the
 sales database (e.g.,  tuples with the same employee name, but different ID).
 Under the conventional semantics of OBDA, the virtual
 ABox would
 contain a single fact $\mathsf{SalEmp}(\mathit{Lee})$; hence,
 the query would wrongly return one, even under 
 the semantics for counting aggregate queries in
 \cite{CalvaneseONISW08,KostylevJWS15}.
 The correct count can be obtained by considering  the extension of
 $\mathsf{SalEmp}$ as a bag with multiple occurrences of
 $\mathit{Lee}$.
\end{example}


%

\vspace{-0.15cm}
The goal of this paper is to propose and study
a bag semantics for OBDA which is compatible with the
semantics of standard databases and can 
provide a suitable foundation for the future study of aggregate queries. 
We focus on conjunctive query (CQ) answering over $\dlliteR$ ontologies under
bag semantics, and our main contributions are
as follows.
%

\begin{enumerate}[noitemsep,topsep=0pt,leftmargin=*]
\item We propose the ontology language $\dlliteRbag$ and its restriction $\dlliteCbag$, where 
ABoxes consist of a bag of facts, thus providing a 
faithful representation  of the views defined by OBDA mappings. We define 
the semantics
of query answering in this setting 
and show that it is compatible with the conventional set-based
semantics.

\item We show that, in contrast to the set case, 
ontologies may not have a universal model (i.e., a single model over which
all CQs can be correctly evaluated), and 
bag query answering becomes $\conp$-hard in data complexity even if we restrict ourselves to $\dlliteCbag$ ontologies.

\item To regain tractability, we study the class
of \emph{rooted CQs}~\cite{BienvenuKR12}, where each  
connected component of the query graph is required to contain
an individual or an answer variable. 
This is a very general class, which arguably captures
most practical OBDA queries.
We show that rooted CQs over $\dlliteCbag$ ontologies 
not only admit a universal model and enjoy favourable computational properties, but also 
allow for rewritings that can be directly evaluated over the bag
ABox of the ontology.
\end{enumerate}
Proofs of all results are deferred to the appendix.



\section{Preliminaries}
\label{sec:prelim}

\noindent
\textbf{Syntax of Ontologies }
We fix a vocabulary consisting
of countably infinite and pairwise disjoint sets of \emph{individuals} $\Ind$ (i.e., constants), \emph{variables} $\Vars$,
\emph{atomic concepts} $\Concepts$ (unary predicates) and 
\emph{atomic roles} $\Roles$ (binary predicates).
A  \emph{role} is  an atomic role $P \in \Roles$ or its \emph{inverse} $P^-$.
A \emph{concept} is an atomic concept in $\Concepts$ or
an expression $\exists R$, where $R$ is a role.
An \emph{inclusion} is an expression of the form $S_1 \isa S_2$ with $S_1$ and $S_2$ either both concepts or both roles.
A \emph{disjointness axiom} is an expression of the form $\mathsf{Disj}(S_1,S_2)$ with $S_1$ and $S_2$
either both concepts or both roles.
A \emph{concept assertion} is of the form $A(a)$ with $a \in \Ind$ 
and $A \in \Concepts$. A \emph{role assertion} is of the form
$P(a,b)$ with $a,b \in \Ind$  
and $P \in \Roles$.
A  $\dlliteR$ \emph{TBox} is a finite set of inclusions and
disjointness axioms.  An \emph{ABox}
is a finite set of concept and role assertions.
A $\dlliteR$ \emph{ontology} is a pair $\langle \T, \A \rangle$
with $\T$ a $\dlliteR$ TBox and $\A$ an ABox.
The ontology language $\dlliteC$ restricts
$\dlliteR$ by disallowing inclusions and disjointness axioms for roles.

\smallskip
\noindent
\textbf{Semantics of Ontologies } An \emph{interpretation} $\I$ is a pair
$\langle \Delta^{\I}, \cdot^{\I} \rangle$, where
the \emph{domain} $\Delta^{\I}$ is a non-empty set,
and the \emph{interpretation function} $\cdot^{\I}$
maps each $a \in \mathbf{I}$ to $a^{\I} \in \Delta^{\I}$ 
such that $a^\I \neq b^\I$ for all $a, b \in \Ind$,\footnote{We adopt the
unique name assumption for convenience; dropping it does not affect results 
(modulo minor changes of definitions).}
each $A \in \Concepts$ to a subset $A^{\I}$ of
$\Delta^{\I}$
and each $P \in \Roles$ to a subset $P^{\I}$ 
of $\Delta^{\I} \times \Delta^{\I}$.
The
interpretation function extends to concepts and roles as follows:
$(R^-)^{\I} = \{(u,v) \mid (v,u) \in R^{\I}\}$ and
$(\exists R)^{\I} = \{ u \in \Delta^{\I} \mid (u,v) \in R^{\I} \text{ for some } v \in \Delta^{\I}\}$. 

An interpretation $\I$
\emph{satisfies} ABox $\A$ if $a^{\I} \in A^{\I}$ for all 
$A(a) \in \A$ and $(a^{\I}, b^{\I}) \in P^{\I}$
for all 
$P(a,b) \in \A$;
$\I$ \emph{satisfies}  TBox $\T$ if $S_1^{\I} \subseteq S_2^{\I}$ for all  $S_1
\isa S_2$ in $\T$ and $S_1^{\I} \cap S_2^{\I} = \emptyset$ for all 
$\mathsf{Disj}(S_1,S_2)$ in $\T$;
%
$\I$ is a \emph{model} of ontology $\langle \T, \A
\rangle$ 
if it satisfies $\T$ and $\A$. An ontology is \emph{satisfiable} if it has a
model.

\smallskip
\noindent
\textbf{Queries} A \emph{conjunctive query} (\emph{CQ})
$q(\tpl x)$ with \emph{answer} variables $\tpl x$ is a formula $\exists \tpl y. \, \phi(\tpl x, \tpl y)$, where $\tpl x$,
$\tpl y$ are (possibly empty) repetition-free tuples of variables 
and $\phi(\tpl x, \tpl y)$ is a conjunction of atoms of the form $A(t)$, 
$P(t_1, t_2)$ or $z = t$, where  $A \in \Concepts$,
$P \in \Roles$, $z \in \tpl x \cup \tpl y$, and $t, t_1, t_2 \in \tpl x \cup \tpl y \cup \Ind$. 
If $\tpl x$ is inessential, then we write $q$ instead of  $q(\tpl x)$.
If $\tpl x$ is the empty tuple $\emptytpl$, then $q$ is \emph{Boolean}.
 A \emph{union} of CQs (\emph{UCQ}) is a disjunction of CQs with the same answer
variables.

The equality atoms in a CQ $q(\tpl x) = \exists \tpl y. \, \phi(\tpl x, \tpl y)$
yield an equivalence relation $\eqv$ on terms 
$\tpl x \cup \tpl y \cup \Ind$, and we write $\eqc t$ for the equivalence class of
a term $t$.
The \emph{Gaifman graph} of $q(\tpl x)$
has a node $\eqc t$ for each
$t\in\tpl x \cup \tpl y \cup \Ind$ in $\phi$,
and an edge $\set{\eqc t_1,\eqc t_2}$ for each atom in $\phi$ over 
$t_1$ and $t_2$.  We assume that all CQs are \emph{safe}:
for each $z \in \tpl x \cup \tpl y$, the class $\eqc z$ contains a term mentioned in an atom
of $\phi(\tpl x, \tpl y)$ that is not an equality.

The 
\emph{certain answers} 
$\cert{q}{\K}$ 
to a (U)CQ $q(\tpl{x})$ over a $\dlliteR$ ontology $\K$ are the set of all tuples $\tpl{a}$ of individuals such that $q(\tpl a)$ holds in every 
model of $\K$. 
%
A class of queries $\mathcal Q_1$ is \emph{rewritable} to a class 
$\mathcal Q_2$ for an ontology language $\mathcal O$ if for any
$q_1 \in \mathcal Q_1$ and TBox $\T$ in $\mathcal O$, there is
$q_2 \in \mathcal Q_2$ such that, for any ABox $\A$ in $\mathcal O$ with
$\langle \T, \A \rangle$ satisfiable,
$\cert{q_1}{\langle \T, \A \rangle}$ equals the answers to $q_2$ in (the least model of) $\A$.
Checking 
$\tpl a \in \cert{q}{\langle \T, \A \rangle}$ for a tuple $\tpl a$, (U)CQ
$q$, and $\dlliteR$ ontology $\langle \T, \A \rangle$ is an \np-complete problem
with $\aczero$ data complexity (i.e., when $\T$ and $q$ are fixed)
\cite{CalvaneseJAR07}.  The latter follows from the rewritability of UCQs to themselves for $\dlliteR$.

\smallskip
\noindent
\textbf{Bags }
A \emph{bag} over a set $M$
is a function $\Omega: M \to \natinfz$, where $\natinfz$ is the set of 
nonnegative integers and infinity. 
The value $\Omega(c)$ is the \emph{multiplicity} of $c$ in $M$.
A bag $\Omega$ is \emph{finite} if
there are finitely many $c \in M$
with
$\Omega(c)>0$ and there is no $c$ with
$\Omega(c) = \infty$. The \emph{empty bag}~$\emptybag$ over $M$
is the bag such that $\emptybag(c) = 0$ for all $c \in M$.
Given bags $\Omega_1$ and $\Omega_2$ over $M$, 
let
$\Omega_1 \subseteq \Omega_2$ if
$\Omega_1(c) \leq \Omega_2(c)$ for each $c \in M$.

The \emph{intersection} $\cap$, \emph{max union} $\cup$, \emph{arithmetic union} $\uplus$, and \emph{difference} $-$ are the binary operations 
defined for bags $\Omega_1$ and $\Omega_2$ over the same set $M$ as follows: 
for every
  $c \in M$, $ (\Omega_1 \cap \Omega_2)(c) = \min\{\Omega_1(c), \Omega_2(c)\}$, 
  $(\Omega_1 \cup \Omega_2)(c) = \max\{\Omega_1(c), \Omega_2(c)\}$, $(\Omega_1 
  \uplus \Omega_2)(c) = \Omega_1(c) + \Omega_2(c)$, and $(\Omega_1 - 
  \Omega_2)(c) = \max\{0,\Omega_1(c) - \Omega_2(c)\}$; difference is 
  well-defined only when  $\Omega_2$ is finite.

\section{\texorpdfstring{$\dlliteR$}{DL-Lite\_R} with Bag Semantics}
\label{sec:sem}

In this section we present a bag semantics for $\dlliteR$ ontologies, define
the associated query answering problem, and establish its intractability  in data complexity. 


We formalise ABoxes as bags of facts (rather than sets)
in order to faithfully represent 
the materialised views over source data defined by OBDA mappings. 

\begin{definition}
A \emph{bag ABox} is a finite bag 
over the set of concept and role assertions.
A $\dlliteRbag$ \emph{ontology} is a pair $\langle \T, \A \rangle$
of a $\dlliteR$ TBox
$\T$ and a bag ABox $\A$; 
the ontology is 
$\dlliteCbag$ if\/ $\T$ is a $\dlliteC$ TBox.
\end{definition}


The semantics of $\dlliteRbag$ is based on \emph{bag interpretations} $\I$, with
atomic concepts and roles mapped to bags of domain elements and pairs of
elements, respectively, and where the 
interpretation function is extended to complex concepts and roles
in the natural way; in particular, 
a concept $\exists P$ is interpreted
as the bag projection 
of $P^{\I}$ 
to the first component, where
each occurrence of a pair $(u,v)$ in $P^{\I}$ 
contributes to the
multiplicity of domain element $u$ in $(\exists P)^{\I}$.

\begin{definition}
A \emph{bag interpretation} $\I$ is a pair $\langle \Delta^{\I}, \cdot^{\I} \rangle$ 
defined the same as in the set case 
with the exception that $A^{\I}$ and $P^{\I}$ are bags (not sets) over $\Delta^{\I}$ and $\Delta^{\I} \times \Delta^{\I}$, respectively.
The interpretation function extends to concepts and roles as follows:
$(P^-)^{\I}$ maps
each $ (u,v) \in \Delta^{\I} \times \Delta^{\I}$ to
$P^{\I}(v, u)$, and
$(\exists R)^{\I}$ 
maps each $u \in \Delta^{\I}$ to
$\sum_{v \in \Delta^{\I}} R^{\I}( u,v)$.
\end{definition}

The definition of semantics of ontologies is as expected.

\begin{definition}
\label{def:bag-model}
A bag interpretation $\I = \langle \Delta^{\I}, \cdot^{\I} \rangle$
\emph{satisfies} a bag ABox $\A$ if 
$\A(A(a)) \leq A^{\I}(a^\I)$ for each concept assertion $A(a)$ in $\A$
and $\A(P(a,b)) \leq P^{\I}(a^\I, b^\I)$ 
for each role assertion $P(a,b)$.
\emph{Satisfaction} of\/ $\T$ is defined 
as in the set case, except that $\subseteq$ and $\cap$ are applied to bags instead of sets.
Bag interpretation $\I$ 
is a \emph{bag model} of the $\dlliteRbag$ ontology $\langle \T, \A \rangle$, written $\I \bmodels \langle \T, \A \rangle$, 
if it satisfies both $\T$ and $\A$. The ontology is \emph{satisfiable} if it has a bag model.
\end{definition}

\begin{example}
\label{ex_ontology}
Let $\K_{\ex} = \langle \T_{\ex}, \A_{\ex} \rangle$ be a 
$\dlliteRbag$
ontology with $\T_{\ex}$ as 
in Example~\ref{ex:running} and 
$\A_{\ex}$ has
$\mathsf{SalEmp}(\mathit{Lee})$ with multiplicity $3$,
$\mathsf{ITEmp}(\mathit{Lee})$ and
$\mathsf{hasMngr}(\mathit{Lee},\mathit{Hill})$ both with 
multiplicity $2$ (and all other assertions with multiplicity 0).
Let $\I_{\ex}$ be the bag interpretation mapping
individuals to themselves and with the following non-zero values:
\[
\begin{array}{ll}
\mathsf{SalEmp}^{\I_{\ex}}(\mathit{Lee}) = \mathsf{Emp}^{\I_{\ex}}(\mathit{Lee}) = 3, \ \mathsf{ITEmp}^{\I_{\ex}}(\mathit{Lee}) = 2, \\ 
\mathsf{hasMngr}^{\I_{\ex}}(\mathit{Lee}, \mathit{Hill}) = 2, \ \mathsf{hasMngr}^{\I_{\ex}}(\mathit{Lee}, w) = 1, \\
\mathsf{Mngr}^{\I_{\ex}}(\mathit{Hill}) = 2, \ \mathsf{Mngr}^{\I_{\ex}}(w) = 1,
\end{array}
\]
where $w$ is a fresh element.
We can check that $\I_{\ex} \bmodels \K_{\ex}$. 
\end{example}


We now define the notion of query answering under bag semantics.
We first define the answers $q^{\I}$ of a CQ $q(\mathbf{x})$ over a bag interpretation $\I$. 
Intuitively, $q^{\I}$ is a bag of tuples of individuals such that 
each valid embedding $\lambda$ of the body of $q$ into $\I$
contributes separately to the multiplicity of the tuple $\lambda(\mathbf{x})$ in $q^{\I}$; in turn,
the contribution of each specific $\lambda$ is the product of the
multiplicities of the images of the query atoms under $\lambda$. The latter is in accordance with
the interpretation of joins in the bag relational algebra and SQL, where the multiplicity of a tuple in a join
is the product of the 
multiplicities of the joined tuples (e.g., see \cite{DBLP:books/daglib/0020812}).

\begin{definition}\label{def:CQ-eval}
  Let $q(\tpl x) = \exists \tpl y. \, \phi(\tpl x, \tpl y)$ be a CQ.  The
  \emph{bag answers} $\ans{q}{\I}$ to $q$ over a bag interpretation
  $\I = \langle \Delta^{\I}, \cdot^{\I} \rangle$ are defined as the bag over tuples of individuals from
  $\Ind$ 
  of the same size as $\tpl x$ such that, for every such tuple $\tpl a$,
$$
\ans{q}{\I}(\tpl a) \ \ = \ \ \sum\nolimits_{\lambda \in \Lambda} \ \ \prod\nolimits_{S(\tpl t) \textnormal{ in } \phi(\tpl x, \tpl y)} S^\I(\eval(\tpl t)),
$$
where $\Lambda$ is the set of all \emph{valuations} $\eval: \tpl x \cup \tpl y  \cup \Ind \to \Delta^\I$ such that $\eval(\tpl x) = \tpl a^\I$, $\eval(a) = a^\I$ for each $a \in \Ind$, and $\eval(z) = \eval(t)$ for each $z = t$ in $\phi(\tpl x, \tpl y)$.
\end{definition}
%

If $q$ is Boolean then $q^{\I}$ are defined only
for the empty tuple~$\emptytpl$. Also,
conjunction $\phi(\tpl x, \tpl y)$ 
may contain repeated atoms, and hence can be seen as a bag of atoms; while
repeated atoms are redundant in the set case, they are essential in the bag
setting~\cite{VardiPODS93} 
and thus the definition of 
$\ans{q}{\I}(\tpl a)$ 
treats each copy of
a query atom
$S(\tpl t)$ separately.

The following definition of certain answers, capturing open-world query answering, is a reformulation of  
the definition in \cite{KostylevJWS15} for counting queries. It is a natural extension of the set notion to bags:
a query answer is 
certain for a given multiplicity if it occurs with at least that multiplicity in every bag model of the ontology. 

\begin{definition}
  The \emph{bag certain answers} $\cert{q}{\K}$ to a query $q$ over a
  $\dlliteRbag$ ontology $\K$ are the bag 
  $\bigcap\nolimits_{\I \bmodels \K}
  \cert{q}{\I}$. 
\end{definition}

We study the problem $\BagCert[\mathcal Q,\mathcal O]$ 
of checking, given a
query $q$ from a class of CQs $\mathcal Q$, ontology $\K = \langle \T,\A \rangle$ from an ontology language $\mathcal O$,
tuple $\tpl{a}$ over $\Ind$, and number $k \in \natinfz$, whether
$\cert{q}{\K}(\tpl a) \geq k$; \emph{data complexity} of \BagCert is studied
under the assumption that $\T$ and $q$ are fixed.  Following
\cite{GrumbachJCSS96}, we assume that the multiplicities of assertions in $\A$
and $k$ (if not infinity) are given in unary.


\begin{example}
Let $q_{\ex}(x) = \exists y.\, \mathsf{hasMngr}(x,y)$ and $\K_{\ex}$ be as in 
Example~\ref{ex_ontology}. Then $q_\ex^{\K_{\ex}}(\mathit{Lee})=3$. Indeed, on 
the one hand, $q_\ex^{\I_\ex}(\mathit{Lee}) = 3$ for $\I_\ex$ in 
Example~\ref{ex_ontology}. On the other, for any bag model $\I$ of $\K_{\ex}$, 
$q_\ex^\I(\mathit{Lee})=\Sigma_{u\in\Delta^\I}\mathsf{hasMngr}^{\I}(\mathit{Lee}^\I,u)\ge
 3$, because $\A_{\ex}(\mathsf{SalEmp}(\mathit{Lee})) = 3$ and $\T_{\ex}$ 
contains inclusions $\mathsf{SalEmp}\sqsubseteq\mathsf{Emp}$ and 
$\mathsf{Emp}\sqsubseteq\exists\mathsf{hasMngr}$ .
\end{example}



The bag semantics can be seen as a 
generalisation of the
set semantics of \dllite:
first, satisfiability under bag semantics reduces to
the set case; second, certain answers
under bag and set semantics coincide if multiplicities are ignored.
 

\begin{restatable}{proposition}{BackCompat}
  \label{thm:back-compat}
  Let $\langle\T,\A\rangle$ be a $\dlliteR$ ontology and $\langle\T,\A'\rangle$ be
  a $\dlliteRbag$ ontology with the same TBox such that
  $\{S(\tpl t) \mid \A'(S(\tpl t)) \geq 1\} = \A$. Then, the following holds:
\begin{enumerate}[noitemsep,leftmargin=*,topsep=0pt]
\item $\langle\T,\A\rangle$ is satisfiable if and only if $\langle\T,\A'\rangle$ is satisfiable;
\item for each CQ $q$ and tuple $\tpl a$ of individuals from $\Ind$, 
 $\tpl a \in \cert{q}{\langle\T,\A\rangle}$  if and only if
$\cert{q}{\langle\T,\A'\rangle}(\tpl a) \geq 1$. 
\end{enumerate}
\end{restatable}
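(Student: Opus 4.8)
The plan is to prove both directions by relating bag models of $\langle\T,\A'\rangle$ to ordinary (set) models of $\langle\T,\A\rangle$ through a multiplicity-collapsing map. The key observation is that the \emph{support} of a bag $\Omega$ over $M$, namely $\{c \in M \mid \Omega(c) \geq 1\}$, turns every bag interpretation into a set interpretation, and conversely every set interpretation can be viewed as a $0/1$-valued (indeed, $0/\infty$-valued) bag interpretation. So I would set up two constructions: given a bag model $\I$ of $\langle\T,\A'\rangle$, let $\mathsf{supp}(\I)$ be the set interpretation on the same domain with $A^{\mathsf{supp}(\I)} = \{u \mid A^\I(u) \geq 1\}$ and likewise for roles; given a set model $\J$ of $\langle\T,\A\rangle$, let $\J^\infty$ be the bag interpretation assigning multiplicity $\infty$ to every element of the corresponding set and $0$ otherwise.

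For Part~1, I would first check that $\mathsf{supp}(\cdot)$ sends bag models to set models and $(\cdot)^\infty$ sends set models to bag models. The former requires verifying that support commutes with the role operators: since $(\exists R)^\I(u) = \sum_v R^\I(u,v)$, we have $(\exists R)^\I(u) \geq 1$ iff $R^\I(u,v) \geq 1$ for some $v$, which is exactly $u \in (\exists R)^{\mathsf{supp}(\I)}$; inverse roles are immediate. Then a $\T$-inclusion $S_1 \isa S_2$ holds as a bag inclusion ($S_1^\I \subseteq S_2^\I$ pointwise) precisely implies the set inclusion on supports, and a disjointness $\mathsf{Disj}(S_1,S_2)$ as bag intersection being empty implies set-disjointness of supports; the ABox condition transfers because $\A'(S(\tpl t)) \geq 1$ forces $S^\I(\cdots) \geq 1$, i.e.\ membership in $\mathsf{supp}(\I)$, and by hypothesis these are exactly the assertions of $\A$. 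The $\infty$-direction is symmetric and in fact easier, since multiplicity $\infty$ dominates every finite requirement. Satisfiability then follows: a bag model yields a set model via support, and a set model yields a bag model via $(\cdot)^\infty$, so the two ontologies are equisatisfiable.

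For Part~2, I would reduce set certain answers and bag certain answers to the same membership question. In one direction, suppose $\tpl a \in \cert{q}{\langle\T,\A\rangle}$; take any bag model $\I \bmodels \langle\T,\A'\rangle$, form $\mathsf{supp}(\I)$, which is a set model, so $q(\tpl a)$ holds in it classically; a classical embedding $\lambda$ of $q$ into $\mathsf{supp}(\I)$ sends each atom into the support, hence each factor $S^\I(\eval(\tpl t)) \geq 1$, so the product over atoms is $\geq 1$ and therefore $\ans{q}{\I}(\tpl a) \geq 1$. Since $\I$ was arbitrary, $\cert{q}{\langle\T,\A'\rangle}(\tpl a) \geq 1$. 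Conversely, if $\tpl a \notin \cert{q}{\langle\T,\A\rangle}$, there is a set model $\J$ of $\langle\T,\A\rangle$ in which $q(\tpl a)$ fails, i.e.\ no classical embedding exists; then in the bag model $\J^\infty$ the valuation set $\Lambda$ contributing to $\ans{q}{\J^\infty}(\tpl a)$ is empty (any valuation making all multiplicities positive would be a classical embedding into $\J$), so $\ans{q}{\J^\infty}(\tpl a) = 0$, giving $\cert{q}{\langle\T,\A'\rangle}(\tpl a) = 0 < 1$. Taking the contrapositive closes the equivalence.

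I expect the main obstacle to be the careful bookkeeping in Part~2 around the relationship between classical CQ embeddings and the valuations $\eval \in \Lambda$ of Definition~\ref{def:CQ-eval}, particularly handling equality atoms, repeated query atoms, and the safety assumption: I must argue that a valuation yields a nonzero product if and only if it is (the $\eqv$-quotient of) a genuine homomorphism into the support, so that positivity of the bag answer and classical satisfaction of $q(\tpl a)$ coincide exactly. The inclusion/disjointness transfer and the $(\cdot)^\infty$ construction are routine once support is shown to commute with $\exists R$ and $R^-$.
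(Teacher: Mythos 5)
Your proposal is correct and takes essentially the same approach as the paper: the paper's proof uses exactly your two constructions---the support of a bag model as a set model, and the all-$\infty$ bag extension of a set model---to prove Part~1, and then reuses them for Part~2 by observing that a classical embedding into the support forces a bag answer $\geq 1$, while a set countermodel $\J$ yields $\ans{q}{\J^\infty}(\tpl a) = 0$. The only cosmetic difference is that you argue the forward direction of Part~2 directly, whereas the paper phrases both directions of Part~2 as proofs by contradiction.
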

%
An important property of satisfiable $\dlliteR$ ontologies $\K$ is the
existence of so called universal models for CQs, that is, models $\I$ such that the
certain answers to every CQ $q$ over $\K$ can be obtained by evaluating $q$
over $\I$ 
\cite{CalvaneseJAR07}.  This notion extends naturally to bags.

\begin{definition}
A bag model $\I$ of a $\dlliteRbag$ ontology $\K$ is \emph{universal} 
for a class of queries $\mathcal Q$ if $\cert{q}{\K} = \ans{q}{\I}$ for any $q \in \mathcal Q$.
\end{definition}





Unfortunately, in contrast to the set case, even $\dlliteCbag$ ontologies may
not admit a universal bag model for all CQs. 
\begin{restatable}{proposition}{NoUnivModel}
\label{prop:no-univ-model}
There exists a satisfiable $\dlliteCbag$ ontology that has no universal bag 
model for the class of all CQs.
\end{restatable}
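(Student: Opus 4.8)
The plan is to exhibit a single small $\dlliteCbag$ ontology together with two Boolean CQs whose value‑minimising bag models are structurally incompatible, so that no one model can be universal for both. Concretely, I would take the TBox
\[
\T = \{\, A \isa \exists P,\quad \exists P^- \isa A \,\}
\]
(both are \emph{concept} inclusions, hence $\dlliteC$, even though the second mentions the inverse role $P^-$) and the bag ABox $\A$ with $\A(A(a)) = 1$ and every other assertion $0$. First I would observe that $\K = \langle\T,\A\rangle$ is satisfiable by checking that the one‑element self‑loop interpretation $\I_0$ with $\Delta^{\I_0} = \{a\}$, $A^{\I_0}(a)=1$ and $P^{\I_0}(a,a)=1$ is a bag model (both inclusions hold, as $1\le 1$). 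The two queries are $q_1 = \exists x\, \exists y.\, P(x,y)$, whose bag answer $\ans{q_1}{\I}(\emptytpl)=\sum_{u,v}P^\I(u,v)$ is the total $P$‑mass, and $q_2 = \exists x.\, P(x,x)$, whose bag answer $\ans{q_2}{\I}(\emptytpl)=\sum_{u}P^\I(u,u)$ counts self‑loops.

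Next I would pin down the two bag certain answers. For $q_1$: every bag model has $A^\I(a^\I)\ge 1$, so $A\isa\exists P$ forces $\sum_v P^\I(a^\I,v)\ge 1$ and hence total $P$‑mass $\ge 1$; since $\I_0$ attains exactly $1$, we get $\cert{q_1}{\K}(\emptytpl)=1$. For $q_2$: the infinite chain model $a\to w_1\to w_2\to\cdots$ (each $P$‑edge of multiplicity $1$) is a bag model with no self‑loops, giving bag answer $0$ for $q_2$; as multiplicities are non‑negative, $\cert{q_2}{\K}(\emptytpl)=0$.

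Finally, the core step is to show these two demands cannot be met simultaneously. Suppose $\I$ were universal; then $\ans{q_1}{\I}(\emptytpl)=1$ and $\ans{q_2}{\I}(\emptytpl)=0$. The latter gives $P^\I(u,u)=0$ for all $u$, in particular no self‑loop at $a^\I$. The former says the total $P$‑mass is $1$; since $a^\I$ already contributes at least one outgoing $P$‑edge, that single unit must be an edge $a^\I\to v_0$ with $v_0\ne a^\I$ (otherwise it is a self‑loop). But then $(\exists P^-)^\I(v_0)\ge P^\I(a^\I,v_0)=1$, so $\exists P^-\isa A$ gives $A^\I(v_0)\ge 1$, and $A\isa\exists P$ then forces an outgoing $P$‑edge from $v_0$, which is distinct from $a^\I\to v_0$ because its source is $v_0\ne a^\I$. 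Hence the total $P$‑mass is at least $2$, contradicting $\ans{q_1}{\I}(\emptytpl)=1$. I expect the main obstacle to be conceptual rather than computational: one must use that bag certain answers are the pointwise \emph{minimum} over models and then locate two queries whose minima are realised only by incompatible shapes---minimal total mass is achievable solely via a self‑loop, whereas the absence of self‑loops is itself certain---with the cyclic axiom $\exists P^-\isa A$ being exactly what couples the two requirements.
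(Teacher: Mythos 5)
Your proposal is correct, and at the top level it follows the same strategy as the paper's proof: exhibit two Boolean CQs whose bag certain answers are each realised by some bag model but are jointly unrealisable, the obstruction in both cases being that an inclusion of the form $\exists P^- \isa A$ applies to the fresh witness forced by an inclusion of the form $A \isa \exists P$. The concrete constructions are genuinely different, though. The paper uses the ontology $\K_{\textit{r}}$ of Example~\ref{ex:mngr} --- TBox $\{\mathsf{Emp} \isa \exists \mathsf{hasMngr},\ \exists \mathsf{hasMngr}^- \isa \mathsf{Mngr}\}$ with the \emph{two} assertions $\mathsf{Emp}(\mathit{Lee})$ and $\mathsf{Mngr}(\mathit{Hill})$ --- and the queries $q_1 = \mathsf{hasMngr}(\mathit{Lee},\mathit{Hill})$ (a ground atom, certain multiplicity $0$, witnessed by the model with an anonymous manager) and $q_2 = \exists x.\, \mathsf{Mngr}(x)$ (certain multiplicity at most $1$, witnessed by the model in which Hill is the manager); universality fails because avoiding the ground atom forces an anonymous manager, which by $\exists \mathsf{hasMngr}^- \isa \mathsf{Mngr}$ pushes the $\mathsf{Mngr}$-count to at least $2$. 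You instead use a \emph{cyclic} TBox $\{A \isa \exists P,\ \exists P^- \isa A\}$, a single assertion $A(a)$, and two ``global'' queries (total $P$-mass and number of $P$-self-loops), with the self-loop model and the infinite chain as the two witnesses. Both arguments are sound; the paper's buys finite witness models and only atomic/ground-style queries (and doubles as its running example), while yours shows the phenomenon is already forced by a one-individual, one-assertion ABox, i.e., it does not depend on a second ABox individual acting as a decoy witness --- at the price of a repeated-variable atom $P(x,x)$ and an infinite model. Two minor remarks: under the paper's UNA-based semantics every bag interpretation must interpret all countably many individuals of $\Ind$, so $\Delta^{\I_0} = \{a\}$ should formally be padded with isolated points (the paper's own proof is equally informal about this); and the infinite chain is avoidable --- the two-element cycle $\J$ with $P^{\J}(a^{\J},v) = P^{\J}(v,a^{\J}) = 1$ and $A^{\J}(a^{\J}) = A^{\J}(v) = 1$ is also a self-loop-free bag model, which would keep all your witnesses finite.
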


The lack of a universal model suggests that CQ answering under bag semantics is
harder than in the set case. Indeed, this problem is $\conp$-hard in data 
complexity, which is in stark contrast to
the $\aczero$ upper bound in the set case.


\begin{restatable}{theorem}{QAcoNPCore}
  \label{thm:core-conph}
  $\BagCert[\textup{CQs},\dlliteCbag]$ is $\conp$-hard in data complexity.
\end{restatable}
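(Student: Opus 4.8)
The plan is to attack the complement problem. Since $\cert{q}{\K}$ is defined as the bag intersection over all bag models, we have $\cert{q}{\K}(\tpl a) \geq k$ exactly when $\ans{q}{\I}(\tpl a) \geq k$ for every $\I$ with $\I \bmodels \K$; equivalently, $\cert{q}{\K}(\tpl a) < k$ iff there exists a bag model $\I$ with $\ans{q}{\I}(\tpl a) < k$. I would therefore prove $\conp$-hardness by exhibiting a polynomial-time reduction from an $\np$-complete problem to this existential ``small-count model'' question. Concretely, I would reduce from $3$-colourability: from a graph $G$ I build a $\dlliteCbag$ ontology $\K_G = \langle \T, \A_G\rangle$, a \emph{fixed} conjunctive query $q$, and a bound $k$ such that a bag model of $q$-count below $k$ exists iff $G$ is $3$-colourable. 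Because data complexity fixes $\T$ and $q$, the whole of $G$ must live in the bag ABox $\A_G$ (vertices and edges encoded as assertions over a fixed signature, with all multiplicities written in unary), whereas the three colours are three fixed individuals mentioned in $\T$.

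The gadget would exploit the one genuine degree of freedom a bag model has under the unique name assumption: the placement of the anonymous witnesses forced by existential inclusions $A \isa \exists R$. Each vertex individual carries a concept that triggers such a witness, and a model is free either to identify that witness with one of the three colour individuals or to leave it fresh; identification encodes assigning a colour. Concept disjointness axioms $\mathsf{Disj}(\cdot,\cdot)$ (which $\dlliteC$ permits) are used so that identifying the witnesses of two adjacent vertices with the \emph{same} colour is inconsistent, hence only proper colourings arise as models in which every witness is coloured. The fixed query $q$ is then designed so that its bag answer over a model counts the ``uncoloured'' witnesses, and I set $k = 1$: a model of count $<1$ (i.e.\ count $0$) exists iff every vertex can be simultaneously and properly coloured, i.e.\ iff $G$ is $3$-colourable. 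Note that $\K_G$ is always satisfiable, since the all-fresh model witnesses this; I would confirm satisfiability formally via Proposition~\ref{thm:back-compat}, inheriting it from the underlying set ontology.

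The main obstacle is the extreme weakness of $\dlliteC$: it has neither disjunction nor number restrictions, so nothing forces a witness to be coloured, and the all-fresh model is always available. The heart of the argument is therefore to express ``this witness is uncoloured'' as a \emph{positive} conjunctive pattern and to tune the unary multiplicities in $\A_G$ so that each uncoloured witness contributes exactly one to $\ans{q}{\I}(\tpl a)$ while each properly coloured one contributes nothing. Doing this with a single CQ, rather than a UCQ, is the delicate point, since a fresh witness is naturally characterised by a \emph{negative} condition (being equal to none of the colours), which a positive query cannot test directly; I expect to spend most of the effort routing the count through concepts that the colour individuals and the witnesses carry with carefully chosen multiplicities, so that the per-witness contribution is $0$ precisely when the witness has been identified with a colour. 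Once the count of the cheapest models is shown to equal the number of uncoloured vertices in an optimal partial proper colouring, the threshold $k=1$ separates exactly the $3$-colourable instances, and $\conp$-hardness of $\BagCert[\textup{CQs},\dlliteCbag]$ follows.
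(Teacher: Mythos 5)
Your high-level logical framing coincides with the paper's (reduce 3-colourability to the existence of a bag model whose count falls below the threshold), but your gadget breaks at exactly the step you lean on most: no fixed $\dlliteC$ TBox can make ``the witnesses of two adjacent vertices are identified with the same colour'' inconsistent. A concept disjointness axiom $\mathsf{Disj}(A,B)$ is violated only when a \emph{single} domain element lies in both $A$ and $B$; whether two $\mathsf{hasColour}$-edges pointing at a colour individual originate from adjacent or from non-adjacent vertices is invisible to it, since adjacency lives in the role $\mathsf{Edge}$ in the \emph{data}, and detecting it requires the join $\mathsf{Edge}(x,y)\wedge\mathsf{hasColour}(x,z)\wedge\mathsf{hasColour}(y,z)$ --- a pattern that no concept inclusion or disjointness over a fixed signature can express. (This inexpressibility is precisely why set-based CQ answering in $\dlliteC$ stays in $\aczero$; if your TBox encoding worked, it would contradict that upper bound.) Your second ``delicate point'' --- counting uncoloured witnesses with a single positive CQ and threshold $k=1$ --- is acknowledged but never resolved, and it cannot be resolved in the form you state: CQs are monotone, so they cannot award weight to the negative event that a witness coincides with \emph{no} colour individual.

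The paper's proof sidesteps both obstacles: improper colourings are never made inconsistent; instead, every way a model might ``cheat'' is made to \emph{inflate} the query count. The fixed query is a product of two parts, a monochromatic-edge pattern $\mathsf{Edge}(x,y)\wedge\mathsf{hasColour}(x,z)\wedge\mathsf{hasColour}(y,z)$ and a free-floating atom $\mathsf{ACol}(w)$. An auxiliary self-loop vertex $a$ with $\mathsf{hasColour}(a,r)$ forces the edge pattern to match at least once in every model; the ABox gives $\mathsf{ACol}$ total multiplicity $3|V|+1$ over the three colour individuals; and the axiom $\exists\mathsf{hasColour}^-\isa\mathsf{ACol}$ forces any additional colour (in particular any fresh witness) into $\mathsf{ACol}$. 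Consequently, in every model of a non-3-colourable graph either the edge pattern matches at least twice (giving a factor $2$ against $3|V|+1$) or $\mathsf{ACol}$ has cardinality at least $3|V|+2$, so the product is at least $3|V|+2$; conversely, a proper 3-colouring yields a model with count exactly $3|V|+1$, separating the two cases at threshold $k=3|V|+2$ rather than $k=1$. To salvage your write-up you would have to abandon the disjointness/inconsistency mechanism entirely and adopt this multiplicity-overflow arithmetic, which is where the whole difficulty of the theorem is actually absorbed.
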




\section{Universal Models for Rooted Queries}
\label{sec:anch}

Theorem \ref{thm:core-conph} suggests that bag semantics
is generally not well-suited for OBDA. Our approach to overcome
this negative result
is to consider a restricted class of CQs, introduced in the context of query optimisation in 
DLs~\cite{BienvenuKR12}, called \emph{rooted}: 
in a rooted CQ, each existential
variable is connected in the Gaifman graph to an
individual or an answer variable.
Rooted CQs capture most practical queries; for example, they 
include all connected non-Boolean CQs.
\begin{definition}
  A CQ $q(\tpl x)$ is 
  \emph{rooted} if each connected component of
  its Gaifman graph has
  a node with a term in
  $\tpl x \cup \Ind$. 
\end{definition}

In contrast to arbitrary 
CQs, any satisfiable $\dlliteCbag$ ontology admits a universal bag model for
rooted CQs.  Although we define such a model, called \emph{canonical}, in a 
fully
declarative way, it can be intuitively seen as the result of applying a variant
of the restricted chase procedure~\cite{CaliJAIR13} extended to bags.  Starting
from the ABox, the procedure successively ``repairs'' violations of $\T$ by
extending the interpretation of concepts and roles in a minimal way.

To formalise canonical models, we need two auxiliary
notions. First, 
the \emph{concept closure} $\ccl{u}{\I}{\T}$ of an element
$u \in \Delta^\I$ in a bag interpretation $\I = \langle \Delta^{\I}, \cdot^{\I}
\rangle$ over a TBox $\T$ is the bag of concepts such that, for any concept $C$,
$\ccl{u}{\I}{\T}(C)$  is the maximum value of $C_0^\I(u)$ amongst all concepts $C_0$ satisfying  $\T \models C_0 \isa C$.
Second, the \emph{union} $\I \cup \J$ of bag interpretations 
$\I = \langle \Delta^{\I}, \cdot^{\I} \rangle$ and $\J = \langle \Delta^{\J},
\cdot^{\J} \rangle$ with $a^\I = a^\J$ for all $a \in \Ind$ is the
bag interpretation $\langle \Delta^{\I} \cup \Delta^{\J}, \cdot^{\I \cup \J}
\rangle$ with $a^{\I \cup \J} = a^\I$ for $a \in \Ind$ and $S^{\I \cup \J} =
S^\I \cup S^\J$ for $S \in \Concepts \cup \Roles$.

\begin{definition}
\label{def:canon}
The \emph{canonical bag model} $\can{\K}$ of a $\dlliteCbag$ ontology $\K = \langle \T, \A \rangle$ is the bag interpretation $\bigcup_{i \geq 0} \pcan{\K}{i}$ with  the bag interpretations $\pcan{\K}{i} = \langle \Delta^{\pcan{\K}{i}}, \cdot^{\pcan{\K}{i}} \rangle$ defined as follows:
\begin{itemize}[noitemsep,leftmargin=*,topsep=1pt]
\item[-] $\Delta^{\pcan{\K}{0}} = \Ind$, $a^{\pcan{\K}{0}} = a$ for each $a \in \Ind$, and $S^{\pcan{\K}{0}}(\tpl a) = \A(S(\tpl a))$ for each $S\in \Concepts \cup \Roles$ and individuals $\tpl a$;
\item[-] for each $i > 0$, $\Delta^{\pcan{\K}{i}}$ is
$$
  \begin{array}{@{}r}
\Delta^{\pcan{\K}{i-1}} \cup {} \{w^1_{u, R}, \ldots, w^\delta_{u, R} \mid u \in \Delta^{\pcan{\K}{i-1}}, R \textnormal{ a role}, \\ \delta = \ccl{u}{\pcan{\K}{i-1}}{\T}(\exists R) - (\exists R)^{\pcan{\K}{i-1}}(u)\},
\end{array}
$$
where $w^j_{u, R}$ are fresh domain elements, called \emph{anonymous},
$a^{\pcan{\K}{i}} = a$ for all $a \in \Ind$, and, for all $A \in \Concepts$, $P \in \Roles$, and elements $u$, $v$,
%
%
\begin{align*}
A^{\pcan{\K}{i}} (u) &= \begin{cases}
\ccl{u}{\pcan{\K}{i-1}}{\T}(A), ~~\textnormal{if } u \in \Delta^{\pcan{\K}{i-1}}, 
\\ 0, ~~\textnormal{otherwise},\end{cases} \\
P^{\pcan{\K}{i}} (u, v) &= \begin{cases} P^{\pcan{\K}{i-1}} (u, 
  v), ~~\textnormal{if } u, v \in \Delta^{\pcan{\K}{i-1}}, \\ 1,
  ~~\textnormal{if } v = w^j_{u, P} \textnormal{ or } u = w^j_{v, P^-} , \\ 0,
  ~~\textnormal{otherwise}.\end{cases}
\end{align*}
\end{itemize} 

\end{definition}

It is easily seen 
that 
 $\can{\K}$ satisfies 
$\K$ whenever $\K$ is satisfiable.
We next show that it is universal for 
rooted CQs.

\begin{restatable}{theorem}{RootedUniversality}
  \label{thm:chase-univ}
  The canonical bag model $\can\K$ of a satisfiable $\dlliteCbag$ ontology $\K$
  is universal for 
  rooted CQs.
\end{restatable}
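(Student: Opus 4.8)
The plan is to establish the two bag inclusions $\cert{q}{\K} \subseteq \ans{q}{\can{\K}}$ and $\ans{q}{\can{\K}} \subseteq \cert{q}{\K}$ separately. Since the bag certain answers are the pointwise minimum $\cert{q}{\K}(\tpl a) = \min_{\I \bmodels \K} \ans{q}{\I}(\tpl a)$, and since $\can{\K} \bmodels \K$ (noted before the statement), the inclusion $\cert{q}{\K} \subseteq \ans{q}{\can{\K}}$ is immediate: a minimum over models is at most the value at $\can{\K}$. All the work lies in the converse, which reduces to the single inequality
$$
\ans{q}{\can{\K}}(\tpl a) \ \le \ \ans{q}{\I}(\tpl a) \qquad \text{for every } \I \bmodels \K,
$$
every rooted CQ $q(\tpl x)$, and every tuple $\tpl a$.

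First I would record the structural features of $\can{\K}$ that drive the argument. Because $\T$ is a $\dlliteC$ TBox (no role axioms), every edge incident to an anonymous element $w^j_{u,R}$ has multiplicity $1$, each anonymous element has a unique predecessor, and its concept multiplicities are $0/1$, with $\ccl{w^j_{u,R}}{\can{\K}}{\T}(C) = 1$ iff $\T \models \exists R^- \isa C$. Hence the anonymous part of $\can{\K}$ is a forest of trees hanging off the individuals, with no cross-edges. I would also observe that the concept closure is a lower bound met by every model: since $\I$ satisfies $\A$ and $\T$, we get $C^{\I}(a^{\I}) \ge \ccl{a}{\can{\K}}{\T}(C)$ for all individuals $a$ and concepts $C$.

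Next I would build a homomorphism $h\colon \Delta^{\can{\K}} \to \Delta^{\I}$ by induction on the stages $\pcan{\K}{i}$, maintaining the invariant $C^{\can{\K}}(u) \le C^{\I}(h(u))$ for all $u$ and all concepts $C$. Individuals are sent to themselves, where the invariant is the lower-bound observation above. At each stage, the anonymous $R$-successors of a processed element $u$ are distributed among the $R$-successors of $h(u)$ in $\I$ so that the \emph{combined demand} at each target $v'$ — the $\A$-multiplicity of any individual $R$-successor of $u$ over $v'$ plus the number of anonymous successors sent to $v'$ — does not exceed $R^{\I}(h(u),v')$. Such a capacity-respecting assignment exists because the total demand $(\exists R)^{\can{\K}}(u) = \ccl{u}{\can{\K}}{\T}(\exists R)$ is bounded by the total capacity $(\exists R)^{\I}(h(u))$ through the invariant instantiated at $C = \exists R$. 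Whatever target is chosen, the invariant carries over to the new anonymous element (its sole forcing concept $\exists R^-$ holds at the target since $\I \models \T$), and $h$ preserves every role edge of $\can{\K}$: among-individual edges since $\I \models \A$, and tree edges by construction.

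The main obstacle is that $h$ is \emph{not} injective on matches — distinct valuations into $\can{\K}$ can share the same image in $\I$ — so one cannot merely push matches forward. I would resolve this by a fiber-counting argument that is exactly where rootedness is used. The maps $\eval \mapsto h \circ \eval$ partition the valuations of $q$ into $\can{\K}$ by their image $\mu$, so it suffices to prove, for each match $\mu$ of $q$ in $\I$, the fiber bound
$$
\sum\nolimits_{\eval:\, h\circ\eval = \mu} \ \prod\nolimits_{S(\tpl t) \textnormal{ in } \phi} S^{\can{\K}}(\eval(\tpl t)) \ \le \ \prod\nolimits_{S(\tpl t) \textnormal{ in } \phi} S^{\I}(\mu(\tpl t)),
$$
after which summing over all $\mu$ yields the desired inequality. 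Since $q$ is rooted, each connected component of its Gaifman graph contains a term of $\tpl x \cup \Ind$, which every match sends to an individual of $\can{\K}$; I fix a spanning forest oriented outward from these root terms and eliminate query terms leaf-first. For a leaf $t_i$ joined to its parent $t_{p(i)}$ by a role atom, the inner sum over the choices of $\eval(t_i)$ with $h(\eval(t_i)) = \mu(t_i)$ is precisely the combined multiplicity in $\can{\K}$ of the edges from $\eval(t_{p(i)})$ to canonical elements over $\mu(t_i)$, which the capacity condition bounds by $S^{\I}(\mu(t_{p(i)}),\mu(t_i))$ — a factor depending only on $h(\eval(t_{p(i)})) = \mu(t_{p(i)})$, hence uniform over the fiber; the remaining non-tree and concept atoms contribute factors $S^{\can{\K}} \le S^{\I}$ pointwise by the invariant. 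Peeling leaves one at a time accumulates exactly $\prod_S S^{\I}(\mu(\tpl t))$. The delicate points, both resting on the tree shape of the anonymous part and on all multiplicities being positive integers, are the combined individual-plus-anonymous capacity accounting and the uniformity of the leaf bound over the fiber.
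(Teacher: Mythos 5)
Your proposal is correct, and its skeleton coincides with the paper's: the easy inclusion follows since $\can{\K} \bmodels \K$, and the hard inequality $\ans{q}{\can{\K}}(\tpl a) \le \ans{q}{\I}(\tpl a)$ for every bag model $\I$ is obtained from a stage-by-stage homomorphism of $\can{\K}$ into $\I$ whose multiplicity bookkeeping is injective around individuals, with rootedness pushing uniqueness from the individuals into the anonymous forest. The counting machinery, however, is genuinely different. The paper replaces bags by \emph{enumerated} bags (sets of indexed copies of tuples), so that $\ans{q}{\I}(\emptytpl)$ becomes the \emph{cardinality} of the set of enumerated valuations of $q$; its two key lemmas are (i) the existence of an enumerated homomorphism from the enumerated $\can{\K}$ to the enumerated $\I$ that is predicate-injective on individuals, and (ii) a locality lemma stating that two distinct enumerated valuations over the canonical model must already differ on an atom one of whose terms is mapped to an individual, proved by tracing a path from a root and using that each anonymous element has a unique successor per role, of multiplicity $1$. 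Composition with the homomorphism of (i) is then injective on enumerated valuations, which yields the inequality. You instead keep the weighted sum-of-products form: your capacity-respecting homomorphism is exactly the multiplicity-level counterpart of (i) (a capacity-respecting distribution of anonymous successors is the same datum as an injective assignment of indexed copies), and your fiber bound proved by spanning-forest leaf-peeling does the work of (ii), with rootedness entering as the existence of a fixed root per component rather than as path-tracing. What your route buys is avoiding the enumerated-bag formalism altogether; what it costs is heavier bookkeeping inside the sums, and a full write-up would need to spell out a few cases your sketch glosses: the inverse-role direction of the capacity bound (atoms whose leaf occupies the first position, covering the edge from an anonymous element back to its parent, which is the unique such edge since no same-role children are generated there), atoms joining an existential class to a fixed class that is not the root of its tree, and equality atoms (work on Gaifman equivalence classes) --- all benign thanks to the forest shape of the anonymous part, and handled the same way as your non-tree atoms, by pointwise bounding and pulling the factor out of the sum.
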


\begin{example}
\label{ex:mngr}
Consider an ontology $\K_{\textit{r}} = \langle \T_{\textit{r}}, \A_{\textit{r}} \rangle$ with 
$$
\begin{array}{c}
\T_{\textit{r}} =  \{\mathsf{Emp} \sqsubseteq \exists \mathsf{hasMngr}, \exists
\mathsf{hasMngr}^- \sqsubseteq \mathsf{Mngr} \},\\
\A_{\textit{r}}(\mathsf{Emp}(\mathit{Lee})) = \A_{\textit{r}}(\mathsf{Mngr}(\mathit{Hill})) = 1.
\end{array}
$$
The canonical model $\can{\K_{\textit{r}}}$ interprets (all with multiplicity 1) $\mathsf{Emp}$ by $\mathit{Lee}$,
$\mathsf{Mngr}$ by $\mathit{Hill}$ and $w^1_{\mathit{Lee},\mathsf{hasMngr}}$, and
$\mathsf{hasMngr}$ by $(\mathit{Lee}, w^1_{\mathit{Lee},\mathsf{hasMngr}})$.
Note that $\can{\K_{\textit{r}}}$ is not universal for all CQs: for instance, 
$\ans{q_{\textit{nr}}}{\can{\K_{\textit{r}}}}(\emptytpl)=2$ for non-rooted
$q_{\textit{nr}}=\exists y. \, \mathsf{Mngr}(y)$, but 
$\ans{q_{\textit{nr}}}{\I_{\textit{nr}}}(\emptytpl)=1$ for
the model $\I_{\textit{nr}}$ interpreting
$\mathsf{Emp}$ by $\mathit{Lee}$, $\mathsf{hasMngr}$ by 
$(\mathit{Lee},\mathit{Hill})$, and $\mathsf{Mngr}$ by $\mathit{Hill}$.
\end{example}

We conclude this section by showing an important
property of rooted CQs, which justifies their
favourable computational properties.
As in the set case for arbitrary CQs, 
given a satisfiable $\dlliteCbag$ ontology $\K$ and a rooted CQ $q$, 
$\cert{q}{\K}$ can be computed over a small sub-interpretation of
$\can\K$. 

\begin{restatable}{theorem}{QABoundedCan}
\label{thm:qa-bounded-can}
Let $\K$ be a satisfiable $\dlliteCbag$ ontology with
$\can{\K} =
\bigcup_{i \geq 0} \pcan{\K}{i}$ and
$q$ be a rooted CQ having $n$ atoms. 
Then, $\ans{q}{\can{\K}} =
\ans{q}{\pcan{\K}{n}}$.
\end{restatable}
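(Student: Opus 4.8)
The plan is to use the forest-like shape of $\can\K$ together with the rootedness of $q$ to show that no valuation with a nonzero contribution can reach beyond the $n$-th level, and that all multiplicities such a valuation sees are already frozen in $\pcan{\K}{n}$. Call the \emph{depth} of a domain element $u$ the least $i$ with $u \in \Delta^{\pcan{\K}{i}}$, so that individuals have depth $0$ and each anonymous element $w^j_{u,R}$ has depth one greater than $u$. First I would record two structural facts about $\can\K$. \textbf{(Edge locality.)} If $P^{\can\K}(u,v) > 0$, then, reading off the two clauses defining $P^{\pcan{\K}{i}}$, either both $u,v$ are individuals or one is a child of the other; in all cases their depths differ by at most one, and the edge value is \emph{frozen} once both endpoints exist, so $P^{\can\K}(u,v) = P^{\pcan{\K}{m}}(u,v)$ for every $m$ at least the larger depth. \textbf{(Concept stabilisation.)} For an element $u$ of depth $d$ and atomic concept $A$, $A^{\pcan{\K}{i}}(u) = A^{\can\K}(u)$ for all $i \geq d+1$. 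I would prove this by induction on $d$: at stage $d+1$ the whole profile of $u$ (the multiplicities of all atomic concepts and of all $\exists R$ at $u$) becomes the $\T$-closure of its stage-$d$ profile and all children of $u$ are introduced, after which nothing changes because the closure operator sending a profile $p$ to the profile $C \mapsto \max\{\,p(C_0) \mid \T \models C_0 \isa C\,\}$ is \emph{idempotent} — this is immediate from transitivity of $\isa$ and is the one genuinely algebraic point of the argument.

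The heart of the proof is a depth bound coming from rootedness. Fix $\tpl a$ and a valuation $\lambda$ whose contribution $\prod_{S(\tpl t)} S^{\can\K}(\lambda(\tpl t))$ is nonzero; then every role atom of $q$ has positive image, so by edge locality the two endpoints of each role atom map to elements of adjacent depth. Since $q$ is rooted, the Gaifman component of any term $t$ contains an \emph{anchor} node whose term lies in $\tpl x \cup \Ind$, and $\lambda$ sends such a node to an individual, hence to depth $0$ (using $\lambda(\tpl x) = \tpl a^{\can\K} = \tpl a$ and $\lambda(a) = a$). A shortest Gaifman path from the anchor to $\eqc t$ is simple and therefore uses no more edges than there are role atoms in the component; propagating depth one step per edge shows that $\lambda(t)$ sits at depth at most the number of role atoms in its component, which is at most $n$. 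If in addition $t$ occurs in a concept atom, that atom consumes one of the $n$ atoms without being an edge, so the component has at most $n-1$ role atoms and $\lambda(t)$ has depth at most $n-1$.

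It remains to assemble the bag identity $\ans{q}{\can\K}(\tpl a) = \ans{q}{\pcan{\K}{n}}(\tpl a)$ for every $\tpl a$. By the depth bound, each valuation of nonzero contribution lands entirely inside $\Delta^{\pcan{\K}{n}}$ and so is a valuation over $\pcan{\K}{n}$ as well; conversely $\Delta^{\pcan{\K}{n}} \subseteq \Delta^{\can\K}$ makes every valuation over $\pcan{\K}{n}$ one over $\can\K$, and the constraints defining admissible valuations (fixing $\tpl x$, the individuals, and the equalities) coincide. On this common set of valuations the two contributions agree termwise: for a role atom both images have depth at most $n$, so edge-freezing gives the same multiplicity; for a concept atom $A(t)$ the image has depth at most $n-1$, so concept stabilisation gives $A^{\pcan{\K}{n}}(\lambda(t)) = A^{\can\K}(\lambda(t))$. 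A valuation with zero contribution over $\can\K$ also contributes zero over $\pcan{\K}{n}$, since max-union gives $S^{\pcan{\K}{n}} \subseteq S^{\can\K}$ pointwise; and any valuation leaving $\Delta^{\pcan{\K}{n}}$ contributes zero over $\can\K$ by the depth bound. Summing the equal contributions over the (finite set of) relevant valuations yields the claim.

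I expect the main obstacle to be the multiplicity bookkeeping rather than the geometry: proving concept stabilisation at exactly stage $d+1$, and then reconciling the off-by-one between role atoms — tolerant of depth $n$ because edges freeze as soon as both endpoints appear — and concept atoms, which must land at depth at most $n-1$ so that stage $n$ already exhibits their stabilised value. The rootedness-driven depth propagation is the conceptual core, and it is precisely the accounting ``one atom spent per concept constraint'' that makes the single bound $n$ tight.
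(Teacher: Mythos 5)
Your proof is correct and follows essentially the same route as the paper's: rootedness anchors each Gaifman component of $q$ at an element of depth $0$, role atoms with nonzero image connect elements whose creation stages differ by at most one, so a valuation with nonzero contribution never leaves $\pcan{\K}{n}$, where the relevant multiplicities are already frozen. The paper packages this as a proof by contradiction (a chain of $k>n$ atoms whose images are created at successive stages $1,\dots,k$), whereas you argue directly via a depth function and make the freezing facts (edge locality, concept stabilisation via idempotence of the $\T$-closure) explicit lemmas --- the same idea with more careful bookkeeping.
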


\section{Rewritability of Rooted Queries}
\label{sec:q-rw}

Rewritability is key for OBDA, and we next establish to what extent rooted CQs 
over bag semantics are rewritable.

The first idea would be to use the analogy with the set case and rewrite to unions of CQs.
 There are two corresponding operations for bags: max union $\cup$ and 
 arithmetic union $\uplus$. So we may consider \emph{max unions} 
 $q_{\textit{max}} = q_1(\tpl x) \lor \dots \lor q_n(\tpl x)$ or 
 \emph{arithmetic unions} $q_{\textit{ar}} = q_1(\tpl x) \veedot \cdots \veedot 
 q_n(\tpl x)$ of CQs $q_i(\tpl x)$, $1 \leq i \leq n$, with the following 
 semantics, for any interpretation $\I$: $\ans{q_{\textit{max}}}{\I} = 
 \ans{q_1}{\I} \cup \cdots \cup \ans{q_n}{\I}$ and $\ans{q_{\textit{ar}}}{\I} = 
 \ans{q_1}{\I} \uplus \cdots \uplus \ans{q_n}{\I}$, respectively.
Our first result is negative: rewriting to either of these classes is not possible even for $\dlliteCbag$.

\begin{restatable}{proposition}{CQsRewriteUCQs}
\label{prop:no-UCQ-rw}
The class of rooted CQs is rewritable neither to max nor to arithmetic unions 
of CQs for
$\dlliteCbag$.
\end{restatable}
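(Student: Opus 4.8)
The plan is to use the universality of the canonical model for rooted CQs (Theorem~\ref{thm:chase-univ}) to compute the certain-answer function exactly, and then to show that this function lies outside the class of functions expressible by a max or an arithmetic union evaluated over the bag ABox. The starting observation is the following: for a fixed CQ $q'$ and a bag ABox $\A$ regarded as a bag interpretation with domain $\Ind$ (the ``least model'' of $\A$ in the sense of the rewritability definition), and for a fixed ABox \emph{shape} (fixed individuals and fixed set of present assertions), the value $\ans{q'}{\A}(\tpl a)$ is a polynomial with nonnegative integer coefficients in the multiplicities of the assertions, since by Definition~\ref{def:CQ-eval} it is a sum over valuations of products of those multiplicities. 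Consequently, a max-union rewriting evaluates to a pointwise maximum of finitely many such polynomials, and an arithmetic-union rewriting evaluates to a single such polynomial (the special case of one summand). It therefore suffices to exhibit one rooted CQ and one $\dlliteC$ TBox whose certain-answer function admits neither form.

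For the witness I would take the $\dlliteC$ TBox $\T = \{\mathsf{A} \isa \exists R,\ \exists R^- \isa \mathsf{B}\}$, the rooted CQ $q(x) = \exists y.\,(R(x,y) \land \mathsf{B}(y))$, and, for a fixed pair of individuals $a \neq b$, the family of bag ABoxes $\A_{m,n}$ with $\A_{m,n}(\mathsf{A}(a)) = n$, $\A_{m,n}(R(a,b)) = m$, and all other multiplicities $0$; these ontologies are satisfiable since $\T$ has no disjointness axioms. Writing $\K = \langle \T, \A_{m,n}\rangle$, the next step is to compute $\can{\K}$. The element $a$ keeps its real $R$-successor $b$ with edge multiplicity $m$, and, since $\ccl{a}{\pcan{\K}{0}}{\T}(\exists R) = \max(n,m)$ while $(\exists R)^{\pcan{\K}{0}}(a) = m$, it additionally acquires $\max(n-m,0)$ anonymous $R$-successors, each with edge multiplicity $1$. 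The closure also fixes the $\mathsf{B}$-multiplicities: each anonymous successor has exactly one incoming $R$-edge and hence $\mathsf{B}$-multiplicity $1$, whereas $b$ has $m$ incoming $R$-edges and hence $\mathsf{B}$-multiplicity $m$. Evaluating $q$ at $a$ over $\can{\K}$ and invoking Theorem~\ref{thm:chase-univ}, the real successor contributes $R(a,b)\cdot\mathsf{B}(b) = m\cdot m$ and the anonymous successors contribute $\max(n-m,0)$ copies of $1\cdot 1$, so $\cert{q}{\K}(a) = m^2 + \max(n-m,0)$.

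It remains to prove that $f(m,n) = m^2 + \max(n-m,0)$ cannot be written as $\max_{1\le i\le N} p_i(m,n)$ for any finite family of polynomials $p_i$ with nonnegative integer coefficients (the arithmetic-union case being $N=1$). This is the main obstacle, and it is precisely where the nonnegativity of coefficients is indispensable: note that $f = \max(m^2,\, m^2 - m + n)$, so $f$ \emph{is} a maximum of two polynomials, but one of them carries a negative coefficient that no CQ can produce. I would argue by slices in $m$. Fix $m_0 \ge 1$; for all sufficiently large $n$ we have $f(m_0,n) = n + (m_0^2 - m_0)$, which is linear in $n$. Since a maximum of finitely many univariate polynomials is, for large $n$, equal to a single one of them, there is an index $i$ with $p_i(m_0,n) = n + (m_0^2 - m_0)$ as polynomials in $n$. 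By pigeonhole some index $i^\ast$ serves infinitely many values of $m_0$; writing $p_{i^\ast}(m,n) = \sum_k a_k(m)\,n^k$ with each $a_k$ a nonnegative-coefficient polynomial, the constant-in-$n$ coefficient satisfies $a_0(m_0) = m_0^2 - m_0$ for infinitely many $m_0$, whence $a_0(m) = m^2 - m$ identically. But $m^2 - m$ has a negative coefficient, contradicting the nonnegativity of the coefficients of $a_0$. Hence no max- or arithmetic-union rewriting exists, which proves the proposition.
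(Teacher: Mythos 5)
Your proposal is correct, and it takes a genuinely different route from the paper's. You use the same witness TBox $\T=\{A\isa\exists R,\ \exists R^-\isa B\}$ and the same rooted CQ $q(x)=\exists y.(R(x,y)\wedge B(y))$ that the paper uses for the max-union half, and your canonical-model computation $\cert{q}{\langle\T,\A_{m,n}\rangle}(a)=m^2+\max(n-m,0)$ is right. But the arguments then diverge. The paper fixes a \emph{single} ABox (multiplicities $3,2,3$ for $A(a)$, $R(a,b)$, $B(b)$), computes the certain answer $7$, and rules out max unions by a primality/decomposition analysis of how one CQ could produce $7$ over that one instance; for arithmetic unions this analysis breaks down (over a fixed ABox, a sum of answers of \emph{different} CQs can easily realize $7$), so the paper resorts to a separate, more elaborate construction with a duplicated TBox and a two-component query whose answer $7\times 64$ forces a contradiction between the two isomorphic halves of any candidate rewriting. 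You instead exploit the universal quantification over ABoxes in the rewritability definition: letting the data vary over the family $\A_{m,n}$, you observe (correctly --- safety of CQs guarantees finitely many contributing valuations) that any CQ's bag answer over these ABoxes is a fixed polynomial in $(m,n)$ with nonnegative integer coefficients, so a max-union rewriting is a pointwise maximum of finitely many such polynomials and an arithmetic-union rewriting is the one-polynomial special case; your slice-in-$m$, pigeonhole, and coefficient-matching argument showing $m^2+\max(n-m,0)$ has no such representation is sound. What each approach buys: yours is uniform (both union types fall to one argument) and isolates a reusable obstruction (a certain-answer function whose ``negative coefficient'' is not expressible by monotone polynomial combinations), while the paper's buys pointwise strength --- it exhibits one concrete data instance on which every candidate rewriting already fails, rather than showing each candidate fails somewhere in an infinite family. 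Both correctly refute rewritability as defined.
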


Next we show that rooted queries are rewritable to
$\balgonee$-queries: the class directly corresponding to the algebra
$\balgonee$ for bags
\cite{GrumbachSIGACT96,GrumbachJCSS96,DBLP:journals/jcss/LibkinW97}. Since
$\balgonee \subset \logspace$ \cite{GrumbachJCSS96}, where
$\balgonee$ is the complexity class for $\balgonee$ algebra evaluation,
rewritability to $\balgonee$-queries is highly desirable.




Intuitively, in addition to projection $\exists$, join $\wedge$, and unions $\vee$ and $\veedot$,
$\balgonee$ also allows for 
difference $\setminus$. 
Domain-dependent
queries, inexpressible in 
algebraic query languages, are precluded
by restrictions on 
the use of variables.

\begin{definition}\label{def:BALG-queries}
  A $\balgonee$-\emph{query} $q(\tpl x)$ with \emph{answer} variables $\tpl x$ is
  one of the following, where $q_i$ are $\balgonee$-queries:
\begin{itemize}[noitemsep,leftmargin=*,topsep=1pt]
\item[-] $S(\tpl t)$, for $S\in\Concepts\cup\Roles$, $\tpl t$ tuple over $\tpl x\cup\Ind$ mentioning all~$\tpl x$;
\item[-] $q_1(\tpl x_1) \wedge q_2(\tpl x_2)$, for $\tpl x = \tpl x_1 \cup \tpl x_2$;
\item[-] $q_0(\tpl x_0) \wedge (x = t)$, for $x \in \tpl x_0$, $t \in \Vars \cup \Ind$, $\tpl x = \tpl x_0 \cup (\{t\} \setminus \Ind)$;
\item[-] $\exists \tpl y. \, q_0(\tpl x, \tpl y)$; $q_1(\tpl x) \vee q_2(\tpl x)$; $q_1(\tpl x) \veedot q_2(\tpl x)$; $q_1(\tpl x) \minus q_2(\tpl x)$.
\end{itemize}
\end{definition}

The semantics of $\balgonee$-queries is defined
as follows.

\begin{definition}
  The \emph{bag answers} $\ans{q}{\I}$ to a $\balgonee$-query $q(\tpl x)$ over
  a bag interpretation $\I = \langle \Delta^{\I}, \cdot^{\I} \rangle$ is the
  bag of tuples over 
  $\Ind$ of the same size as $\tpl x$ inductively defined as follows, for each tuple
  $\tpl a$ and the corresponding mapping $\eval$ such that
  $\eval(\tpl x)=\tpl a^\I$ and $\eval(a) = a^\I$ for all $a \in \Ind$:
  \begin{itemize}[noitemsep,leftmargin=*,topsep=1pt]
  \item[-] $S^\I(\eval(\tpl t))$, if $q(\tpl x) = S(\tpl t)$;
  \item[-] $\ans{q_1}{\I}(\eval(\tpl x_1)) \times \ans{q_2}{\I}(\eval(\tpl x_2))$, if
    $q(\tpl x) = q_1(\tpl x_1) \wedge q_2(\tpl x_2)$;
  \item[-] $\ans{q_0}{\I}(\eval(\tpl x_0))$, if $q(\tpl x) = q_0(\tpl x_0) \wedge (x = t)$ and $\eval(x) = \eval(t)$;
  \item[-] $0$, if $q(\tpl x) = q_0(\tpl x_0) \wedge (x = t)$ and $\eval(x) \neq \eval(t)$;
    \item[-]  $\sum\nolimits_{\eval': \, \tpl y \to \Delta^\I} \ans{q_0}{\I}(\tpl a^\I, \eval'(\tpl y))$, if $q(\tpl x) = \exists \tpl y. \, q_0(\tpl x, \tpl y)$;
  \item[-] $(\ans{q_1}{\I} \,\mathtt{op}\, \ans{q_2}{\I})(\tpl a^\I)$ if $q(\tpl x) = q_1(\tpl x) \,\mathtt{op}'\,
    q_2(\tpl x)$, where $\mathtt{op}$ is $\cup$, $\uplus$, or $-$ and $\mathtt{op}'$ is $\vee$, $\veedot$, or $\minus$, respectively.
   \end{itemize}
\end{definition}

The data complexity of $\balgonee$-query evaluation
is obtained by showing that
$\balgonee$-queries can be be mapped to the 
$\balgonee$ algebra of \cite{GrumbachJCSS96}. 

\begin{restatable}{proposition}{BALGComplete}
\label{prop:balg-complete}
  Given a fixed $\balgonee$-query $q(\tpl x)$, the problem of checking whether $\cert{q}{\can{\langle \emptyset, \A\rangle}}(\tpl a) \geq k$ for a bag ABox $\A$, tuple
  $\tpl a$, and $k\in\natinfz$ is
  $\aczero$ reducible to
  $\balgonee$.
\end{restatable}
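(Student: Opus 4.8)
The plan is to exploit that an empty TBox produces a canonical model with no anonymous part, so that evaluating a $\balgonee$-query over $\can{\langle\emptyset,\A\rangle}$ coincides with evaluating one fixed expression of the Grumbach--Milo bag algebra $\balgonee$ \cite{GrumbachJCSS96} directly over the finite bag relations encoding $\A$. The reduction then consists of a semantics-preserving translation of the (fixed) query together with an $\aczero$ encoding of $\A$ and of the parameters $\tpl a$ and $k$ as input relations. Throughout, recall that $\cert{q}{\can{\langle\emptyset,\A\rangle}}$ is by definition the bag answers $\ans{q}{\can{\langle\emptyset,\A\rangle}}$ over the single interpretation $\can{\langle\emptyset,\A\rangle}$.

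First I would show $\can{\langle\emptyset,\A\rangle} = \pcan{\langle\emptyset,\A\rangle}{0}$. For an empty TBox, $\emptyset \models C_0 \isa C$ holds only when $C_0$ is (equivalent to) $C$, so $\ccl{u}{\pcan{\K}{i-1}}{\emptyset}(\exists R) = (\exists R)^{\pcan{\K}{i-1}}(u)$, and hence the number $\delta$ of fresh elements added at every step $i>0$ is $0$; likewise $A^{\pcan{\K}{i}}(u) = A^{\pcan{\K}{i-1}}(u)$. Thus $\can{\langle\emptyset,\A\rangle}$ has domain $\Ind$, maps each individual to itself, and satisfies $S^{\can{\langle\emptyset,\A\rangle}}(\tpl a) = \A(S(\tpl a))$ for all $S \in \Concepts \cup \Roles$. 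Each symbol is therefore interpreted by a finite bag supported on the active domain of $\A$; so, although the domain is infinite, only finitely many tuples carry nonzero multiplicity, and the existential case of the $\balgonee$-query semantics --- a sum over all maps $\eval'$ from $\tpl y$ into the domain --- is a finite sum over the active domain. This exhibits $\can{\langle\emptyset,\A\rangle}$ as a finite bag database on which the algebra operates.

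Next I would define, by structural induction on the fixed query $q$ (Definition~\ref{def:BALG-queries}), a $\balgonee$ algebra expression $\tau(q)$ over base relations $\{S \mid S \in \Concepts \cup \Roles\}$ and prove that $\ans{q}{\can{\langle\emptyset,\A\rangle}}$ equals the evaluation of $\tau(q)$ on those relations. The cases match operator by operator. An atom $S(\tpl t)$ becomes the base relation $S$ with a selection enforcing the constants and the repeated positions of $\tpl t$, followed by a renaming/projection onto $\tpl x$; since $\tpl t$ mentions all of $\tpl x$, this projection is injective on the selected tuples and so preserves multiplicities. A join $q_1(\tpl x_1) \wedge q_2(\tpl x_2)$ becomes the bag product followed by a selection equating shared variables and a projection onto $\tpl x = \tpl x_1 \cup \tpl x_2$, whose multiplicative behaviour matches the product in the join clause of the semantics. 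An equality $q_0 \wedge (x=t)$ becomes a selection (plus a column copy when $t$ is a fresh variable). The existential $\exists\tpl y.\,q_0$ becomes additive bag projection, which sums multiplicities exactly as required. Arithmetic union $\veedot$ and difference $\minus$ map to $\uplus$ and $-$; and max union $\vee$ is expressed inside the algebra as $q_1 \uplus (q_2 - q_1)$, using $\max(m_1,m_2) = m_1 + \max(0,m_2-m_1)$, so that no duplicate-elimination operator is needed. Each step respects the schema-compatibility conditions of $\balgonee$, so $\tau(q)$ is a legal expression, and, $q$ being fixed, so is $\tau(q)$.

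Finally I would assemble the reduction. On input $(\A,\tpl a,k)$ it outputs the bag database whose relations $S$ are populated directly from the unary multiplicities $\A(S(\cdot))$, together with a singleton relation $R_{\tpl a}$ of multiplicity one on $\tpl a$ and a relation encoding the threshold $k$; this transformation is local and first-order over the unary input, hence $\aczero$. The fixed target expression is the join of $\tau(q)$ with $R_{\tpl a}$, which isolates the single multiplicity $\ans{q}{\can{\langle\emptyset,\A\rangle}}(\tpl a)$, and the test ``$\geq k$'' is realised by the threshold form of the $\balgonee$ evaluation problem (equivalently, by subtracting the $k$-relation with $-$ and testing nonemptiness). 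I expect the main obstacle to be the atom and join cases: arranging the selections, renamings, and projections so that they reproduce the exact multiplicative bag semantics of $\balgonee$-queries while staying within the syntactic discipline of the Grumbach--Milo algebra --- in particular avoiding any implicit duplicate elimination, which is precisely why the $q_1 \uplus (q_2 - q_1)$ encoding of max union is required --- together with checking that the gadget relations for $\tpl a$ and $k$ keep the whole data map in $\aczero$.
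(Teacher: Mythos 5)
Your proposal follows essentially the same route as the paper's own proof: your structural translation $\tau$ into the Grumbach--Milo algebra is exactly the paper's Proposition~\ref{prop:calc-to-alg}, your observation that $\can{\langle\emptyset,\A\rangle}=\pcan{\langle\emptyset,\A\rangle}{0}$ is a finite bag database (made explicit, where the paper leaves it implicit) underlies the construction in the same way, and you encode the threshold by a fresh relation of multiplicity $k$ combined with bag difference, which is the paper's central trick.

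However, the final step as you state it would not go through, and it is precisely the point where the paper is careful. The problem defining the complexity class $\balgonee$ is an \emph{exact} multiplicity test: given $I$, a tuple $\tpl b$, a number $n$, and a fixed expression $E$, decide whether the multiplicity of $\tpl b$ in $E(I)$ equals $n$. So there is no ``threshold form'' to appeal to, and your fallback has the wrong shape: joining $\tau(q)$ with $R_{\tpl a}$ and subtracting the relation of multiplicity $k$ yields a nonempty result iff $\ans{q}{\can{\langle\emptyset,\A\rangle}}(\tpl a)\ge k+1$ (off by one), and ``nonemptiness'' is itself a threshold condition, not an exact one, so it is again not an instance of the defining problem (reading the subtraction in the other order instead complements the answer, which a many-one reduction cannot undo). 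The paper resolves this by subtracting in the opposite direction: it takes $E$ to be the expression for $S(\tpl x)\minus q(\tpl x)$, where $S$ is a fresh predicate interpreted as $\tpl a$ with multiplicity $k$, and asks whether the multiplicity of $\tpl a$ in $E(I_\I)$ is \emph{exactly} $0$ --- literally an instance of the defining problem, with no auxiliary join needed since $\tpl a$ is already a parameter of that problem. (Your direction could be salvaged using multiplicity $k-1$, duplicate elimination $\varepsilon$, and an exact-$1$ test, but the emptiness direction needs no repair at all.) A second omission: $k=\infty$ must be dispatched separately, since no finite bag relation encodes that threshold; the paper notes that the answer is then always negative because $\A$ is finite. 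Finally, whether max union is primitive or encoded as $E_1\uplus(E_2-E_1)$ is immaterial --- the paper's translation simply uses the algebra operation $\cup$.
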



Our rewriting algorithm is inspired by the algorithm in
\cite{DBLP:conf/kr/KikotKZ12} for the set case of $\dlliteR$.
Before going into details, we provide a high-level description.

The key observation is that the set of valuations of a CQ $q(\tpl x) = \exists \tpl y. \, \phi(\tpl x, \tpl y)$ over the bag canonical 
model $\can{\K}$ can be partitioned into subsets, each of which 
is characterised by variables $\tpl z \subseteq \tpl y$ that are 
sent to anonymous elements of $\can{\K}$. Hence, we can rewrite $q(\tpl x)$ for
each of these subsets separately and then take an arithmetic union of the resulting queries, provided these queries are guaranteed to give the same answers as the corresponding subsets of valuations.

Our rewriting proceeds along the following steps.

\noindent \emph{Step 1.} First, each $\tpl z$ is checked 
for \emph{realisability}, that is, whether the subquery induced by $\tpl z$ can
indeed be folded into the anonymous forest-shaped part of $\can{\K}$.  This can
be done without the ABox, looking only at the atoms of $q$ that link $\tpl z$
to other terms of $q$ (these linking atoms exist because $q$ is
rooted). Non-realisable $\tpl z$ can be disregarded.

\noindent \emph{Step 2.} For every realisable $\tpl z$, CQ $q(\tpl x)$ is 
replaced (for this $\tpl z$ in the arithmetic union) by a CQ $q_{\tpl z}(\tpl x)$ obtained 
from $q$ by replacing each maximal connected component of the subquery induced by $\tpl z$ by just one linking atom. 
This transformation is equivalence-preserving, because the 
anonymous part of $\can{\K}$ does not involve multiplicities
other than 0 and 1.

\noindent \emph{Step 3.} Finally, each resulting $q_{\tpl z}(\tpl x)$ is 
rewritten to a $\balgonee$-query $\bar q_{\tpl z}(\tpl x)$ by ``chasing back'' 
each unary atom and each binary atom mentioning a variable in $\tpl z$ with the 
TBox; for the binary atoms it is also guaranteed, by means of difference, that 
the variable in $\tpl z$ is indeed mapped to the anonymous part, thus avoiding 
double-counting in the arithmetic union.


For the rest of this section, let us fix a rooted CQ $q(\tpl x) = \exists
\tpl y.
\, \phi(\tpl x, \tpl y)$ and a $\dlliteCbag$ TBox $\T$. We start by formalising
Step 1.

\begin{definition}
\label{def:eval-nulls}
  Given an ontology $\K$ with a TBox $\T$ and
  variables $\tpl z \subseteq \tpl y$, let $\ansy{q}{\tpl z}{\can{\K}}$ be the bag of tuples over $\Ind$ such that, for each tuple $\tpl a$ of individuals,
  \[
    \ansy{q}{\tpl z}{\can{\K}}(\tpl a) \ = \ \sum\nolimits_{\eval \in \Lambda_{\tpl z}} \ \prod\nolimits_{S(\tpl t) \textnormal{ in } \phi(\tpl x, \tpl y)}
    S^{\can{\K}}(\eval(\tpl t)),
  \]
  where $\Lambda_{\tpl z}$ is the set of valuations
  $\eval: \tpl x \cup \tpl y \cup \Ind \to \Delta^{\can{\K}}$ such that
  $\eval(\tpl x) = \tpl a$, $\eval(a) = a$ for each $a \in \Ind$, 
  $\eval(x) = \eval(t)$ for each $x = t$ in $\phi(\tpl x, \tpl y)$, $\eval(z)$ 
  is an anonymous element for each $z \in \tpl z$, and $\eval(y) \in \Ind$ for each $y \in 
  \tpl y \setminus \tpl z$.
 \end{definition}

Hence, the bag answers to $q$ can be partitioned as follows:
\begin{equation}
\label{partition_eq}
\ans{q}{\can{\K}} \ = \ \ \biguplus\nolimits_{\tpl z \subseteq \tpl y} \ansy{q}{\tpl 
z}{\can{\K}}.
\end{equation}

Variables $\tpl z \subseteq \tpl y$ are \emph{equality-consistent} if $\phi(\tpl x, \tpl y)$
has no equality 
$z = t$ 
with $z \in \tpl z$ and $t \notin \tpl z$.
If $\tpl z$ is not equality-consistent, then $\ansy{q}{\tpl z}{\can{\K}} 
= \emptybag$ and these $\tpl z$ can be disregarded in~\eqref{partition_eq}. Next, we show which other $\tpl z$ can be 
ignored.
\begin{definition}
\label{def:ma-connected}
Given equality-consistent $\tpl z \subseteq \tpl y$,
variables 
$\tpl z' \subseteq \tpl z$ are \emph{maximally connected in the anonymous 
part} (\emph{ma-connected}) if $\eqc{z} \subseteq \tpl z'$ for the equivalence class $\eqc{z}$ of any $z \in \tpl z'$ and  
the equivalence classes $\eqc{\tpl z}'$ are a maximal subset of 
$\eqc{\tpl z}$ connected in the Gaifman graph of $q$ via nodes in~$\eqc{\tpl z}$. 
\end{definition}

\newcommand{\fvzp}{z''}

Next we introduce several notations for ma-connected $\tpl z' \subseteq \tpl z$ with equality-consistent $\tpl z \subseteq \tpl y$. First, let $\phi_{\tpl z'}$ be the sub-conjunction of $\phi(\tpl 
x, \tpl y)$ that consists of all atoms mentioning at least one 
variable in~$\tpl z'$ (these sub-conjunctions are disjoint for different $\tpl z'$).
Second, since $q$ is rooted,
$\phi_{\tpl z'}$ contains an atom $\alpha_{\tpl z'}$ of the form 
$P(t, z)$ or $P(z, t)$
with $z \in \tpl z'$ and 
$t \notin \tpl z$
(note that this definition may be non-deterministic).
Third, 
let
\[
q^a_{\tpl z'}() =
       \exists \tpl x'.\, \exists \tpl z'. \;
       \phi_{\tpl z'} \land \; 
       \mathsmaller{\bigwedge}\nolimits_{t \in \tpl t_{\tpl z'}} (t = a) \; \land \; 
       \mathsmaller{\bigwedge}\nolimits_{z \in \tpl z'} (z \neq a),
\]
where $\tpl t_{\tpl z'}$ are all such terms $t$, $a$ is an individual in $\tpl 
t_{\tpl z'}$ if it 
exists or a fresh individual otherwise, and $\tpl x' = \tpl t_{\tpl z'} \cap \Vars$,
(this definition may also be non-deterministic because of $a$).
Notice that $q^a_{\tpl z'}$ is a Boolean CQ with possible equalities of individuals and inequalities, and we can define
the bag answers of such a query $q'$
over a bag interpretation $\I$ in the same way as for usual CQs in Definition~\ref{def:CQ-eval} 
with the extra requirement that each contributing valuation $\lambda$ 
should satisfy
$\lambda(x) \neq \lambda(t)$ for each 
inequality $x \not= t$ of $q'$ (and equalities of individuals are handled as usual equalities).



\begin{definition}
\label{def:realisable-ma-connected}
Given equality-consistent variables $\tpl z \subseteq \tpl y$,
ma-connected $\tpl z' \subseteq \tpl z$ are \emph{realisable} by TBox $\T$ if 
$$
(q^a_{\tpl z'})^{\can{\langle \T, \A' \rangle}}(\emptytpl) \geq 1,
$$
where, for a fresh individual $b$, 
$\A'$ is the bag ABox having either only the assertion $P(a, b)$ (with multiplicity 1), when $\alpha_{\tpl z'} = P(t, z)$, or only
$P(b, a)$, when $\alpha_{\tpl z'} = P(z, t)$.
\end{definition}

This definition does not depend on the choice of $\alpha_{\tpl z'}$ and $a$.
Indeed, if there are two atoms $P_1(t_1, z_1)$ and $P_2(t_2, z_2)$ satisfying the
definition of $\alpha_{\tpl z'}$, then either $P_1 = P_2$ and both pairs
$(t_1, z_1)$ and $(t_2, z_2)$ are mapped by a valuation of $q^a_{\tpl z'}$ to
the same tuple, or $\tpl z'$ are not realisable regardless of the
choice of $\alpha_{\tpl z'}$.
Similarly, if $\tpl t_{\tpl z'}$ contains two individuals $a$, $a'$, then 
$q^a_{\tpl z'}$ has the equality $a = a'$, 
and hence $\tpl z'$ are not realisable regardless of this 
choice.

Intuitively, $\tpl z'$ are realisable if their corresponding subquery $q^a_{\tpl z'}$ is satisfied by 
the tree-shaped model induced by the TBox from a connection
$\alpha_{\tpl z'}$ of $\tpl z'$ and the rest of the query. 
This definition does not essentially involve multiplicities, because all tuples of anonymous
elements in the canonical model have multiplicity at most 1, and, hence, if $q^a_{\tpl z'}$ matches a part of the canonical model, it does so in a unique way.
Thus, checking realisability is decidable using
standard set-based techniques.

\begin{definition}
\label{def:realisability}
Variables\/ $\tpl z \subseteq \tpl y$ are
\emph{realisable} by TBox $\T$ if
they are equality-consistent and each non-empty ma-connected subset of\/ $\tpl z$ 
is realisable by $\T$.
\end{definition}

We proceed to Step 2. For realisable $\tpl z \subseteq \tpl y$, let 
$q_{\tpl z}(\tpl x)$ be the 
CQ $\exists \tpl y'. \, \phi_{\tpl z}(\tpl x, \tpl y')$ such that $\phi_{\tpl z}(\tpl x, \tpl y')$ is obtained from $\phi(\tpl x, 
\tpl y)$ by replacing $\phi_{\tpl z'}$, for each ma-connected $\tpl z'\subseteq \tpl 
z$, with
$$
\alpha_{\tpl z'} \;\; \land \;\;\; \mathsmaller{\bigwedge}\nolimits_{y \in \tpl t_{\tpl z'} \cap \Vars, \, t \in \tpl t_{\tpl z'}} (y = t),
$$
where $\tpl t_{\tpl z'}$ is as in $q^a_{\tpl z'}$, and $\tpl y'$ is the subset of $\tpl y$ remaining in $\phi_{\tpl z}$.
In other words, $q_{\tpl z}$ contains, for each $\tpl z'$,  
just one atom $\alpha_{\tpl z'}$ and equalities identifying $t_{\tpl z'}$ instead of 
conjunction $\phi_{\tpl z'}$ in~$q$.




The following lemma justifies Steps 1 and 2. It says that in partitioning \eqref{partition_eq} we 
only need to iterate over tuples $\tpl z$ that are realisable by $\T$
and can also replace $q$ with $q_{\tpl z}$ for each $\tpl z$.
\begin{restatable}{lemma}{BALGRewriteLemma}
\label{lemma:rw-realisable}
For any ontology $\K$ with TBox $\T$ and $\tpl z 
\subseteq \tpl y$ with $q_{\tpl z}(\tpl x) = \exists \tpl y'. \, \phi_{\tpl z}(\tpl x, \tpl y')$,
\begin{enumerate}[noitemsep,leftmargin=*,topsep=1pt]
\item if $\tpl z$ is realisable by $\T$ then $\ansy{q}{\tpl z}{\can{\K}} = 
\ansy{q_{\tpl z}}{\tpl z\cap\tpl y'}{\can{\K}}$;
\item if $\tpl z$ is not realisable by $\T$ then $\ansy{q}{\tpl z}{\can{\K}} = 
\emptybag$.
\end{enumerate}
\end{restatable}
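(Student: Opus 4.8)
The plan is to analyse directly the valuations counted by $\ansy{q}{\tpl z}{\can{\K}}$ and compare them with those counted by $\ansy{q_{\tpl z}}{\tpl z\cap\tpl y'}{\can{\K}}$, exploiting the forest-shaped structure of the anonymous part of $\can\K$ and the fact that all anonymous tuples have multiplicity at most $1$. First I would fix an ma-connected decomposition of $\tpl z$ into components $\tpl z'$, with their disjoint sub-conjunctions $\phi_{\tpl z'}$ and distinguished linking atoms $\alpha_{\tpl z'}$. A valuation $\eval\in\Lambda_{\tpl z}$ sends every variable of $\tpl z$ into anonymous elements, and I would observe that, because of the tree structure produced by Definition~\ref{def:canon}, each connected anonymous component of the image of $\eval$ must be rooted at a single ``entry'' element reached through exactly one linking atom $\alpha_{\tpl z'}$; the non-$\tpl z$ terms $t\in\tpl t_{\tpl z'}$ adjacent to that component are all forced to collapse to the same named element (the parent of the entry node), which is precisely what the added equalities $\bigwedge(y=t)$ in $q_{\tpl z}$ record.

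For part~(2), I would argue contrapositively: if some ma-connected $\tpl z'$ is not realisable, then $\phi_{\tpl z'}$ cannot be embedded into the anonymous forest rooted at any single connection, so $\Lambda_{\tpl z}$ is empty and hence $\ansy{q}{\tpl z}{\can{\K}}=\emptybag$. The connection between ``embeddable into the forest'' and the realisability test of Definition~\ref{def:realisable-ma-connected} is the key point: since realisability is checked over $\can{\langle\T,\A'\rangle}$ for a minimal ABox $\A'$ consisting of the single assertion $P(a,b)$, and the anonymous part of $\can\K$ is generated purely by $\T$ independently of multiplicities, a match of $\phi_{\tpl z'}$ in the genuine canonical model exists if and only if $(q^a_{\tpl z'})^{\can{\langle\T,\A'\rangle}}(\emptytpl)\ge 1$. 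The inequalities $z\neq a$ in $q^a_{\tpl z'}$ enforce that the match really descends into the anonymous subtree rather than staying at the named root, matching the requirement $\eval(z)$ anonymous in $\Lambda_{\tpl z}$.

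For part~(1), assuming realisability, I would set up a multiplicity-preserving bijection between $\Lambda_{\tpl z}$ for $q$ and $\Lambda_{\tpl z\cap\tpl y'}$ for $q_{\tpl z}$. Given $\eval\in\Lambda_{\tpl z}$, its restriction to the variables surviving in $\phi_{\tpl z}$ is a valuation for $q_{\tpl z}$; conversely, any valuation of $q_{\tpl z}$ sending the remaining $\tpl z$-variables to anonymous elements extends uniquely back to a full $\eval$ for $q$, because within each anonymous subtree the embedding of $\phi_{\tpl z'}$ is determined uniquely once its entry atom $\alpha_{\tpl z'}$ is fixed (uniqueness of folding into a tree). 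I would then check that the product of atom multiplicities agrees on both sides: the atoms of $\phi_{\tpl z'}$ that are dropped in $q_{\tpl z}$ all lie in the anonymous part and therefore contribute factor $1$, so removing them and keeping only $\alpha_{\tpl z'}$ (also multiplicity $1$ on its anonymous endpoint) leaves the product unchanged. I expect the main obstacle to be the bookkeeping of this bijection in the presence of equalities and of multiple ma-connected components sharing named ``parent'' nodes: I must verify that the substituted equalities $\bigwedge_{y,t}(y=t)$ correctly identify exactly the terms that $\eval$ forces together, that different components indeed fold into disjoint subtrees (so the counts multiply independently), and that the uniqueness-of-folding claim genuinely holds for arbitrary $\phi_{\tpl z'}$, which is where the tree-shapedness of $\can\K$ and the rootedness of $q$ must be used most carefully.
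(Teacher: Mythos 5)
Your proposal follows essentially the same route as the paper's proof: your observation that the non-$\tpl z$ terms adjacent to an ma-connected component are all forced to collapse to one named parent is the paper's Proposition~\ref{prop:anch-entry-points}, your uniqueness-of-folding plus factor-$1$ multiplicity argument for the bijection is its Proposition~\ref{prop:unique-match}, and your bridge between realisability over the one-assertion ABox $\A'$ and embeddability into $\can{\K}$ (via the isomorphic copy of $\can{\langle\T,\A'\rangle}$ inside $\can{\K}$) is exactly how the paper argues both parts. The only detail to add is the trivial subcase of part~2 where $\tpl z$ fails to be equality-consistent, which is dispatched in one line because no valuation in $\Lambda_{\tpl z}$ can satisfy an equality $z=t$ with $\eval(z)$ anonymous and $\eval(t)$ an individual.
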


For Step 3, it suffices to rewrite each CQ $q_{\tpl z}(\tpl x) = 
\exists \tpl y'\!. \, \phi_{\tpl z}(\tpl x, \tpl y')$ to a $\balgonee$-query $\bar 
q_{\tpl z}(\tpl x)  = \exists \tpl y_{\tpl z}. \, \psi_{\tpl z}(\tpl x, \tpl y_{\tpl z})$, for $\tpl y_{\tpl z} = \tpl y' \setminus \tpl z$, which 
is guaranteed to give $\ansy{q_{\tpl z}}{\tpl z\cap\tpl y'}{\can{\K}}$ as the bag answers 
on the ABox in any ontology $\K$ with TBox~$\T$. 
To this end, we use the following notation: for
$t \in \Vars \cup \Ind$,
let $\zeta_A(t) = A(t)$ for $A \in \Concepts$, while $\zeta_{\exists P}(t) = 
\exists y.\,P(t, y)$ and $\zeta_{\exists P^-}(t) = \exists y.\,P(y, t)$ for $P 
\in \Roles$, where $y$ is a variable different from $t$.
Then, formula
$\psi_{\tpl z}(\tpl x, \tpl y_{\tpl z})$ is obtained from 
$\phi_{\tpl z}(\tpl x, \tpl y')$ 
by 
replacing all atoms mentioning a term $t \in \Ind \cup \tpl x \cup \tpl y_{\tpl z}$ 
or a variable $z \in \tpl z$ as follows:
\begin{itemize}[noitemsep,leftmargin=*,topsep=1pt]
\item[-] each $A(t)$ with $\bigvee\nolimits_{\T \models C \isa A} \zeta_C(t)$;
\item[-] each $P(t, z)$ with $\big(\bigvee\nolimits_{\T \models C \isa \exists P} \zeta_C(t) \big) \,\backslash \, \zeta_{\exists P}(t)$; 
\item[-] each $P(z, t)$ with $\big(\bigvee\nolimits_{\T \models C \isa \exists P^-} \zeta_C(t) \big) \,\backslash \, \zeta_{\exists P^-}(t)$. 
\end{itemize}
Note that $\phi_{\tpl z}(\tpl x, \tpl y')$ does not contain any atoms of the 
form $A(z)$ for $z \in \tpl z$, so $\psi_{\tpl z}(\tpl x, \tpl y_{\tpl z})$ does not 
mention variables $\tpl z$. Also, atoms over roles without variables $\tpl z$ stay intact, because $\T$ contains no role inclusions.

Finally, the rewriting of
$q(\tpl x)$ over $\T$
is the $\balgonee$-query
\[
\bar q(\tpl x) = \bigveedot\nolimits_{\tpl z \text{ realisable by } \T} \bar q_{\tpl z}(\tpl 
x).
\]

\begin{example}
Consider TBox $\T_{\textit{r}}$ from Example~\ref{ex:mngr} and the rooted CQ
$q^{\textit{r}}(x) = \exists y.\, {\sf hasMngr}(x, y) \wedge {\sf Mngr}(y)$.
The query $\bar q^{\textit{r}}(x)  = \bar q^{\textit{r}}_{\emptytpl}(x) \veedot \bar q^{\textit{r}}_{y}(x)$, where $\bar q^{\textit{r}}_{\emptytpl}(x)$ and $\bar q^{\textit{r}}_{y}(x)$ are
\[
\begin{array}{l}
\exists y.\,
{\sf hasMngr}(x, y) \wedge 
\big( {\sf Mngr}(y) \vee \exists z.\, {\sf hasMngr}(z, y) \big) \text{ and }\\
\big( {\sf Emp}(x) \vee \exists y.\, {\sf hasMngr}(x, y) \big) 
\setminus
\exists y.\, {\sf hasMngr}(x, y),
\end{array}
\]
is a rewriting of $q^{\textit{r}}$ over $\T_{\textit{r}}$, since $\emptytpl$ and $y$ are realisable.
\end{example}

The following theorem establishes the correctness of our approach and
leads to the main rewritability result.
\begin{restatable}{theorem}{BALGRewrite}
\label{thm:cq-rw}
For any rooted CQ $q$ and $\dlliteCbag$ ontology $\K = \langle \T, \A 
\rangle$ we have that
$\ans{q}{\can\K} = \cert{\bar q}{\can{\langle\emptyset,\A\rangle}}$. 
\end{restatable}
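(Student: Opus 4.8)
The plan is to prove the identity by combining the partition~\eqref{partition_eq}, Lemma~\ref{lemma:rw-realisable}, and the semantics of the arithmetic union $\bigveedot$, so that the whole statement collapses to a single correctness claim for the chasing-back of Step~3, to be verified for each realisable $\tpl z$ separately.

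First I would rewrite the left-hand side. Starting from~\eqref{partition_eq}, Lemma~\ref{lemma:rw-realisable} discards every non-realisable $\tpl z$ (part~2) and replaces $q$ by $q_{\tpl z}$ on every realisable one (part~1), giving
\[
\ans{q}{\can\K} = \biguplus\nolimits_{\tpl z \text{ realisable by } \T} \ansy{q_{\tpl z}}{\tpl z \cap \tpl y'}{\can\K}.
\]
On the right-hand side, the semantics of $\bigveedot$ unfolds $\ans{\bar q}{\can{\langle\emptyset,\A\rangle}}$ into $\biguplus_{\tpl z \text{ realisable by } \T} \ans{\bar q_{\tpl z}}{\can{\langle\emptyset,\A\rangle}}$ over the same index set. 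Hence it suffices to prove, for each realisable $\tpl z$, the per-$\tpl z$ identity
\[
\ansy{q_{\tpl z}}{\tpl z \cap \tpl y'}{\can\K} = \ans{\bar q_{\tpl z}}{\can{\langle\emptyset,\A\rangle}}.
\]

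To prove this identity I would first note that, with an empty TBox, $\can{\langle\emptyset,\A\rangle}$ introduces no anonymous elements and coincides with the bag ABox over $\Ind$, so evaluating $\bar q_{\tpl z}$ there only reads off ABox multiplicities. I would then fix an answer tuple $\tpl a$ and a valuation $\eval$ of the individual part $\tpl x \cup \tpl y_{\tpl z}$ into $\Ind$, and compare the contribution of $\eval$ on both sides. The comparison rests on two facts about how $\can\K$ looks around an individual $a$: (a)~concept multiplicities stabilise already at level~$1$, i.e.\ $A^{\can\K}(a) = \ccl{a}{\pcan{\K}{0}}{\T}(A)$, which is exactly the value over the ABox of the replacement $\bigvee_{\T \models C \isa A} \zeta_C(t)$; and (b)~the number of \emph{anonymous} $P$-successors of $a$ is $\ccl{a}{\pcan{\K}{0}}{\T}(\exists P) - (\exists P)^{\pcan{\K}{0}}(a)$, which is exactly the value over the ABox of $\big(\bigvee_{\T \models C \isa \exists P} \zeta_C(t)\big) \minus \zeta_{\exists P}(t)$. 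Since the anonymous part of $\can\K$ is a forest of multiplicity-$1$ edges hanging off the individuals, summing each representative $\tpl z$-variable over its anonymous images produces precisely the factor in~(b), distinct ma-connected components yield independent such factors matching the join in $\bar q_{\tpl z}$, and the individual-only role atoms survive the rewriting unchanged because $\dlliteC$ has no role inclusions.

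The hard part will be facts~(a) and~(b), together with the correctness of the difference. I would establish (a) and (b) by induction on the level $i$, using transitivity of $\T$-entailment: any multiplicity an individual could gain at level $i$ via a concept $\exists R$ is, through $\isa$, already subsumed by a concept contributing at level~$0$, so individuals acquire no new concept multiplicity and no new successors beyond level~$1$. The delicate point in the difference is avoiding double-counting: a valuation of the linking atom $\alpha_{\tpl z'}$ must send its $\tpl z$-variable to an anonymous successor and not to an individual successor of $\eval$; subtracting $\zeta_{\exists P}(t)$ removes exactly the latter, and the subtraction never underflows because $\ccl{a}{\pcan{\K}{0}}{\T}(\exists P) \geq (\exists P)^{\pcan{\K}{0}}(a)$. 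Multiplying the resulting per-atom factors and then taking the arithmetic union over $\tpl z$ yields the per-$\tpl z$ identity and hence the theorem.
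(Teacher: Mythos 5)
Your proposal is correct and takes essentially the same route as the paper: the paper's proof is exactly the chain partition~\eqref{partition_eq} $\rightarrow$ Lemma~\ref{lemma:rw-realisable} $\rightarrow$ per-$\tpl z$ identity $\rightarrow$ semantics of $\bigveedot$, where the per-$\tpl z$ identity you isolate is precisely the paper's Lemma~\ref{lemma:rw-abox}. Your inline justification of that identity (concept multiplicities of individuals stabilising at level~$1$, the anonymous-successor count equalling the difference expression $\theta$, role atoms among individuals surviving untouched due to the absence of role inclusions, and the multiplicity-$1$ forest structure of the anonymous part) matches the paper's supporting Lemmas~\ref{lemma:rw-atomic-ucq} and~\ref{lemma:rw-abox} fact for fact.
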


 
\begin{restatable}{corollary}{BALGRewriteCor}\label{ref:cor-main}
The class of rooted CQs is rewritable to $\balgonee$-queries for 
$\dlliteCbag$.
\end{restatable}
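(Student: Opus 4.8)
The plan is to read the corollary off the two main results of this section together with the $\balgonee$-query $\bar q$ already assembled in Steps~1--3; no new construction is needed. Fix a rooted CQ $q(\tpl x) = \exists \tpl y.\, \phi(\tpl x, \tpl y)$ and a $\dlliteC$ TBox $\T$, and take $\bar q(\tpl x) = \bigveedot\nolimits_{\tpl z \text{ realisable by } \T} \bar q_{\tpl z}(\tpl x)$. I would first observe that $\bar q$ is a legitimate \emph{rewriting} in the sense required: the set of realisable $\tpl z \subseteq \tpl y$ and each $\bar q_{\tpl z}$ depend only on $q$ and $\T$, never on the input ABox. Indeed, realisability is decided in Definition~\ref{def:realisable-ma-connected} using $\T$ and fresh auxiliary ABoxes $\A'$ only, and the chase-back of Step~3 consults nothing but the entailments $\T \models C \isa A$, $\T \models C \isa \exists P$, and $\T \models C \isa \exists P^-$. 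Since there are finitely many $\tpl z \subseteq \tpl y$ and finitely many such inclusions up to equivalence, $\bar q$ is a single finite query determined by $q$ and $\T$.

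Next I would establish the required equality of bags. Let $\A$ be any bag ABox with $\langle \T, \A\rangle$ satisfiable. By Theorem~\ref{thm:chase-univ} the canonical model $\can{\langle\T,\A\rangle}$ is universal for rooted CQs, so $\cert{q}{\langle\T,\A\rangle} = \ans{q}{\can{\langle\T,\A\rangle}}$. Theorem~\ref{thm:cq-rw} then yields $\ans{q}{\can{\langle\T,\A\rangle}} = \cert{\bar q}{\can{\langle\emptyset,\A\rangle}}$, and because $\can{\langle\emptyset,\A\rangle}$ is a single bag interpretation, the latter is simply the bag answers of $\bar q$ on it. Chaining,
\[
\cert{q}{\langle\T,\A\rangle} \; = \; \ans{\bar q}{\can{\langle\emptyset,\A\rangle}}.
\]

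It then remains to identify $\can{\langle\emptyset,\A\rangle}$ with the least bag model of $\A$, so that the right-hand side is exactly ``the answers to $\bar q$ in the least model of $\A$'' demanded by the definition of rewritability. For $\T = \emptyset$ the only entailed inclusions are the trivial $C \isa C$, so $\ccl{u}{\pcan{\langle\emptyset,\A\rangle}{i-1}}{\emptyset}$ merely reproduces the current multiplicities; in particular the witness count $\delta$ is always $0$, no anonymous elements are ever added, and the concept values are unchanged. Hence $\pcan{\langle\emptyset,\A\rangle}{i} = \pcan{\langle\emptyset,\A\rangle}{0}$ for every $i$, and $\can{\langle\emptyset,\A\rangle}$ is the bag interpretation over $\Ind$ assigning each assertion $S(\tpl a)$ precisely its multiplicity $\A(S(\tpl a))$, i.e.\ the least bag model of $\A$. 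The displayed equality thus witnesses that $\bar q$ rewrites $q$ over $\T$, and since this holds for every satisfiable $\langle \T, \A\rangle$, rootedness of $q$ and the $\dlliteC$ restriction on $\T$ were arbitrary, the class of rooted CQs is rewritable to $\balgonee$-queries for $\dlliteCbag$.

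The only step that needs genuine care — the analytic content having been discharged by Theorem~\ref{thm:cq-rw} — is verifying that $\bar q$ is syntactically a $\balgonee$-query per Definition~\ref{def:BALG-queries}. I would check this by induction on the structure of each $\bar q_{\tpl z}$, confirming the answer-variable bookkeeping of each clause: every replacement subformula $\bigvee\nolimits_{\T \models C \isa A}\zeta_C(t)$, $\big(\bigvee\nolimits_{\T \models C \isa \exists P}\zeta_C(t)\big)\minus\zeta_{\exists P}(t)$, and its $P^-$ analogue has the single free variable $t$ in each disjunct and on both sides of the difference, so the equal-answer-variable conditions for $\vee$ and $\minus$ are met (the relevant disjunctions are nonempty, as $\T \models A \isa A$ and $\T \models \exists P \isa \exists P$ always hold); the surviving conjunctions and the quantification over $\tpl y_{\tpl z}$ inherit the variable conditions from the safe CQ $q_{\tpl z}$, and the top-level $\bigveedot$ combines queries all having answer variables $\tpl x$. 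This routine bookkeeping completes the argument.
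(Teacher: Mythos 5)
Your proposal is correct and follows exactly the paper's route: the paper also derives the corollary immediately from Theorem~\ref{thm:chase-univ} (universality of $\can{\langle\T,\A\rangle}$ for rooted CQs), Theorem~\ref{thm:cq-rw}, and the observation that $\bar q$ depends only on $q$ and $\T$. Your additional checks --- that $\can{\langle\emptyset,\A\rangle}$ is the least bag model of $\A$ and that $\bar q$ is syntactically a well-formed $\balgonee$-query --- are just careful elaborations of what the paper dismisses as immediate, and they are accurate.
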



We conclude this section by establishing 
the complexity of rooted query answering. The bounds 
follow as an easy consequence of Theorem \ref{thm:qa-bounded-can}, Proposition~\ref{prop:balg-complete},
and Corollary \ref{ref:cor-main}.


\begin{restatable}{theorem}{QAComplexity}
  $\BagCert[\textup{rooted CQs},\dlliteCbag]$ is $\np$-complete and in
  \logspace{} in data complexity.
\end{restatable}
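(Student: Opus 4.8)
The plan is to prove the two bounds separately, both resting on the universality of the canonical model for rooted CQs (Theorem~\ref{thm:chase-univ}): whenever $\K$ is satisfiable, $\cert{q}{\K} = \ans{q}{\can{\K}}$. Satisfiability is cheap to dispose of first — by Proposition~\ref{thm:back-compat}(1) it reduces to satisfiability of the underlying set ontology, which is in $\aczero$ in data complexity and in $\ptime$ in general — and if $\K$ is unsatisfiable then $\cert{q}{\K}$ is the top bag, so every instance is trivially a ``yes''. In what follows I assume $\K$ satisfiable.

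For the $\logspace$ data-complexity bound I would use the rewriting. With $q$ and $\T$ fixed, Corollary~\ref{ref:cor-main} produces a \emph{fixed} $\balgonee$-query $\bar q$, and Theorems~\ref{thm:chase-univ} and~\ref{thm:cq-rw} give $\cert{q}{\K}(\tpl a) = \ans{q}{\can{\K}}(\tpl a) = \ans{\bar q}{\can{\langle\emptyset,\A\rangle}}(\tpl a)$. Since $\bar q$ is fixed, Proposition~\ref{prop:balg-complete} $\aczero$-reduces the test $\ans{\bar q}{\can{\langle\emptyset,\A\rangle}}(\tpl a) \geq k$ to $\balgonee$; as $\balgonee \subset \logspace$, the whole procedure (satisfiability check, the reduction, and $\balgonee$ evaluation) stays in $\logspace$.

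The combined-complexity $\np$ upper bound is the real obstacle, because $\bar q$ is an arithmetic union over the exponentially many $\tpl z \subseteq \tpl y$ and is useless here. Instead I would compute directly in the bounded model: by Theorem~\ref{thm:qa-bounded-can}, $\ans{q}{\can{\K}}(\tpl a) = \ans{q}{\pcan{\K}{n}}(\tpl a)$ for $n$ the number of atoms of $q$, and this is a \emph{finite} sum $\sum_{\eval} \prod_{S(\tpl t)} S^{\pcan{\K}{n}}(\eval(\tpl t))$ of products of finite multiplicities (so $k=\infty$ is always a ``no''). Two facts make a guess-and-check algorithm work. First, every anonymous element of $\pcan{\K}{n}$ has depth at most $n$, hence a polynomial-size description by its generating path, and given such descriptions each multiplicity $S^{\pcan{\K}{n}}(\cdot)$ is polynomial-time computable: multiplicities incident to an anonymous element are $0/1$ and determined by the tree, role multiplicities between named individuals are ABox values, and concept multiplicities at named individuals are the polynomially bounded concept-closure values obtained from the ABox. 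Second, since the ABox multiplicities and $k$ are given in unary and every nonzero product is at least $1$, the test $\ans{q}{\pcan{\K}{n}}(\tpl a) \geq k$ holds iff some set of at most $k$ pairwise-distinct valuations already has contributions summing to at least $k$. The $\np$ algorithm thus guesses $\leq k$ such valuations (of total polynomial size), checks validity and distinctness, computes their contributions, and accepts iff the partial sum reaches $k$; soundness and completeness follow because the contributions are nonnegative integers and the total answer is finite.

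For $\np$-hardness I would reduce clique. Given $G$ and $k$, take the empty TBox, a fresh individual $c$, and the bag ABox assigning multiplicity $1$ to every edge fact $E(u,v)$ and to every fact $\mathsf{Anchor}(c,v)$ over a vertex $v$; let $q = \exists x_1 \dots x_k.\, \bigwedge_{i<j} E(x_i,x_j) \wedge \bigwedge_i \mathsf{Anchor}(c,x_i)$. This query is connected and mentions the individual $c$, so it is rooted, and it holds in the ABox iff $G$ has a $k$-clique. By Proposition~\ref{thm:back-compat}(2), deciding $\cert{q}{\langle\emptyset,\A\rangle}(\emptytpl) \geq 1$ is then $\np$-hard, which together with the upper bound yields $\np$-completeness.
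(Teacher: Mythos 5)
Your proposal is correct and follows essentially the same route as the paper's proof: the $\logspace$ data-complexity bound via the rewriting (Theorems~\ref{thm:chase-univ} and~\ref{thm:cq-rw} together with Proposition~\ref{prop:balg-complete} and $\balgonee \subset \logspace$), $\np$ membership by guessing at most $k$ valuations into the depth-bounded canonical model $\pcan{\K}{n}$ of Theorem~\ref{thm:qa-bounded-can} (your path descriptions of anonymous elements play the role of the paper's guessed subinterpretations $\J_i$, and both arguments rest on unary multiplicities and the fact that each nonzero contribution is at least $1$), and $\np$-hardness via Proposition~\ref{thm:back-compat}(2). The only cosmetic differences are that you make the hardness self-contained with an explicit clique reduction (empty TBox, query rooted through the individual $c$) where the paper cites the known $\np$-hardness of answering rooted CQs over set-semantics $\dlliteC$, and that you explicitly dispose of unsatisfiable ontologies and of $k \in \{0,\infty\}$, which the paper treats more tersely.
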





However, 
the next theorem implies
that rooted queries are not 
$\balgonee$-rewritable for
unrestricted $\dlliteRbag$ TBoxes.
\begin{restatable}{theorem}{QARolescoNP}
  \label{thm:role-conph}
  $\BagCert[\textup{rooted CQs},\dlliteRbag]$ is $\conp$-hard in data 
  complexity.
\end{restatable}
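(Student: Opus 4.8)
The plan is to adapt the $\conp$-hardness construction behind Theorem~\ref{thm:core-conph}, lifting it from arbitrary CQs to \emph{rooted} CQs by paying for the loss of non-rootedness with the extra expressive power of $\dlliteRbag$. The key conceptual point is that rooted CQs over $\dlliteCbag$ are tractable: by Theorem~\ref{thm:chase-univ} the canonical model of Definition~\ref{def:canon} is universal for them and answering is in \logspace. Hence the cross-model non-determinism that drives the core reduction lives in a genuinely non-rooted (floating) part of the hard query, whose valuations land in the anonymous part of the models and can be merged in many ways. My strategy is to re-anchor this floating part to a fixed individual using a fresh super-role together with role inclusions, so that a counting query can reach the merge-sensitive witnesses through named individuals and thereby become rooted, without disturbing the multiplicity it counts.

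Concretely, I would start from the $\dlliteCbag$ instance $\langle\T_0,\A_0\rangle$ and the non-rooted Boolean CQ $q_0 = \exists\tpl y.\,\phi_0(\tpl y)$ produced by the core reduction, for which $\cert{q_0}{\langle\T_0,\A_0\rangle}(\emptytpl)\ge k$ iff the source instance is positive; after a routine normalisation I may assume every witness used by a satisfying valuation of $q_0$ is a successor of a named individual along an atom already present in $\phi_0$. I then add a fresh individual $c$ and a fresh role $U$, put $U(c,a)$ with multiplicity $1$ into the ABox for every individual $a$ of $\A_0$, and add role inclusions $P\isa U$ (and $P^-\isa U$) for the roles $P$ occurring in $\phi_0$. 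The rooted query $q_1$ is obtained from $q_0$ by attaching each connected component of its Gaifman graph to $c$: for a chosen component variable $y$ with named parent reachable in $\phi_0$, I add atoms of the form $U(c,p)\wedge P(p,y)$ linking $y$ back through its parent $p$ to the individual $c$. By construction every component of $q_1$ now touches the individual $c$, so $q_1$ is rooted, and the TBox is a legal $\dlliteRbag$ TBox; the instance-dependent part remains computable with unary multiplicities.

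The correctness claim to establish is $\cert{q_1}{\langle\T_1,\A_1\rangle}(\emptytpl)=\cert{q_0}{\langle\T_0,\A_0\rangle}(\emptytpl)$, which I would prove by a multiplicity-preserving correspondence between the count-minimising models on the two sides. Given a model $\I\bmodels\langle\T_0,\A_0\rangle$, interpret $U$ as the max-union of the $P$'s together with the forced edges $U(c,a)=1$ to obtain a model of $\langle\T_1,\A_1\rangle$; conversely any model of $\langle\T_1,\A_1\rangle$ restricts to one of $\langle\T_0,\A_0\rangle$. Each anchoring atom $U(c,p)\wedge P(p,y)$ contributes to the product in Definition~\ref{def:CQ-eval} a factor $\sum_{p} U^{\I}(c,p)\,P^{\I}(p,\lambda(y))$, and one has to check that, for every valuation witnessing a low count, the unique named parent of $\lambda(y)$ carries $U(c,p)=P(p,\lambda(y))=1$ while all other $p$ contribute $0$, so the factor is exactly one and the bag answers of $q_1$ and $q_0$ agree model by model and hence in their infima.

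The main obstacle is exactly this multiplicity-neutrality of the anchoring. Inclusions impose only lower bounds, so a model may inflate $U(c,\cdot)$; this only raises the count of $q_1$ and is harmless because certain answers take the minimum, but one must rule out the reverse danger that the freshly added role inclusions $P\isa U$ enable new ways of merging the anonymous witnesses that would let some model of $\K_1$ drop the count of $q_1$ below the target value of $q_0$. I would control this by using role $\mathsf{Disj}$-axioms to fence the auxiliary role $U$ off from the roles that carry the hard merging structure, and by verifying that the count-minimising model of $q_1$ projects precisely onto the count-minimising model of $q_0$. Establishing this exact bijection on cheapest models is the technical heart of the argument; once it is in place, the $\conp$-hardness of $\BagCert[\textup{rooted CQs},\dlliteRbag]$ follows immediately from that of the non-rooted core problem.
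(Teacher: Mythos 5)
Your high-level strategy---reuse the non-3-colourability reduction behind Theorem~\ref{thm:core-conph} and spend the extra power of $\dlliteRbag$ role inclusions to make the hard query rooted---is exactly the spirit of the paper's proof, but your specific anchoring mechanism has a gap that I do not think can be closed. First, the floating component of the core query is the \emph{concept} atom $ACol(w)$: there is no role atom $P(p,w)$ in $\phi_0$ at all, so the anchoring atoms $U(c,p)\wedge P(p,w)$ you describe do not exist for it, and the ``routine normalisation'' you invoke (every witness is a successor of a named individual along an atom already in $\phi_0$) is not available---the witnesses for $w$ are the colour individuals $r,g,b$ and, in an adversarial model, a fourth colour which may be realised as a \emph{fresh anonymous element}, namely the existential witness of $Vertex \isa \exists hasColour$. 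Second, and more fundamentally, no $\dlliteR$ axiom can force a $U$-edge between your fixed individual $c$ and such an anonymous element: a role inclusion $P \isa U$ only relabels existing $P$-edges, preserving their endpoints, so every forced $U$-edge touching the anonymous witness $w^*$ has the generating vertex $u$ as its other endpoint, never $c$; connecting $c$ to $w^*$ would require role composition, which $\dlliteR$ cannot express. Consequently, for a non-3-colourable graph there is a model that realises the fourth colour anonymously and in which your rooted query $q_1$ undercounts (the anonymous colour contributes to no factor of the form $U(c,\cdot)$), so the certain multiplicity falls below the threshold and the forward direction of your reduction fails. The $\mathsf{Disj}$ fencing you propose addresses interference between roles, not this reachability problem, which is the actual obstruction.

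The paper's proof resolves precisely this issue by counting \emph{role edges with named sources} rather than (possibly anonymous) colour \emph{elements}. It uses the TBox $\{\,Vertex \isa \exists hasColour,\ hasColour \isa Assign\,\}$, pre-populates the ABox with $Assign(u,r)$, $Assign(u,g)$, $Assign(u,b)$ for every vertex $u$ and with an ABox-defined role $Reachable$ linking all named vertices, and replaces the disconnected atom $ACol(w)$ of the core reduction by the connected counting subquery $Reachable(x,k)\wedge Assign(k,l)$, attached to the answer variable $w$ through $Assign(x,w)\wedge Assign(y,w)$; the resulting query is rooted. The role inclusion $hasColour \isa Assign$ then guarantees that \emph{any} colour assignment $hasColour(u,w^*)$---even when $w^*$ is anonymous---materialises as an $Assign$-edge whose source is the named, reachable vertex $u$, and is therefore counted. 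That is the trick your construction is missing: the anonymous witness is never reached directly, but the edge pointing to it is forced to carry a named source, and edges, not endpoints, are what the rooted query counts.
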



\section{Related work}
\label{sec:related}

Query answering under bag semantics has
received significant attention 
in the database literature \cite{DBLP:conf/pods/LibkinW94,GrumbachSIGACT96,GrumbachJCSS96,DBLP:journals/jcss/LibkinW97}.
These works study the relative expressive power of bag algebra primitives, the relationship
with set-based algebras, and establish the
data complexity of query answering. Such problems have also been recently studied in the setting
of Semantic Web and SPARQL 1.1 in \cite{DBLP:conf/www/KaminskiKG16,DBLP:conf/semweb/AnglesG16}.

Bag semantics 
in the context of  Description Logics has been studied in~\cite{DLBag10}, where 
the author proposes a bag semantics for $\mathcal{ALC}$ and provides a tableaux 
algorithm. 
In contrast to our work, their results 
are restricted to ontology satisfiability and do not
encompass CQ answering.

CQ answering under bag semantics is closely related to 
answering $\sf Count$ aggregate queries. The semantics of aggregate queries for database
settings with incomplete information, such as inconsistent databases and data
exchange, have been studied in \cite{ArenasTCS03,DBLP:conf/pods/Libkin06,KolaitisPODS08}. As pointed out in \cite{KostylevJWS15}, these
 techniques are not directly
applicable to ontologies. 
The practical solution in~\cite{CalvaneseONISW08} is to give epistemic 
semantics to
aggregate queries, where the query is evaluated over
ABox facts entailed by the ontology; thus, 
the anonymous part of the
ontology models is essentially ignored, and
the
semantics easily leads to counter-intuitive answers. 
To remedy these issues, \cite{KostylevJWS15}
propose a certain answer semantics for $\sf Count$ aggregate queries over
ontologies and prove tight complexity bounds  for
$\dlliteR$ and $\dlliteC$. Similarly to our work, their
semantics is open-world and
considers all models of the ontology for query evaluation, which leads to more intuitive answers. 
The main difference resides in the definition of
the ontology language, where they consider 
set ABoxes and adopt conventional
set-based semantics for TBox axioms. 
Although $\dlliteRbag$ is closely related to the logic in~\cite{KostylevJWS15}, 
the two settings do not coincide even for set ABoxes.
For example, if $\A$ comprises only assertions $R(a, b)$ and $R(a, c)$ and 
$\T$ comprises axiom $\exists R \sqsubseteq B$,
then the query over $\langle \T, \A \rangle$ that counts the number of 
individuals $a$ in concept $B$ returns $1$ in the setting 
of~\cite{KostylevJWS15}, while the corresponding 
$\dlliteRbag$ query returns~$2$.

\section{Conclusion and Future Work}

We have studied OBDA under bag semantics and
identified a general class of rewritable queries over
$\dlliteCbag$ ontologies. 
As our framework covers already the class of $\sf Count$ aggregate queries,
in future work we plan to extend it to capture further aggregate functions 
and more expressive ontologies.


\label{sec:concl}

\clearpage
\bibliographystyle{named}
\bibliography{refs}

\begin{thebibliography}{}

\bibitem[\protect\citeauthoryear{Afrati and Kolaitis}{2008}]{KolaitisPODS08}
Foto~N. Afrati and Phokion~G. Kolaitis.
\newblock Answering aggregate queries in data exchange.
\newblock In {\em PODS}, 2008.

\bibitem[\protect\citeauthoryear{Angles and
  Gutierrez}{2016}]{DBLP:conf/semweb/AnglesG16}
Renzo Angles and Claudio Gutierrez.
\newblock The multiset semantics of {SPARQL} patterns.
\newblock In {\em ISWC}, 2016.

\bibitem[\protect\citeauthoryear{Arenas \bgroup \em et al.\egroup
  }{2003}]{ArenasTCS03}
Marcelo Arenas, Leopoldo~E. Bertossi, Jan Chomicki, Xin He, Vijay Raghavan, and
  Jeremy~P. Spinrad.
\newblock Scalar aggregation in inconsistent databases.
\newblock {\em Theor. Comput. Sci.}, 296(3):405--434, 2003.

\bibitem[\protect\citeauthoryear{Artale \bgroup \em et al.\egroup
  }{2009}]{DBLP:journals/jair/ArtaleCKZ09}
Alessandro Artale, Diego Calvanese, Roman Kontchakov, and Michael
  Zakharyaschev.
\newblock The {DL-Lite} family and relations.
\newblock {\em J. Artif. Intell. Res. {(JAIR)}}, 36:1--69, 2009.

\bibitem[\protect\citeauthoryear{Bienvenu \bgroup \em et al.\egroup
  }{2012}]{BienvenuKR12}
Meghyn Bienvenu, Carsten Lutz, and Frank Wolter.
\newblock Query containment in description logics reconsidered.
\newblock In {\em KR}, 2012.

\bibitem[\protect\citeauthoryear{Cal{\`{\i}} \bgroup \em et al.\egroup
  }{2013}]{CaliJAIR13}
Andrea Cal{\`{\i}}, Georg Gottlob, and Michael Kifer.
\newblock Taming the infinite chase: Query answering under expressive
  relational constraints.
\newblock {\em J. Artif. Intell. Res. {(JAIR)}}, 48:115--174, 2013.

\bibitem[\protect\citeauthoryear{Calvanese \bgroup \em et al.\egroup
  }{2007}]{CalvaneseJAR07}
Diego Calvanese, Giuseppe {De Giacomo}, Domenico Lembo, Maurizio Lenzerini, and
  Riccardo Rosati.
\newblock Tractable reasoning and efficient query answering in description
  logics: The \emph{DL-Lite} family.
\newblock {\em J. Autom. Reasoning}, 39(3):385--429, 2007.

\bibitem[\protect\citeauthoryear{Calvanese \bgroup \em et al.\egroup
  }{2008}]{CalvaneseONISW08}
Diego Calvanese, Evgeny Kharlamov, Werner Nutt, and Camilo Thorne.
\newblock Aggregate queries over ontologies.
\newblock In {\em Proceedings of the 2nd International Workshop on Ontologies
  and Information Systems for the Semantic Web, {ONISW}}, 2008.

\bibitem[\protect\citeauthoryear{Calvanese \bgroup \em et al.\egroup
  }{2011}]{DBLP:journals/semweb/CalvaneseGLLPRRRS11}
Diego Calvanese, Giuseppe {De Giacomo}, Domenico Lembo, Maurizio Lenzerini,
  Antonella Poggi, Mariano Rodriguez{-}Muro, Riccardo Rosati, Marco Ruzzi, and
  Domenico~Fabio Savo.
\newblock The {MASTRO} system for ontology-based data access.
\newblock {\em Semantic Web}, 2(1):43--53, 2011.

\bibitem[\protect\citeauthoryear{Calvanese \bgroup \em et al.\egroup
  }{2017}]{DBLP:journals/semweb/CalvaneseCKKLRR17}
Diego Calvanese, Benjamin Cogrel, Sarah Komla{-}Ebri, Roman Kontchakov, Davide
  Lanti, Martin Rezk, Mariano Rodriguez{-}Muro, and Guohui Xiao.
\newblock Ontop: Answering {SPARQL} queries over relational databases.
\newblock {\em Semantic Web}, 8(3):471--487, 2017.

\bibitem[\protect\citeauthoryear{Chaudhuri and Vardi}{1993}]{VardiPODS93}
Surajit Chaudhuri and Moshe~Y. Vardi.
\newblock Optimization of \emph{Real} conjunctive queries.
\newblock In {\em PODS}, 1993.

\bibitem[\protect\citeauthoryear{Garc\'ia{-}Molina \bgroup \em et al.\egroup
  }{2009}]{DBLP:books/daglib/0020812}
Hector Garc\'ia{-}Molina, Jeffrey~D. Ullman, and Jennifer Widom.
\newblock {\em Database Systems: The Complete Book}.
\newblock Pearson Education, 2nd edition, 2009.

\bibitem[\protect\citeauthoryear{Gottlob \bgroup \em et al.\egroup
  }{2014}]{DBLP:journals/ai/GottlobKKPSZ14}
Georg Gottlob, Stanislav Kikot, Roman Kontchakov, Vladimir~V. Podolskii, Thomas
  Schwentick, and Michael Zakharyaschev.
\newblock The price of query rewriting in ontology-based data access.
\newblock {\em Artif. Intell.}, 213:42--59, 2014.

\bibitem[\protect\citeauthoryear{Grumbach and Milo}{1996}]{GrumbachJCSS96}
St{\'{e}}phane Grumbach and Tova Milo.
\newblock Towards tractable algebras for bags.
\newblock {\em J. Comput. Syst. Sci.}, 52(3):570--588, 1996.

\bibitem[\protect\citeauthoryear{Grumbach \bgroup \em et al.\egroup
  }{1996}]{GrumbachSIGACT96}
St{\'{e}}phane Grumbach, Leonid Libkin, Tova Milo, and Limsoon Wong.
\newblock Query languages for bags: expressive power and complexity.
\newblock {\em {SIGACT} News}, 27(2):30--44, 1996.

\bibitem[\protect\citeauthoryear{Jiang}{2010}]{DLBag10}
Yuncheng Jiang.
\newblock Description logics over multisets.
\newblock In {\em 6th International Workshop on Uncertainty Reasoning for the
  Semantic Web {(URSW})}, 2010.

\bibitem[\protect\citeauthoryear{Kaminski \bgroup \em et al.\egroup
  }{2016}]{DBLP:conf/www/KaminskiKG16}
Mark Kaminski, Egor~V. Kostylev, and Bernardo~Cuenca Grau.
\newblock Semantics and expressive power of subqueries and aggregates in
  {SPARQL} 1.1.
\newblock In {\em WWW}, 2016.

\bibitem[\protect\citeauthoryear{Kikot \bgroup \em et al.\egroup
  }{2012}]{DBLP:conf/kr/KikotKZ12}
Stanislav Kikot, Roman Kontchakov, and Michael Zakharyaschev.
\newblock Conjunctive query answering with {OWL} 2 {QL}.
\newblock In {\em Principles of Knowledge Representation and Reasoning:
  Proceedings of the Thirteenth International Conference, {KR}}, 2012.

\bibitem[\protect\citeauthoryear{Kikot \bgroup \em et al.\egroup
  }{2014}]{DBLP:conf/csl/KikotKPZ14}
Stanislav Kikot, Roman Kontchakov, Vladimir~V. Podolskii, and Michael
  Zakharyaschev.
\newblock On the succinctness of query rewriting over shallow ontologies.
\newblock In {\em LICS}, 2014.

\bibitem[\protect\citeauthoryear{Kontchakov \bgroup \em et al.\egroup
  }{2014}]{DBLP:conf/semweb/KontchakovRRXZ14}
Roman Kontchakov, Martin Rezk, Mariano Rodriguez{-}Muro, Guohui Xiao, and
  Michael Zakharyaschev.
\newblock Answering {SPARQL} queries over databases under {OWL} 2 {QL}
  entailment regime.
\newblock In {\em ISWC}, 2014.

\bibitem[\protect\citeauthoryear{Kostylev and Reutter}{2015}]{KostylevJWS15}
Egor~V. Kostylev and Juan~L. Reutter.
\newblock Complexity of answering counting aggregate queries over {DL-Lite}.
\newblock {\em J. Web Sem.}, 33:94--111, 2015.

\bibitem[\protect\citeauthoryear{Lenzerini}{2002}]{DBLP:conf/pods/Lenzerini02}
Maurizio Lenzerini.
\newblock Data integration: {A} theoretical perspective.
\newblock In {\em PODS}, 2002.

\bibitem[\protect\citeauthoryear{Libkin and
  Wong}{1994}]{DBLP:conf/pods/LibkinW94}
Leonid Libkin and Limsoon Wong.
\newblock New techniques for studying set languages, bag languages and
  aggregate functions.
\newblock In {\em PODS}, 1994.

\bibitem[\protect\citeauthoryear{Libkin and
  Wong}{1997}]{DBLP:journals/jcss/LibkinW97}
Leonid Libkin and Limsoon Wong.
\newblock Query languages for bags and aggregate functions.
\newblock {\em {JCSS}}, 55(2):241--272, 1997.

\bibitem[\protect\citeauthoryear{Libkin}{2006}]{DBLP:conf/pods/Libkin06}
Leonid Libkin.
\newblock Data exchange and incomplete information.
\newblock In {\em PODS}, 2006.

\bibitem[\protect\citeauthoryear{Motik \bgroup \em et al.\egroup
  }{2012}]{OWL2-profiles}
Boris Motik, Bernardo {Cuenca Grau}, Ian Horrocks, Zhe Wu, Achille Fokoue, and
  Carsten Lutz.
\newblock {OWL~2 Web Ontology Language Profiles (Second Edition)}.
\newblock {\em W3C Recommendation}, 2012.

\bibitem[\protect\citeauthoryear{Poggi \bgroup \em et al.\egroup
  }{2008}]{PoggiJDS08}
Antonella Poggi, Domenico Lembo, Diego Calvanese, Giuseppe {De Giacomo},
  Maurizio Lenzerini, and Riccardo Rosati.
\newblock Linking data to ontologies.
\newblock {\em J. Data Semantics}, 10:133--173, 2008.

\end{thebibliography}


\onecolumn

\appendix 

\section{Appendix}

In this appendix we give the complete proofs omitted in the main part of the paper.

\smallskip

\BackCompat*

\begin{proof}
Let $\K = \langle \T, \A \rangle$ and
$\K' = \langle \T, \A' \rangle$ for any $\A'$ satisfying 
requirement $\{S(\tpl t) \mid \A'(S(\tpl t)) \ge 1 \} = \A$. 

1.
First assume that $\K$ has a model 
$\I = \langle \Delta^{\I}, \cdot^{\I} \rangle$. We prove that there exists a 
bag model of $\K'$.
To this end, consider the bag interpretation $\I' = 
\langle \Delta^{\I}, \cdot^{\I'} \rangle$ such that, for any $u, v \in 
\Delta^{\I}$ and $a \in \Ind$, 
$$
\begin{array}{rcl}
a^{\I'} & = & a^\I, \\
A^{\I'}(u) & = & \left\{\begin{array}{ll} \infty, & \text{if } u \in A^\I, \\ 0, & \text{otherwise,}\end{array}\right. \\
P^{\I'}(u, v) & = & \left\{\begin{array}{ll} \infty, & \text{if } \pair{u,v} 
\in P^\I, \\ 0, & \text{otherwise}.\end{array}\right.
\end{array}
$$
Bag interpretation $\I'$ satisfies $\A'$ and all axioms in $\T'$, so it is a 
bag model of $\K'$. Therefore, $\K'$ is satisfiable, as required. 

To complete the proof of statement 1, suppose that $\K'$ 
has a bag model $\I' = \langle \Delta^{\I'}, \cdot^{\I'} \rangle$. We construct 
an interpretation $\I = \langle \Delta^{\I'}, \cdot^{\I} \rangle$ of 
$\K$ in a similar way. For $u, v \in \Delta^{\I}$ and $a \in \Ind$, let
$$
\begin{array}{rcl}
& a^{\I} ~~=~~ a^{\I'}, \\
u \in A^\I & \text{if and only if} & A^{\I'}(u) > 0, \\
\pair{u,v} \in P^\I & \text{if and only if} & P^{\I'}(u, v) > 0.
\end{array}
$$
Same as in the previous case, $\I$ is a model of $\K$.

2.
For the forward direction, let $\tpl a \in \cert{q}{\K}$ 
for a tuple of individuals $\tpl a$, but, for the sake of 
contradiction, $\cert{q}{\K'}(\tpl a) = 0$. The 
latter means that there exists a bag model $\I'$ such that $\cert{q}{\I'}(\tpl 
a) = 0$. Consider the interpretation $\I$ constructed on the base 
of $\I'$ as in the second part of the proof of statement 1. On the one hand, it 
is a model of $\K$. On the other, $\I \not \models q(\tpl 
a)$ by construction. However, it contradicts the fact that $\tpl a \in 
\cert{q}{\K}$. Therefore, our assumption was wrong 
and $\cert{q}{\K'}(\tpl a) \geq 1$.

For the backward direction, we proceed similarly. 
For this let $\tpl a$ be a tuple of individuals and assume that
$\cert{q}{\K'}(\tpl a) \ge 1$ holds but $\tpl a \not\in \cert{q}{\K}$.
The latter implies that $\K$ has a model $\I$ such that $\I \not\models q(\tpl 
a)$. But this means that the model $\I'$ of $\K'$ constructed in the proof of 
statement 1.\ on the basis of $\I$ is such that $q^{\I'}(\tpl a) = 0$, which 
contradicts our assumption that $\cert{q}{\K'}(\tpl a) \ge 1$.
\end{proof}

\NoUnivModel*
\begin{proof}
Consider a variant of our running example where
$\T = \{\mathsf{Emp} \sqsubseteq \exists \mathsf{hasMngr}, \exists
\mathsf{hasMngr}^- \sqsubseteq \mathsf{Mngr} \}$ and $\A$ contains
$\mathsf{Emp}(\mathit{Lee})$ and $\mathsf{Mngr}(\mathit{Hill})$ once.
Consider bag interpretations $\I_1, \I_2$ defined as
\[
\begin{array}{r@{~} l r@{~} l r@{~} l r@{~} l r@{~} l r@{~} l}
\Delta^{\I_1} &= \{\mathit{Lee}, \mathit{Hill}\},&
\mathsf{Emp}^{\I_1} &= \{\mathit{Lee}\},&
\mathsf{hasMngr}^{\I_1} &= \{ (\mathit{Lee},\mathit{Hill})\},&
\mathsf{Mngr}^{\I_1} &= \{\mathit{Hill}\}\\
\Delta^{\I_2} &= \{\mathit{Lee}, \mathit{Hill}, w\},&
\mathsf{Emp}^{\I_2} &= \{\mathit{Lee}\}, &
\mathsf{hasMngr}^{\I_2} &= \{ (\mathit{Lee}, w) \}, &
\mathsf{Mngr}^{\I_2} &= \{\mathit{Hill}, w\}.\\
\end{array}
\]
Both $\I_1$ and $\I_2$ are
bag models of $\K= \langle \T, \A \rangle$.
Moreover, for $q_1 =\mathsf{hasMngr}(\mathit{Lee},\mathit{Hill})$ and
$q_2 =\exists x.\,\mathsf{Mngr}(x)$, we have
$\ans{q_1}{\I_1}(\emptytpl)=1$,
$\ans{q_1}{\I_2}(\emptytpl)=0$,
$\ans{q_2}{\I_1}(\emptytpl)=1$, and $\ans{q_2}{\I_2}(\emptytpl)=2$;
thus, neither model is universal for both $\set{q_1,q_2}$. Suppose there
is a universal model $\I$ for $\set{q_1,q_2}$. Then, since
$\ans{q_1}{\I}(\emptytpl)$ must be zero,
$(\mathit{Lee},\mathit{Hill})$ does not occur
in $\mathsf{hasMngr}^{\I}$;
since $\mathsf{Emp}(\mathit{Lee})$ is an assertion of $\A$ and
$\mathsf{Emp} \sqsubseteq \exists \mathsf{hasMngr} \in\T$, we have
$\langle \mathit{Lee},w'\rangle\in\mathsf{hasMngr}^{\I}$ for some
$w'\in\Delta^{\I}$ distinct from $\mathit{Hill}$; since
$\exists \mathsf{hasMngr}^- \sqsubseteq \mathsf{Mngr}\in\T$, it follows
$w'\in\mathsf{Mngr}^{\I}$, and hence $\ans{q_2}{\I}(\emptytpl)\ge 2$,
contradicting universality of $\I$.
\end{proof}

Following the unary representation of bags~\cite{GrumbachJCSS96}, we represent 
a bag $\Omega$ over a set $S$ using expression $\bagl \cdot \bagr$ within which 
we repeat all elements of $S$ as many times as their multiplicity in $\Omega$. 
For convenience, we shall also write a bag $\bagl a, a, b, b, b \bagr$ in the 
more compressed form $\bagl a, b\bagr_{2,3}$ where instead of repeating an 
element, we list a single occurrence and denote its multiplicity with a number 
in the appropriate subscript position of that bag. 

\QAcoNPCore*
\begin{proof}
We prove that there exists a $\dlliteCbag$ TBox $\T$ and a Boolean CQ $q$ such 
that checking whether $q^{\langle \T, \A \rangle}(\emptytpl) \ge k$ for 
an input 
bag ABox 
$\A$ and $k \in \natinfz$ is $\conp$-hard.
To prove this claim, we follow~\cite{KostylevJWS15} and reduce non 
3-colourability of undirected 
graphs (a \conp-complete problem) to query answering over $\dlliteCbag$ 
ontologies. We show that if $G = \langle V, E \rangle$ is an undirected and 
connected 
graph with no self-loops, then $G$ is not 3-colourable if and only if 
$q^{\langle \T, \A_G \rangle}(\emptytpl) \ge 3\times|V| + 2$
where $\T$ is the TBox
$ \{ Vertex \isa \exists hasColour, \exists hasColour^- \isa ACol \}$,
$\A_G$ is an ABox constructed based on $G$, and $q$ is the Boolean query
\[
q() = \exists x.\,\exists y.\,\exists z.\,\exists w.\; Edge(x, y) \wedge 
hasColour(x, z) \wedge \; hasColour(y, z) \wedge ACol(w).
\]
 
Let $\Ind \supseteq V \cup \{a, r, g, b\}$. ABox $\A_G$ is defined so that it 
contains the following assertions:
\begin{itemize}
\item[--]
$Vertex(u)$ for each $u \in V$,
 
\item[--] 
$Edge(u, v)$, $Edge(v, u)$ for each $\pair{u, v} \in E$,
 
\item[--] 
$ACol(r)$ ($|V| + 1$ times), $ACol(g)$ ($|V|$ times), $ACol(b)$ ($|V|$ times), 
for colours $r$, $g$, $b$, and
 
\item[--] 
$Vertex(a)$, $Edge(a, a)$, and $hasColour(a, r)$, for the auxiliary vertex 
$a$.
\end{itemize}

Individual $a$ corresponds to an auxiliary vertex for the purposes 
of the reduction, whereas individuals $r$, $g$, and $b$ play the role of 
colours.
The usage of $Vertex$ and $Edge$ is clear; they encode $G$. 
Role $hasColor$ plays the role of a colour assignment to the vertices of $G$; 
this is also imposed by axiom 
$Vertex \isa \exists hasColour$.
Concept $ACol$ provides a sufficient number of pre-defined colour copies that 
favours 3-colour assignments based on the colours $r$, $g$, and $b$. Any proper 
assignment of $G$ shall use at most $|V|$ times each one of these colours. 
However, if any assignment is not proper and exhausts the number of available 
colours (i.e., by assigning multiple colours to the same vertex) or uses an 
additional colour, these will have to be added to concept $ACol$ due to the 
axiom 
$\exists hasColour^- \isa ACol$,
effectively increasing its minimum cardinality. This behaviour is the one that 
we exploit in the following reduction.

We next show that 
$G$ is not 3-colourable if and only if 
$q^{\langle \T, \A_G \rangle}(\emptytpl) \ge 3\times|V| + 2$.

``$\Rightarrow$''
Let $G$ be non-3-colourable. 
Consider a model $\I$ of $\langle \T, \A_G \rangle$ (which exists since 
$\langle \T, \A_G \rangle$ is satisfiable) such that, if 
$\gamma: V \to \{r, g, b\}$ 
is an assignment of colours to the vertices of $G$ and $u \not= a$, then
$hasColour^\I(\pair{u^\I, c^\I}) = 1$ if and only if 
$\gamma(u) = c$ with $c \in \{r, g, b\}$.
Since $G$ is not 3-colourable, then, for all assignments $\gamma$, there exists 
at least an edge $\pair{u, v} \in E$ with $\gamma(u) = \gamma(v) = c$. 
Consequently, for all models $\I$ defined on the basis of $\gamma$, 
$hasColour^\I$ contains tuples
$\pair{u^\I, c^\I}$ and $\pair{v^\I, c^\I}$, and hence, the subquery of $q$
\[
q_1(x,y, z) = 
Edge(x, y) \wedge hasColour(x, z) \wedge hasColour(y, z)
\]
has at least two matches, each one contributing multiplicity $1$; one match 
corresponds to valuation 
$\{ x/u^\I, y/v^\I, z/c^\I \}$
and one to valuation
$\{ x/a^\I, y/a^\I, z/r^\I \}$.
Observe also that atom $ACol(w)$ contributes at least multiplicity 
$3 \times |V| + 1$.
Therefore, 
$q^\I(\emptytpl) \ge 2\times(3\times|V| + 1)$ 
for every model $\I$ following a proper 3-colour assignment, and hence, 
$3\times|V| + 2$ is a certain multiplicity with respect to all these models, as 
required. 
Clearly, the same statement holds for all of 
the models that add additional elements in $Vertex$, $Edge$, or assign multiple 
colours to some vertices exceeding the number of available colours. 
What is left to consider is those models that assign additional colours to 
vertices and not just one among $r$, $g$, and $b$. For such colour assignments, 
$G$ might turn out to be colourable. Suppose $G$ is 4-colourable (if it is not, 
then the above discussion carries over) and let $p \in \Ind$. 
Then, there 
exists a model that follows a 4-colour assignment 
$\gamma: V \to \{r, g, b, p\}$ such that 
$\gamma(u) \not= \gamma(v)$ for every 
$\pair{u, v} \in E$. 
Therefore, for that model we would get one match with multiplicity $1$ for 
subquery $q_1(x, y, z)$, that is, for valuation $\{ x/a^\I, y/a^\I, z/r^\I \}$).
On the other hand, given the observations above, that model would have to 
include element $p$ in the extension of $ACol$ at least once, effectively 
increasing the cardinality of $ACol$ to $3\times|V|+2$. Therefore, the 
evaluation of $q$ over that model would always give at least $3\times|V| + 2$ 
empty tuples. 
Clearly, the same holds for models that make use of further colours.
Therefore, $q^{\langle \T, \A_G \rangle}(\emptytpl) \ge 3\times|V|+2$.

``$\Leftarrow$'' 
Let $G$ be 3-colourable. It suffices to show that there exists a model $\I$ for 
which $q^\I(\emptytpl) = m$ with $m < 3\times|V| + 2$.
Since $G$ is 3-colourable, there is an assignment 
$\gamma: V \to \{r, g, b\}$ 
such that, for every 
$\pair{u, v} \in E$, $\gamma(u) \not= \gamma(v)$.
Consider an interpretation $\I_\gamma$ defined as follows:
\begin{align*}
\Delta^{\I_\gamma} & = \{ d_c \mid c \in V \cup \{ a, r, g, b \} \}, \\
c^{\I_\gamma} & = d_c \text{, for } c \in V \cup \{ a, r, g, b \},\\ 
Vertex^{\I_\gamma} & = \{ d_c \mid c \in V \cup \{ a \}\}, \\
Edge^{\I_\gamma}   & =
   \{\pair{d_u, d_v}, \pair{d_v, d_u} \mid \pair{u, v} \in E  \cup
   \{\pair{a, a}\}\},\\
hasColour^{\I_\gamma} & = 
\{ \pair{d_u, d_{\gamma(u)}} \mid u \in V \} \cup \{ \pair{d_a, d_r} \},\\
ACol^{\I_\gamma} & = \bagl d_r, d_g, d_b \bagr_{|V| + 1, |V|, |V|}.
\end{align*}
Interpretation $\I_\gamma$ is defined based on the contents of $V$, $E$, and 
the 3-colour assignment $\gamma$.
It is easy to verify that $\I_\gamma$ is a model of $\langle \T, \A_G \rangle$.
Next, we show that 
$q^{\I_\gamma}(\emptytpl) = 3\times|V| + 1$. 
First, we observe that subquery $q_1(x, y, z)$
matches exactly once (i.e., under valuation $\{ x/d_a, y/d_a, z/d_r \}$). 
This holds because $\gamma$ is a proper 3-colouring of $G$ and, for every 
$\pair{u, v} \in E$, $\gamma(u) \not= \gamma(v)$.
Note also that there are three valuations for atom $ACol(w)$
contributing multiplicity $3\times|V| + 1$ in total. Consequently,
$q^{\I_\gamma}(\emptytpl) = 3\times|V| + 1$, as desired.

\end{proof}

\newtheorem{remark}{Remark}
\begin{remark}
\label{rem:una-conp-core}
When the UNA is dropped, we can modify the definition of ABox satisfaction and 
show that a similar reduction holds for establishing $\conp$-hardness of query 
answering for $\dlliteCbag$ ontologies. Under this new definition, a bag 
interpretation $\I$ \emph{satisfies} an ABox $\A$ if:
\begin{itemize}
\item[--] 
for each concept assertion $A(a)$ in $\A$, we have
$
\sum_{a_0 \in \Ind: a_0^\I = a^\I} \A(A(a_0)) \leq A^{\I}(a^\I)
$, and

\item[--]
for each role assertion $P(a,b)$ in $\A$, we have
$
\sum_{a_0, b_0 \in \Ind: a_0^\I = a^\I, b_0^\I = b^\I} \A(P(a_0,b_0)) 
\leq P^{\I}(a^\I, b^\I).
$
\end{itemize}
Observe that under the UNA, the definition of ABox satisfaction 
(Definition~\ref{def:bag-model}) is a special case of the above, hence, 
Theorem~\ref{thm:core-conph} is still valid under this new definition. 
We now discuss the modifications that are necessary for reducing 
non-3-colourability of undirected and connected graphs without self-loops to
query answering in $\dlliteCbag$ ontologies without making the UNA. 
For this, we need to make sure that the auxiliary vertex $a$ is not interpreted 
with the same element with any of the vertices of $G$ as well as that none of 
the colours $r, g, b$ are interpreted by the same element. To ensure this, we 
employ atomic concepts $V_a$, $V_G$, $Red$, $Blue$, and $Green$ which will hold 
the auxiliary vertex $a$, the vertices of $G$, and the three colours,
respectively. Then, we make sure that no interpretation mixes their role by 
introducing pairwise disjointness axioms:
${\sf Disj}(Red, Blue)$, ${\sf Disj}(Red, Green)$, ${\sf Disj}(Blue, Green)$, 
and ${\sf Disj}(V_a, V_G)$. Last, we modify $\A_G$ to have the additional 
assertions
$V_a(a)$, $Red(r)$, $Green(g)$, $Blue(b)$, and $V_G(u)$, for every vertex $u 
\in V$.
Following exactly the argumentation used in Theorem~\ref{thm:core-conph}, we 
can show that the above reduction works if the UNA is dropped.
\end{remark}

\bigskip

\newcommand{\nelem}[2]{[#1{:}\,#2]}
\newcommand{\enum}[1]{{#1}^\texttt{e}}
\newcommand{\enumcan}[1]{\cansymb^\texttt{e}(#1)}
\newcommand{\enumpcan}[2]{\cansymb_{#2}^\texttt{e}(#1)}

An \emph{enumerated bag} (\emph{e-bag}, for short) $\Theta$ over a set $M$ is a set of pairs $\nelem{c}{m}$ with $c \in M$ and $m \in \nati$, where $\nati$ is the set of positive integers, such that if $\nelem{c}{m} \in \Theta$ then $\nelem{c}{m-1} \in \Theta$ for all $m \in \nati$. There is a straightforward one-to-one correspondence between bags and e-bags, and we denote $\enum{\Omega}$ the enumerated version of a bag $\Omega$. This notion generalises to bag interpretations: the \emph{e-bag interpretation} $\enum{\I}$ corresponding to a bag interpretation $\I = \langle \Delta^{\I}, \cdot^{\I} \rangle$ is the pair $\langle \Delta^{\I}, \cdot^{\enum{\I}} \rangle$ such that $a^{\enum \I} = a^\I$ for each individual $a$ and $S^{\enum{\I}} = \enum{(S^\I)}$ for any $S \in \mathbf C \cup \mathbf R$. The interpretation function extends to inverse roles in the same way.

An \emph{enumerated homomorphism} (\emph{e-homomorphism}) from an e-bag interpretation $\enum{\I} = \langle \Delta^{\I}, \cdot^{\enum{\I}} \rangle$ to an e-bag interpretation $\enum{\J} = \langle \Delta^{\J}, \cdot^{\enum{\J}} \rangle$ is a family $(h, h_S, \ldots)$, $S \in \mathbf C \cup \mathbf R$, of functions
$$
\begin{array}{rll}
h: & \Delta^{\I} \to \Delta^{\J}, \\
h_S: & S^{\enum{\I}} \to S^{\enum{\J}}, & \text{for all } S \in \mathbf C \cup \mathbf R,
\end{array}
$$
such that 
\begin{itemize}
\item[--] $h(a^{\enum{\I}}) = a^{\enum{\J}}$ for each $a \in \Ind$,
\item[--] $h_A(\nelem{u}{m}) = \nelem{h(u)}{\ell}$ for all $A \in \Concepts$ and $\nelem{u}{m} \in A^{\enum{\I}}$, where $\ell\in \nati$ is some number (which can be different for different $A$ and $\nelem{u}{m}$),
\item[--] $h_P(\nelem{\pair{u, v}}{m}) = \nelem{\pair{h(u), h(v)}}{\ell}$ for 
all $P \in \Roles$ and $\nelem{\pair{u, v}}{m} \in P^{\enum{\I}}$, where 
$\ell\in 
\nati$ is some number.
\end{itemize}
To handle some cases uniformly, we sometimes write $h_{P^-}(\nelem{\pair{v, 
u}}{m})$ instead of $h_{P}(\nelem{\pair{u, v}}{m})$, for $P \in \mathbf R$.

Intuitively, an e-homomorphism is a usual homomorphism that additionally establishes correspondence for each enumerated tuple of elements in each relation in $\enum\I$.

An e-homomorphism $(h, h_S, \ldots)$ 
from $\enum{\I} = \langle \Delta^{\I}, \cdot^{\enum{\I}} \rangle$ to $\enum{\J} = \langle \Delta^{\J}, \cdot^{\enum{\J}} \rangle$
is \emph{predicate-injective on individuals} $\Ind$ if, for each $u$ such that there exists $a\in \Ind$ with $h(u) = a^{\enum{\J}}$,
\begin{itemize}
\item[--] $h_A(\nelem{u}{m}) \neq h_A(\nelem{u}{\ell})$ for all $A \in \Concepts$ and all $\nelem{u}{m}, \nelem{u}{\ell} \in A^{\enum{\I}}$ with $m \neq \ell$,
\item[--] $h_R(\nelem{\pair{u, v_1}}{m}) \neq h_R(\nelem{\pair{u, v_2}}{\ell})$ 
for 
all roles $R$ and all $\nelem{\pair{u, v_1}}{m}, \nelem{\pair{u, v_2}}{\ell} 
\in 
R^{\enum{\I}}$ with $v_1 \neq v_2$ or $m \neq \ell$.
\end{itemize}


\begin{lemma}
\label{lemma:e-homomorphism}
For any $\dlliteCbag$ ontology $\K$ and any bag model $\I$ of $\K$ there exists an e-homomorphism from $\enumcan{\K}$ to $\enum{\I}$ that is predicate-injective on $\Ind$.
\end{lemma}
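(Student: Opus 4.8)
The plan is to construct the e-homomorphism layer by layer, following the inductive definition $\can{\K}=\bigcup_{i\ge 0}\pcan{\K}{i}$, while proving along the way a multiplicity-transfer property that makes each extension possible. Concretely, I would build a chain of e-homomorphisms $(h^i,h^i_S)$ from $\enumpcan{\K}{i}$ to $\enum{\I}$, each predicate-injective on $\Ind$ and each extending the previous one on the old domain and on the old enumerated tuples; the desired e-homomorphism on $\enumcan{\K}$ is then their union, which is well defined because the multiplicities in $\can{\K}$ are the monotone limits of those in the $\pcan{\K}{i}$. The engine of the argument is a claim proved by the same induction: for every $u\in\Delta^{\pcan{\K}{i-1}}$ and every concept $D$ one has $D^{\I}(h^{i-1}(u))\ge\ccl{u}{\pcan{\K}{i-1}}{\T}(D)$. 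This is exactly what guarantees that the fresh concept copies and fresh anonymous witnesses introduced at layer $i$ have somewhere to go in $\enum{\I}$.

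For the base case I would set $h^0(a)=a^{\I}$ and send the $m$-th copy of each ABox tuple to the $m$-th copy of its image; this is well defined and injective because $\I\bmodels\A$ gives $S^{\pcan{\K}{0}}(\tpl a)=\A(S(\tpl a))\le S^{\I}(\tpl a^{\I})$, while injectivity on role tuples uses the unique name assumption. The claim is then established from $\I\bmodels\T$: writing the closure value as $C_0^{\pcan{\K}{i-1}}(u)$ for a witnessing concept $C_0$ with $\T\models C_0\isa D$, I transfer the multiplicity of $C_0$ at $u$ up to $C_0^{\I}(h^{i-1}(u))$ and then invoke $C_0^{\I}\subseteq D^{\I}$, which holds by soundness of entailment since $\I\bmodels\T$. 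The two cases $C_0$ atomic and $C_0=\exists R$ are handled by the two clauses of predicate-injectivity: for an individual $u$, injectivity of $h^{i-1}_{C_0}$ (respectively, of $h^{i-1}_{R}$ on the tuples out of $u$) yields $C_0^{\I}(h^{i-1}(u))\ge C_0^{\pcan{\K}{i-1}}(u)$; for an anonymous $u$, where all multiplicities lie in $\{0,1\}$, it suffices that the single copy is mapped somewhere.

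For the inductive step I would first re-map, for each old element $u$ and concept $A$, the copies of $u$ in $A^{\enumpcan{\K}{i}}$ via $\nelem{u}{m}\mapsto\nelem{h^{i-1}(u)}{m}$; this is legal and injective because the claim gives $A^{\I}(h^{i-1}(u))\ge\ccl{u}{\pcan{\K}{i-1}}{\T}(A)=A^{\pcan{\K}{i}}(u)$. For each fresh anonymous element $w=w^j_{u,R}$ I would apply the claim with $D=\exists R$ to obtain $(\exists R)^{\I}(h^{i-1}(u))\ge 1$, choose an $R$-successor tuple of $h^{i-1}(u)$ in $\enum{\I}$, set $h^i(w)$ to its endpoint, and map the back-edge of $w$ to that tuple. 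The key point is that any such endpoint automatically carries the concept type required of $w$: since $w$ initially satisfies only $\exists R^-$, its closure at the next layer is precisely the set of concepts entailed by $\exists R^-$, and these all hold at the chosen endpoint because it is an $R$-successor inside a model $\I$ of $\T$. This is what allows the induction to continue below $w$.

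The main obstacle is preserving predicate-injectivity. Since $\I$ is arbitrary, anonymous elements cannot in general be kept off the image $a^{\I}$ of an individual, so injectivity must be secured even at an anonymous $w$ with $h^i(w)=a^{\I}$. The resolution is to route the outgoing edges of \emph{every} element, not just individuals, to \emph{distinct enumerated} successor tuples of its image in $\enum{\I}$: the claim bounds the number of source edges $(\exists R')^{\can{\K}}(u)$ by the number of available target tuples $(\exists R')^{\I}(h(u))$, so a tuple-level injection always exists, and two children landing on the same endpoint but on different copies cause no conflict. Reserving the copies used by ABox edges for the individual successors and routing anonymous children to the remaining copies keeps the role-map injective at individuals as well. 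Passing to the union over $i$ finally yields the e-homomorphism on $\enumcan{\K}$, with predicate-injectivity on $\Ind$ inherited from every finite stage.
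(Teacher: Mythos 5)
Your proposal is correct and takes essentially the same route as the paper's proof: both build the e-homomorphism inductively along the stages $\pcan{\K}{i}$ of $\can{\K}$, anchor the base case in ABox satisfaction, place each fresh witness $w^j_{u,R}$ using the bound $(\exists R)^{\I}(h(u)) \geq \ccl{u}{\pcan{\K}{i-1}}{\T}(\exists R)$ obtained by pushing multiplicities through inclusion chains, and use predicate-injectivity to count off enough unused enumerated successor tuples (the paper details this for $\pcan{\K}{0}$ and $\pcan{\K}{1}$ and dismisses deeper layers as trivial since all multiplicities there are at most $1$, which your explicit invariant handles uniformly). The only step to tighten is your appeal to ``soundness of entailment'' for the bag inclusion $C_0^{\I} \subseteq D^{\I}$: classical entailment $\T \models C_0 \isa D$ gives only inclusion of supports in a bag model, so, as the paper does, you should invoke the $\dlliteC$ chain characterisation of entailment, namely that $\T \models C_0 \isa D$ yields a sequence $C_0 \isa C_1 \isa \cdots \isa D$ of inclusions of $\T$ (or forces $C_0^{\I} = \emptybag$), each satisfied by $\I$ as a bag inclusion.
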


\begin{proof}
Let $\can{\K} = \bigcup_{i \geq 0} \pcan{\K}{i}$ with $\pcan{\K}{i} = \langle \Delta^{\pcan{\K}{i}}, \cdot^{\pcan{\K}{i}} \rangle$.
We first define a witnessing predicate-injective e-homomorphism $(h, h_S, 
\ldots)$ for the elements in $\Delta^{\pcan{\K}{0}}$, that is, on the 
(interpretations of the) individuals, then extend it to elements introduced in 
${\pcan{\K}{1}}$, and finally recursively define it on all other elements. 

For the first step, consider an individual $a \in \Ind$ and the element $u = 
a^{\pcan{\K}{0}}$. We set $h(u) = a^\I$. Then, consider any atomic concept $A$ 
such that $A^{\pcan{\K}{0}}(u) = k$, $k \in \nati$, that is, such that 
$\nelem{u}{m} \in A^{\enumpcan{\K}{0}}$ for all $m \in \nati$ with $m \leq k$. 
By the definition of $\pcan{\K}{0}$, 
$\A(A(a)) = k$. Since $\I$ is a model of $\A$, we have that $A^\I(a^\I) \geq 
k$. In other words, $\nelem{h(u)}{m} \in A^{\enum{\I}}$ for all $m \leq k$, and 
we can set $h_A(\nelem{u}{m}) = \nelem{h(u)}{m}$ for all $m$.
Consider now individuals $a, b \in \Ind$ with corresponding elements $u = 
a^{\pcan{\K}{0}}$ and $v = b^{\pcan{\K}{0}}$ and an atomic role $P$ such that 
$P^{\can{\K}{0}}(u, v) = k$, $k \in \nati$, that is, such that $\nelem{\pair{u, 
v}}{m} \in P^{\enumpcan{\K}{0}}$ for all $m \in \nati$ with $m \leq k$.
By the definition of $\pcan{\K}{0}$, we have that $\A(P(a, b)) = k$. Since $\I$ 
is a model of $\A$, we have that $P^\I(a^\I, b^\I) \geq k$. In other words, 
$\nelem{\pair{h(u), h(v)}}{m} \in P^{\enum{\I}}$ for all $m \leq k$, and, 
similarly to the concept case, we set $h_P(\nelem{\pair{u, v}}{m}) = 
\nelem{\pair{h(u), h(v)}}{m}$ for all $m$.

For the second step, consider an individual $a \in \Ind$ with its 
interpretation $u = a^{\pcan{\K}{0}}$ and a role $P \in \Roles$ such that 
$\ccl{u}{\pcan{\K}{0}}{\T}(\exists P) = k$ for $k \in \nati$, but 
$(\exists P)^{\pcan{\K}{0}}(u) = l < k$ (the case where $P$ is not an atomic 
role is analogous). 
Then, 
$\delta = \ccl{u}{\pcan{\K}{0}}{\T}(\exists P) - (\exists P)^{\pcan{\K}{0}}(u) 
> 0$, hence $\Delta^{\pcan{\K}{1}} = \Delta^{\pcan{\K}{0}} \cup 
\{w^1_{u,P},\dots,w^\delta_{u,P}\}$ where $w^j_{u,P}$ are fresh anonymous 
elements.
Moreover, $P^{\pcan{\K}{1}}$ contains $P^{\pcan{\K}{0}}$ plus tuples 
$\pair{u, w^1_{u, P}}, \dots, \pair{u, w^\delta_{u, P}}$.
We next show that $h$ can be extended to all anonymous elements 
$w^j_{u,P}$ introduced at this step as a result of some $u$ and role $P$ with 
the above properties such that $(h, h_S,\dots)$, $S \in \Concepts \cup 
\Roles$, is predicate injective on $\Ind$.
Because 
$\ccl{u}{\pcan{\K}{0}}{\T}(\exists P) = k$ and 
$(\exists P)^{\pcan{\K}{0}}(u) = l < k$, there exists a sequence of 
concepts $C_0, \ldots, C_n$ with $C_n = \exists P$
such that 
$C_{i-1} \isa C_i \in \T$ for all $i \in [1, n]$ and 
$C_{0}^{\pcan{\K}{0}}(u) = k$. 
Since $(h, h_S,\dots)$ is predicate injective on $\Ind$ at the first step and 
$h(u) = a^\I$, we have 
$C_{0}^{\I}(a^\I) \ge k$. Because $\I$ is a model of $\K$, it satisfies all 
axioms in $\T$, hence, $C_{i}^{\I}(a^\I) \ge k$, and as a result 
$(\exists P)^{\I}(a^\I) \ge k$. In other words, 
$P^{\enum{\I}}$ contains at least $k$ pairs
$\nelem{\pair{a^\I, z_i}}{m_i}$, $i \in [1, k]$.
Observe that from the first step and
every pair 
$\nelem{\pair{u, v_1}}{m}, \nelem{\pair{u, v_2}}{m'} \in P^{\enumpcan{\K}{0}}$ 
with 
$v_1 \not= v_2$ or $m \not= m'$, we have
$h_P(\nelem{\pair{u, v_1}}{m}) \not= h_P(\nelem{\pair{u, v_2}}{m'})$.
Because $P^{\enumpcan{\K}{0}}$ contains $l$ such distinct tuples and 
$k = \delta + l$,
there are at least $\delta$ pairs
$\nelem{\pair{a^\I, r_{1}}}{n_1},\dots,\nelem{\pair{a^\I, 
r_{\delta}}}{n_\delta}$ in $P^{\enum{\I}}$ 
for which there is no 
$\nelem{\pair{u, v}}{m} \in P^{\enumpcan{\K}{0}}$ that maps to them under $h_P$.
Therefore, we can extend $h$ such that $h(w^j_{u,P}) = r_{j}$ and 
set $h_P$ so that 
$h_P(\nelem{\pair{u, w^j_{u,P}}}{1}) = \nelem{\pair{a^\I, r_{j}}}{n_j}$. 
Suppose now that there exists $w^j_{u,P}$ such that $h(w^j_{u,P}) = b^\I$ with 
$b \in \Ind$. Since $P^{\pcan{\K}{1}}(u, w^j_{u,P}) = 1$, we have
$\nelem{\pair{u, w^j_{u, P}}}{1} \in P^{\enumpcan{\K}{1}}$ and
$\nelem{\pair{w^j_{u, P}, u}}{1} \in (P^-)^{\enumpcan{\K}{1}}$, hence,
the requirement for $h_{P^-}$ w.r.t. $w^j_{u, P}$ is trivially satisfied.
Finally, consider an element $u = a^{\pcan{\K}{0}}$ such that
$\ccl{u}{\pcan{\K}{0}}{\T}(A) > A^{\pcan{\K}{0}}(u)$ with $A \in \Concepts$. 
In such a case, 
$A^{\pcan{\K}{1}}(u)$ is set to $\ccl{u}{\pcan{\K}{0}}{\T}(A)$.
Given the above discussion, it is trivial to verify that
$h_A$ satisfies the required condition on the pairs
$\nelem{u}{m} \in A^{\enumpcan{\K}{1}}$.
As a result of all the above, we have shown that $(h, h_S, \dots)$ is predicate 
injective on $\Ind$ at the second step as well.

Last, observe that for all $i > 1$, and for all $S \in \Concepts \cup \Roles$, 
extensions $S^{\pcan{\K}{i}}$ contain $S^{\pcan{\K}{i-1}}$ plus tuples $\tpl t$
mentioning only anonymous elements, for which we know by definition that 
$S^{\pcan{\K}{i}}(\tpl t) = 1$. Therefore, $h$ can be trivially extended to 
these anonymous elements so that $(h, h_S, \dots)$ is predicate injective on 
$\Ind$ at step $i$.
\end{proof}

Since a Boolean CQ $q$ can be seen as a bag of atoms, we can consider its corresponding \emph{Boolean enumerated CQ} (\emph{e-CQ}), which is the e-bag $\enum{q}$. We call the elements of $\enum{q}$ \emph{enumerated atoms} (\emph{e-atoms}).
For the following definition, it is convenient to partition a Boolean CQ $q$ to 
the subqueries $q_S$ each of which consists of all atoms in $q$ over 
atomic concept or role $S$ (with corresponding multiplicities)
and subquery $q_=$ consisting of all equalities in $q$. 
An \emph{enumerated valuation} (\emph{e-valuation}) of a Boolean e-CQ 
$\enum{q}$, for $q() = \exists \tpl y. \, \phi(\tpl y)$, over an e-bag 
interpretation $\enum{\I} = \langle \Delta^{\I}, \cdot^{\enum{\I}} \rangle$ is 
a family $(\nu, \nu_S, \ldots)$, $S \in \mathbf C \cup \mathbf R$, of functions
$$
\begin{array}{rll}
\nu: &  \tpl y \cup \Ind \to \Delta^{\I}, \\
\nu_S: & \enum{q}_S \to S^{\enum{\I}}, & \text{for all } S \in \mathbf C \cup \mathbf R,
\end{array}
$$
such that 
\begin{itemize}
\item[--] $\nu(a) = a^{\enum{\I}}$ for each $a \in \Ind$,
\item[--] $\nu(y) = \nu(t)$ for all equality e-atoms $\nelem{y = t}{m} \in 
\enum{q}_=$,
\item[--] $\nu_A(\nelem{A(t)}{m}) = \nelem{\nu(t)}{\ell}$ for all $A \in 
\Concepts$ and $\nelem{A(t)}{m} \in \enum{q}_A$, where $\ell\in \nati$ is some 
number, and
\item[--] $\nu_P(\nelem{P(t_1, t_2)}{m}) = \nelem{(\nu(t_1), \nu(t_2))}{\ell}$ 
for all $P \in \Roles$ and $\nelem{P(t_1, t_2)}{m} \in \enum{q}_P$, where 
$\ell\in \nati$ is some number.
\end{itemize}
Similarly to the case of e-homomorphisms, we sometimes write 
$\nu_{P^-}(\nelem{P^-(t_1, t_2)}{m})$ instead of $\nu_{P}(\nelem{P(t_1, 
t_2)}{m})$, for $P \in \mathbf R$.

Intuitively, a Boolean CQ can be seen as a bag interpretation with terms 
(variables and individuals) in the domain. Then, an e-valuation is just an 
e-bag homomorphism from the enumerated version of this special bag 
interpretation to a normal e-bag interpretation.
It is straightforward to check that the number of e-valuations of a Boolean 
e-CQ $\enum{q}$ over an e-bag interpretation $\enum{\I}$ is precisely the 
multiplicity $\cert{q}{\I}(\emptytpl)$ of the empty tuple in the evaluation of 
$q$ over $\I$.

The following lemma says that if two e-valuations over the bag canonical model 
coincide on all the (enumerated copies of the) atoms of a rooted CQ that 
involve terms evaluating to (the interpretations of) individuals, then they are 
the same e-valuation.

\begin{lemma}
\label{lemma:difference-locality}
Let $q$ be a rooted Boolean CQ and $\K$ be a $\dlliteCbag$ ontology. If two 
e-valuations $(\nu^1, \nu^1_{S'}, \ldots)$ and $(\nu^2, \nu^2_{S'}, \ldots)$ of 
$\enum{q}$ over $\enumcan{\K}$ are different, then there exist an individual $a 
\in \Ind$, e-atom $\nelem{S(\tpl t)}{m} \in \enum{q}$ and number $i \in \{1, 
2\}$ such that $\nu^i(a) \in \nu^i(\tpl t)$ and $\nu^1_S(\nelem{S(\tpl t)}{m}) 
\neq \nu^2_S(\nelem{S(\tpl t)}{m})$.
\end{lemma}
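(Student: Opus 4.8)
The plan is to prove the contrapositive: assuming that \emph{no} such witness exists, I will show that the two families coincide. The hypothesis then reads: for every e-atom $\nelem{S(\tpl t)}{m}$ that is \emph{anchored} under $\nu^1$ or under $\nu^2$ --- meaning that some component of the tuple $\nu^i(\tpl t)$ is (the interpretation of) an individual --- we have $\nu^1_S(\nelem{S(\tpl t)}{m}) = \nu^2_S(\nelem{S(\tpl t)}{m})$. My goal is to upgrade this agreement on anchored atoms to full agreement of $(\nu^1,\nu^1_S,\dots)$ and $(\nu^2,\nu^2_S,\dots)$.

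The engine of the proof is a rigidity property of the anonymous forest of $\can\K$, read off from Definition~\ref{def:canon}. First, each anonymous element $w^j_{u,R}$ has a \emph{unique} parent $u$. Second, at the step it is introduced, $w^j_{u,R}$ satisfies only $\exists R^-$, with multiplicity $1$; hence its concept closure $\ccl{\cdot}{\cdot}{\T}$ is bounded by $1$, so for any anonymous $u$ the number $\delta$ in Definition~\ref{def:canon} never exceeds $1$ and $u$ has \emph{at most one} $R$-successor per role $R$. From these I would prove the key fact (F): for any anonymous element $e$ and role $R_0$, there is at most one $R_0$-predecessor and at most one $R_0$-successor of $e$ in $\can\K$ (using that the creating role of $e$ can never equal the inverse of the role that would yield a second successor, since that successor is never generated); and every concept- or role-tuple of $\can\K$ mentioning an anonymous element has multiplicity at most $1$, the only higher multiplicities occurring for concepts on, and roles between, individuals.

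Using (F), I would show $\nu^1$ and $\nu^2$ agree on all terms by a propagation along the Gaifman graph seeded at individuals. Call a term \emph{settled} if the two valuations coincide on it. Any term mapping to an individual under $\nu^1$ occurs, via its equivalence class (by safety), in a non-equality atom that is then anchored under $\nu^1$, so the hypothesis forces agreement on that atom and the term is settled. For the inductive step I would prove that settledness crosses every binary atom $\beta$ joining a settled $r'$ to a term $r$: if $\nu^1(r')=\nu^2(r')$ is an \emph{individual}, then $\beta$ is anchored and the hypothesis gives $\nu^1(r)=\nu^2(r)$; if it is \emph{anonymous}, then by (F) that element has a unique $R_0$-predecessor and a unique $R_0$-successor, and since both $\nu^1(r)$ and $\nu^2(r)$ must be this neighbour, they coincide. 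As $q$ is rooted and Boolean, each connected component of its Gaifman graph contains an individual, so the propagation reaches every term; thus $\nu^1=\nu^2$ as base valuations. (Rootedness is essential precisely here: a component with no individual offers no seed, and indeed the statement fails without it.)

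Finally I would match the functions $\nu^i_S$. Writing $\nu^1=\nu^2=:\nu$, both $\nu^1_S$ and $\nu^2_S$ send $\nelem{S(\tpl t)}{m}$ to $\nelem{\nu(\tpl t)}{\ell}$ with the \emph{same} base tuple, so they can differ only in the index $\ell$; a differing index requires $S^{\can\K}(\nu(\tpl t))\ge 2$, which by (F) forces $\nu(\tpl t)$ to consist solely of individuals. But then the atom is anchored, and the hypothesis forbids any disagreement. Hence $\nu^1_S=\nu^2_S$ for all $S$, the two e-valuations are equal, and the contrapositive is established. I expect the main obstacle to be fact (F), and within it the uniqueness of role-predecessors and role-successors of anonymous elements; once (F) is in place, the propagation and the index-matching step are routine.
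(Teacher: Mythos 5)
Your proposal is correct and takes essentially the same route as the paper's own proof: the paper likewise rests on your fact (F) --- that anonymous elements of $\can{\K}$ have at most one $R$-neighbour per role $R$, always with multiplicity $1$ --- and propagates agreement outward from atoms whose images touch individuals, using rootedness to reach the disagreement; it merely organises this as a minimal-counterexample argument along a single path of e-atoms from an individual to a differing atom, rather than your contrapositive ``settling'' of all terms followed by the index-matching step. The difference is purely organisational (your write-up is in fact more explicit about why $(\exists R)^{\can{\K}}(u)\le 1$ holds for anonymous $u$, which the paper asserts ``by construction''), not mathematical.
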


\begin{proof}
Let e-valuations $(\nu^1, \nu^1_{S'}, \ldots)$ and $(\nu^2, \nu^2_{S'}, 
\ldots)$ of $\enum{q}$ over $\enumcan{\K}$ be different, but, for the sake of 
contradiction, $\nu^1_S(\nelem{S(\tpl t)}{m}) = \nu^2_S(\nelem{S(\tpl t)}{m})$ 
for all $a \in \Ind$, $\nelem{S(\tpl t)}{m} \in \enum{q}$ and $i \in \{1, 2\}$ 
such that $\nu^i(a) \in \nu^i(\tpl t)$. Since the e-valuations are different, 
there exists
$\nelem{S(\tpl t)}{m} \in \enum{q}$ such that $\nu^1_S(\nelem{S(\tpl t)}{m}) \neq \nu^2_S(\nelem{S(\tpl t)}{m})$. Moreover, by assumption $\tpl t$ consists of only variables. Suppose that $S(\tpl t)$ is $P(x_1, x_2)$, where $P \in \Roles$ (we do it without loss of generality, because the case of $A(x)$ for $A \in \Concepts$ can be handled in the same way). 

Boolean CQ $q$ is rooted, so there exists a sequence 
$$
\nelem{R_1(t_0, t_1)}{m_1}, \nelem{R_2(t_1, t_2)}{m_2}, \ldots, \nelem{R_k(t_{k-1}, t_k)}{m_k}
$$
 of e-atoms such that $t_0 \in \Ind$, $\nelem{R_k(t_{k-1}, t_k)}{m_k}$ is either $\nelem{P(x_1, x_2)}{m}$ or $\nelem{P^-(x_2, x_1)}{m}$, and for each $j = 1, \ldots, k$ either $\nelem{R_j(t_{j-1}, t_j)}{m_j}$ is in $\enum{q}$, if $R_j$ is an atomic role, or $\nelem{P_j(t_{j}, t_{j-1})}{m_j}$ is in $\enum{q}$, if $R_j = P_j^-$. 
 
 We claim that 
 \begin{equation}
 \label{enum_anch_eq}
 \nu^1_{R_j}(\nelem{R_j(t_{j-1}, t_j)}{m_j}) = \nu^2_{R_j}(\nelem{R_j(t_{j-1}, t_j)}{m_j})
 \end{equation}
 for all $j = 1, \ldots, k$ (which, in particular, contradicts our assumption on $\nelem{P(x_1, x_2)}{m}$). To prove this claim, suppose for the sake of contradiction that it is not the case, and let $j \in \{1, \ldots, k\}$ be the smallest number such that \eqref{enum_anch_eq} does not hold.  
By assumption, we know that $\nu^i(t_{j-1}) \neq \nu^i(a)$ for both $i = 1, 2$ 
and any $a \in \Ind$ (therefore, $j \neq 1$, because $t_0 \in \Ind$). However, 
since $j$ is the smallest number, $\nu^1(t_{j-1}) = \nu^2(t_{j-1})$. So, the 
element $u = \nu^1(t_{j-1})$ in the bag canonical model $\can{\K} = \bigcup_{i 
\geq 0} \pcan{\K}{i}$ was introduced not in $\pcan{\K}{0}$, which implies, by 
construction, that $(\exists R_j)^{\can{\K}}(u) \leq 1$. In fact, since 
$(\nu^1, \nu^1_{S'}, \ldots)$ is an e-valuation, $(\exists R_j)^{\can{\K}}(u) = 
1$, that is, there exists just one $v \in \Delta^{\can{\K}}$ such that 
$R_j^{\can{\K}}(u, v) \geq 1$, and, moreover, $R_j^{\can{\K}}(u, v) = 1$. In 
other words, it holds that $\nelem{(u, v)}{1} \in R_j^{\enumcan{\K}}$, but 
$\nelem{(u, v)}{2} \notin R_j^{\enumcan{\K}}$.
Since $(\nu^1, \nu^1_{S'}, \ldots)$ and $(\nu^2, \nu^2_{S'}, \ldots)$ are 
e-valuations, $\nu^1_{R_j}$ and $\nu^2_{R_j}$ send $\nelem{R_j(t_{j-1}, 
t_j)}{m_j}$ to some enumerated pairs in $R_i^{\enumcan{\K}}$, which, by 
assumption, are different. However, we also know that $\nu^1(t_{j-1}) = 
\nu^2(t_{j-1})$, so the only possibility for both 
$\nu^1_{R_j}(\nelem{R_j(t_{j-1}, t_j)}{m_j})$ and 
$\nu^2_{R_j}(\nelem{R_j(t_{j-1}, t_j)}{m_j})$ is $\nelem{(u, v)}{1}$. 
Therefore, our assumption on existence of $j$ was wrong and 
\eqref{enum_anch_eq} indeed holds for all $j$. In particular, it holds for $j = 
k$, which contradicts the fact that  $\nu^1_P(\nelem{P(x_1, x_2)}{m}) \neq 
\nu^2_P(\nelem{P(x_1, x_2)}{m})$. Therefore, our assumption on $(\nu^1, 
\nu^1_{S'}, \ldots)$ and $(\nu^2, \nu^2_{S'}, \ldots)$ was wrong, and the lemma 
is proven.
\end{proof}

Having Lemmas~\ref{lemma:e-homomorphism} and~\ref{lemma:difference-locality} at 
hand, we are ready to prove that for $\dlliteCbag$ ontologies rooted queries 
can be evaluated over the bag canonical model.

\RootedUniversality*
\begin{proof}
First, note that it is enough to consider only Boolean rooted CQs, because the 
required property for a non-Boolean rooted CQ  $q(\tpl x)$ follows from the 
property for all Boolean CQs obtained from $q(\tpl x)$ by replacing variables 
$\tpl x$ by individuals from $\Ind$.

For a Boolean rooted CQ $q$ it is enough to show that for any $\dlliteCbag$ 
ontology $\K$, any bag model $\I$ of $\K$ and any e-valuation $(\nu, \nu_S, 
\ldots)$ of $q$ over $\enumcan \K$ there exists a unique e-valuation $(\nu', 
\nu'_S, \ldots)$ of $q$ over $\enum \I$. By Lemma~\ref{lemma:e-homomorphism} we 
know that
there exists an e-homomorphism $(h, h_S, \ldots)$ from $\enumcan{\K}$ to $\enum{\I}$ that is predicate-injective on $\Ind$.
Therefore, we can take the composition $(\nu, \nu_S, \ldots) \circ (h, h_S, 
\ldots) = (\nu \circ h, \nu_S \circ h_S, \ldots)$ as $(\nu', \nu'_S, \ldots)$; 
indeed, the result of this composition is an e-valuation of $q$ over $\enum \I$ 
and, by Lemma~\ref{lemma:difference-locality}, this result is
unique throughout e-valuations of $q$ over $\enumcan \K$.
\end{proof}

\QABoundedCan*
\begin{proof}
Let $q$ be the CQ $q(\tpl x) = \exists \tpl y.\, \phi(\tpl x, \tpl y)$.
First note that because CQs are safe and equalities between individuals are not
allowed, $\phi(\tpl x, \tpl y)$ contains at least one atom, thus, $n \ge 1$.
Observe that $\pcan{\K}{n}$ is a subinterpretation of $\can\K$, hence, from the 
monotonicity property of CQs, we have $q^{\pcan{\K}{n}} \subseteq q^{\can\K}$. 
To prove the inverse inclusion, we show that interpretations 
$\pcan{\K}{k}$ with $k>n$ do not 
contribute to the bag answers $q^{\can\K}$, and as a result, they can be 
disregarded.
In other words, we prove that for every tuple of individuals 
$\tpl a$ and every valuation 
$\lambda: \tpl x \cup \tpl y \cup \Ind \to \Delta^{\can\K}$
with
$\lambda(\tpl x) = \tpl a$ 
such that there exist a number $k > n$ 
and an atom $S_k({\tpl t}_k)$ in $\phi(\tpl x, \tpl y)$ with
$S_k^{\pcan{\K}{k}}(\lambda({\tpl t}_k)) >  
S_k^{\pcan{\K}{n}}(\lambda({\tpl t}_k))$, it holds that
$\prod_{S(\tpl t) \text{ in } \phi(\tpl x, \tpl y)} 
S^{\pcan{\K}{k}}(\lambda(\tpl t)) = 0$.
By definition of canonical models, for $k > n \ge 1$, 
interpretation $\pcan{\K}{k}$ differs from $\pcan{\K}{k-1}$ in that it contains
a number of tuples not present in $\pcan{\K}{k-1}$ having multiplicity 1 and 
mentioning only anonymous elements.
Hence, inequality 
$S_k^{\pcan{\K}{k}}(\lambda({\tpl t}_k)) >  
S_k^{\pcan{\K}{n}}(\lambda({\tpl t}_k))$
effectively means that we are considering only valuations that send an atom of 
$\phi(\tpl x, \tpl y)$ to a tuple of anonymous elements of $\can\K$ added after 
step $n$. 
Suppose by contradiction that there are $\tpl a$ and $\lambda$ satisfying the 
above criteria but 
$\prod_{S(\tpl t) \text{ in } \phi(\tpl x, \tpl y)} 
S^{\pcan{\K}{k}}(\lambda(\tpl t)) \ge 1$. 
This means that $\lambda$ satisfies all equalities of $q$ and for every atom 
$S(\tpl t)$ of $q$, $S^{\pcan{\K}{k}}(\lambda(\tpl t)) \ge 1$.
Because $q$ is rooted, every connected component of the Gaifman graph of 
$q$ has a node, that is, an equivalence class, that mentions a free variable or 
an individual. 
Consider the component of $q$ that contains atom $S_k({\tpl t}_k)$ and the 
equivalence class $\eqc t$ of this component that contains a free variable or 
an individual.
Because CQs are safe by definition, this component
contains an atom $P({\tpl t}')$ mentioning a term in $\eqc t$. 
As a result, $\lambda({\tpl t}')$ contains at least one individual, which, 
given that $\lambda({\tpl t}_k)$ is a tuple of anonymous elements,
implies that $P({\tpl t}')$ and $S_k({\tpl t}_k)$ are different atoms.
By definition of canonical models, we know that $\pcan{\K}{1}$ is the 
subinterpretation of $\can\K$ containing tuples with at least one individual, 
hence we derive that 
$P^{\pcan{\K}{1}}(\lambda({\tpl t}')) \ge 1$. 
But then, since the image of $P({\tpl t}')$ under $\lambda$ falls into 
$\pcan{\K}{1}$ while the image of $S_k({\tpl t}_k)$ under $\lambda$ falls into 
$\pcan{\K}{k}$ but not into $\pcan{\K}{n}$ (which implies the same for all 
subinterpretations of $\pcan{\K}{n}$),  
and both atoms belong to the same connected component, it means that
$\phi(\tpl x, \tpl y)$ contains conjunction 
$\bigwedge_{j = 1}^k S_j({\tpl t}_j)$ 
such that
\emph{(i)} atom $S_1({\tpl t}_1)$ is connected with $P({\tpl t}')$,
\emph{(ii)}
${\tpl t}_j \cap {\tpl t}_{j+1} \not= \emptyset$, for $1 \leq j < k$, and
\emph{(iii)}
$S_j^{\pcan{\K}{j}}(\lambda({\tpl t}_j)) \ge 1$ and 
$S_j^{\pcan{\K}{j-1}}(\lambda({\tpl t}_j)) = 0$, for $j \in [1, k]$.
In other words, the image of each one of the atoms under $\lambda$ 
falls respectively onto tuples created in 
$\pcan{\K}{1}, \pcan{\K}{2},\dots,\pcan{\K}{k}$. 
But then, this means that $q$ contains at least $k$ atoms, which is a 
contradiction given that $k > n$.
\end{proof}

\CQsRewriteUCQs*
\begin{proof}
First we prove the claim for max unions of CQs.
Consider the $\dlliteCbag$ TBox 
$\T = \{A \isa \exists R,\ \exists R^- \isa B \}$,
the rooted CQ
$q(x) = \exists y.\, R(x, y) \wedge B(y)$, and
the $\dlliteCbag$ ABox 
$\A = \bagl A(a), A(a), A(a), R(a, b), R(a, b), B(b), B(b), B(b) \bagr$
and let
$\K = \langle \T, \A \rangle$.
Then, $\can{\K}$ is such that
\[
\begin{array}{l @{~=~} l l @{~=~} l}
\Delta^{\can{\K}} & \Ind \cup \{w_{a, R}\}, & A^{\can{\K}} & \bagl a, a, 
a\bagr,\\
R^{\can{\K}} & \bagl (a, b), (a, b), (a, w_{a, R}) \bagr, & 
B^{\can{\K}} & \bagl b, b, b, w_{a, R} \bagr.
\end{array}
\]
Evaluating $q$ over $\can{\K}$, we get $q^{\can{\K}}(a) = 7$ for the 
individual $a$. 
Suppose now that there exists a rewriting of $q$ to a max union of CQs and let 
$q'(x) = q_1(x) \vee \cdots \vee q_n(x)$ be such a rewriting where 
$q_1,\dots,q_n$ are CQs. This means that $\bigcup_{i=1}^n q_i^{\can{\langle 
\emptyset, \A \rangle}} = q^{\can{\K}}$ or, alternatively, 
that there exists $i \in [1, n]$ with 
$q_i^{\can{\langle \emptyset, \A \rangle}} = q^{\can{\K}}$.
Observe that $\A$ contains three distinct assertions with multiplicities $3$, 
$2$, and $3$. Therefore, whenever there is a valuation for the terms of 
$q_i$ that maps an atom of $q_i$ to one of these assertions, the multiplicity 
is either $2$ or $3$. As a result and because $q_i$ is a CQ, any valuation of 
$q_i$ contributes to $q_i^{\can{\langle \emptyset, \A \rangle}}(a)$ a 
multiplicity that is a multiple of $2$ or $3$. 
Since $7$ is prime, there can be no valuation contributing a multiplicity of 
$7$. However, $7$ can be expressed as the sums $2 + 2 + 3$ or $2 \times 2 + 3$.
For the former sum, this means that there exist three distinct valuations 
contributing to $q_i^{\can{\langle \emptyset, \A \rangle}}(a)$ multiplicities 
$2$, $2$, and $3$, respectively, which is 
clearly impossible given the fact that to get $2$, query $q_i$ must be set 
equal to $\exists y.\, R(x, y)$, which excludes the possibility of getting a 
multiplicity of $3$.
For the latter sum, this means that there exist two distinct valuations 
contributing to $q_i^{\can{\langle \emptyset, \A \rangle}}(a)$ multiplicities 
$4$ and $3$, respectively, which is again impossible given the fact that to get 
$4$, query $q_i$ must be set equal to 
$\exists y.\, \exists z.\, R(x, y) \wedge R(x, z)$
(another possibility would have been to use the same variable for $y$ and $z$, 
but the argumentation stays the same), which excludes the possibility of 
getting a multiplicity of $3$.

We now prove the claim for arithmetic unions of CQs by building on the 
observations made in the proof above.
Consider the $\dlliteCbag$ TBox 
$\T' = \T \cup \{\ C \isa \exists P,\ \exists P^- \isa D \}$,
the rooted CQ
$q'(x, z) = \exists y.\, \exists u.\, 
R(x, y) \wedge B(y) \wedge P(z, u) \wedge D(u)$, and
the $\dlliteCbag$ ABox 
$\A' = \A \cup \bagl C(a), P(a, b), D(b) \bagr_{8,8,8}$
and let
$\K' = \langle \T', \A' \rangle$.
Then, $\can{\K'}$ is such that
\[
\begin{array}{l @{~=~} l l @{~=~} l}
\Delta^{\can{\K'}} & \Ind \cup \{w_{a, R}\}, & A^{\can{\K'}} & \bagl a, a, 
a\bagr,\\
R^{\can{\K'}} & \bagl (a, b), (a, b), (a, w_{a, R}) \bagr, & 
B^{\can{\K'}} & \bagl b, b, b, w_{a, R} \bagr, \\
P^{\can{\K'}} & \bagl (a, b) \bagr_8, & C^{\can{\K'}} & \bagl a \bagr_8,\\
D^{\can{\K'}} & \bagl b \bagr_8.
\end{array}
\]
First, observe that $\T'$ contains $\T$ plus a copy of the axioms in $\T$ with 
their predicates renamed. Hence, $\T'$ can be seen as having two disconnected 
parts.
Second, $q'$ has two rooted connected components the first of which, say 
$q_1(x)$, is query $q$ from the previous part of the proof, while the second, 
say $q_2(z)$, is an isomorphic query of $q$ with the predicates renamed 
according to the one-to-one mapping $f = \{(A, C), (R, P), (B, D) \}$.
Based on these observations, we draw the following conclusions:
\emph{(i)} the multiplicity of a tuple $(c_1, c_2)$ of individuals in 
$(q')^{\can{\K'}}$ is the result of multiplying numbers $q_1^{\can{\K'}}(c_1)$ 
and 
$q_2^{\can{\K'}}(c_2)$;
\emph{(ii)} a rewriting of $q'$ into an arithmetic union of CQs exists if and 
only if a rewriting for $q_1$ and $q_2$ exists;
\emph{(iii)} the rewritings of $q_1$ and $q_2$ should have the same number of 
CQs, which should be identical up to renaming of variables and predicates 
based on $f$.
Consider now the evaluation of $q'$ over $\can{\K'}$. This leads to a bag 
containing just tuple $(a, a)$ with multiplicity
$(q')^{\can{\K'}}((a, a)) = q_1^{\can{\K'}}(a) \times q_2^{\can{\K'}}(a) = 7 
\times 64$. 
Let also $q'_1(x)$ and $q'_2(z)$ be the rewritings for $q_1(x)$ and $q_2(z)$, 
respectively. Given the discussion in the first part of the proof, to get 
multiplicity $7$ for $q_1^{\can{\K'}}(a)$, we have only two ways: either as the 
sum of $2 + 2 + 3$ or as the sum $4 + 3$. 
For the former, $q'_1(x)$ should be equal to 
$\exists y.\, R(x, y) \veedot \exists y.\, R(x, y) \veedot A(x)$, while for the 
latter, $q'_1(x)$ should be equal to 
$\exists y_1.\, \exists y_2.\, R(x, y_1) \wedge R(x, y_2) \veedot A(x)$.
By construction, both of these queries when evaluated over 
$\can{\langle \emptyset, \A'\rangle}$ return the correct multiplicity for 
$q_1^{\can{\K'}}(a)$ and there are no other queries with this property. 
However, evaluating their identical versions up to renaming of variables and 
predicates based on $f$ over
$\can{\langle \emptyset, \A'\rangle}$, that is, queries 
$\exists u.\, P(z, u) \veedot \exists u.\, P(z, u) \veedot C(z)$
and
$\exists u_1.\, \exists u_2.\, P(z, u_1) \wedge P(z, u_2) \veedot C(z)$,
we get, respectively, multiplicity $24$ and $72$, both of which are different 
from 
$q_2^{\can{\K'}}(a) = 64$.
\end{proof}


\newcommand{\toAlg}[1]{E_{#1}}
\newcommand{\valref}[2]{\mathsf{ref}(#1,#2)}

Let $\cal S$ be a finite set of concept and role symbols.
We say that a bag interpretation 
$\langle \Delta^\I, \cdot^\I\rangle$ 
is finite relative to $\cal S$ if, 
for every $S \in \cal S$, bag $S^\I$ is finite.
Finite bag interpretations relative to a finite set $\cal S$ correspond to 
\emph{bag database instances} $I$ over \emph{bag schemas} $\cal S$ with 
\emph{domains} $\Delta^\I$ as these where defined in~\cite{GrumbachJCSS96}. 
Hence, in the following, given a finite set $\cal S$ and a bag interpretation  
$\I$ that is finite relative to $\cal S$, we denote by $I_\I$ the corresponding 
bag database instance.
Also, given a bag database instance $I_\I$ and a $\balgonee$ algebra expression 
$E$, we denote by $E(I_\I)$ the bag corresponding to the evaluation of $E$ over 
$I_\I$.
\begin{restatable}{proposition}{CalcToAlg} \label{prop:calc-to-alg}
  Let $\I$ by any bag interpretation that is finite relative to a finite set 
  $\cal S$.
  For each $\balgonee$-query $q$ there is a $\balgonee$ algebra expression $E$
  such that
  $\ans q\I= E(I_\I)$.
\end{restatable}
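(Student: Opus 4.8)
The plan is to prove the statement by structural induction on the $\balgonee$-query $q(\tpl x)$, constructing $E$ together with a fixed correspondence between the answer variables in $\tpl x$ and the attributes (columns) of the relation computed by $E$. Throughout, I identify each individual $a\in\Ind$ with its image $a^\I\in\Delta^\I$ (legitimate under the UNA) and prove the pointwise claim $\ans q\I(\tpl a)=E(I_\I)(\tpl a^\I)$ for every tuple $\tpl a$ over $\Ind$; this is exactly the asserted bag identity once the answer bag over $\Ind$ is transported along $\tpl a\mapsto\tpl a^\I$. I also carry along the invariant that $\ans q\I$ is a finite bag: since $\I$ is finite relative to $\mathcal S$ every base relation $S^\I$ is finite, and all the algebra operations below preserve finiteness, so every intermediate bag has finite support and finite multiplicities.

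For the base case $q(\tpl x)=S(\tpl t)$ I take $E$ to be a selection applied to the base relation $S$, pinning each position of $\tpl t$ that carries a constant to that constant and equating any two positions that carry the same variable, followed by a duplicate-preserving projection that keeps, for each answer variable, one column in which it occurs (reordered to match $\tpl x$). Because the selection has already forced every dropped column to be determined by a retained one (equal to a kept variable-column or fixed to a constant), this projection is injective on the surviving tuples, so it transfers multiplicities exactly and yields $S^\I(\eval(\tpl t))$. The join $q_1(\tpl x_1)\wedge q_2(\tpl x_2)$ maps, by induction with expressions $E_1,E_2$, to the Cartesian product $E_1\times E_2$---whose bag semantics multiplies multiplicities, matching the $\times$ in the definition---followed by an equality selection on the columns of the shared variables $\tpl x_1\cap\tpl x_2$ and a projection discarding one copy of each shared column; again the projection is injective after the selection, so the multiplicity of an output tuple $\tpl a^\I$ is exactly $E_1(I_\I)(\eval(\tpl x_1))\times E_2(I_\I)(\eval(\tpl x_2))$. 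The equality construct $q_0(\tpl x_0)\wedge(x=t)$ becomes a selection $\sigma_{x=t}$ on $E_0$ when $t$ is a constant or a variable already in $\tpl x_0$, and a multiplicity-preserving column-duplicating (generalised) projection that appends a copy of column $x$ as the new column $t$ when $t$ is a fresh variable; either way the value-$0$ versus value-$\ans{q_0}{\I}$ case split is reproduced. The projection $\exists\tpl y.\,q_0(\tpl x,\tpl y)$ maps to the additive (duplicate-preserving) projection $\pi_{\tpl x}E_0$, which sums the multiplicities over the dropped columns; by the finiteness invariant the possibly infinite sum $\sum_{\eval'}\ans{q_0}{\I}(\tpl a^\I,\eval'(\tpl y))$ in the definition has finite support and coincides with this algebraic projection. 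Finally $\vee$, $\veedot$, and $\minus$ map directly to the algebra primitives $\cup$, $\uplus$, and $-$ on $E_1(I_\I),E_2(I_\I)$, whose matching schemas are guaranteed because $q_1,q_2$ share the answer variables $\tpl x$; the definition's clause $(\ans{q_1}{\I}\,\mathtt{op}\,\ans{q_2}{\I})$ is then literally the algebra operation, and the difference is well defined because $E_2(I_\I)$ is finite.

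The main obstacle I expect is the join case: one must verify that the natural-join pattern (product, equality selection, projection) realises the \emph{multiplicative} rather than additive semantics of $\wedge$, which hinges on the observation that after equating the shared columns the final projection is injective and therefore neither merges nor sums tuples. The same injectivity observation, together with the bookkeeping needed to handle repeated variables, constants, and the fresh-variable column copy in the base and equality cases, is where the argument needs care; by contrast the union, arithmetic-union, and difference cases are immediate from the primitives, modulo the finiteness invariant that underwrites both the projection sum and the well-definedness of bag difference.
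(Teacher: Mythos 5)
Your proposal is correct and follows essentially the same route as the paper: a structural induction translating each $\balgonee$-query constructor into the same algebra operations (selection-then-projection for atoms, product plus equality selection plus injective projection for $\wedge$, column-duplicating projection for fresh-variable equalities, additive projection for $\exists$, and direct mapping of $\vee$, $\veedot$, $\setminus$ to $\cup$, $\uplus$, $-$). The only difference is that you spell out the correctness details (injectivity of the post-selection projections, the finiteness invariant underwriting difference and the projection sum) that the paper dismisses as ``straightforward to check.''
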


\begin{proof}
  We refer to~\cite{GrumbachJCSS96} for the definition of the $\balgonee$
  operators. 
  %
  For each $\balgonee$-query $q$, we define a $\balgonee$ algebra expression
  $\toAlg q$ by induction on the structure of $q$ as follows, where, for each
  tuple of terms $\tpl t$ over $\Vars\cup\Ind$ and $t\in\tpl t\cup\Ind$,
  $\valref t{\tpl t}$ is defined as $t$ if $t\in\Ind$, and otherwise as the
  first position in $\tpl t$ containing $t$:
  \begin{itemize}
  \item If $q(\tpl x)=S(\tpl t)$ for $S\in\Concepts\cup\Roles$ and
    $\tpl t=\langle t_1,\dots,t_{|\tpl t|}\rangle$ a tuple over
    $\tpl x\cup\Ind$, then
    $$\toAlg q=\pi_{\valref{x_1}{\tpl t},\dots,\valref{x_{|\tpl x|}}{\tpl
        t}}(\sigma_{j\in[1,|\tpl t|]\setminus\set{\valref{x_1}{\tpl
          t},\dots,\valref{x_{|\tpl x|}}{\tpl t}}\colon j=\valref{t_j}{\tpl
        t}}(S)).$$
  \item If $q(\tpl x)=q_0(\tpl x_0)\land(x=t)$ for $x\in\tpl x_0$,
    $t\in\Vars\cup\Ind$, and $\tpl x=\tpl x_0\cup(\set{t}\setminus\Ind)$, then
    \begin{itemize}
    \item
      $\toAlg q=\sigma_{\valref{x}{\tpl x_0}=\valref{t}{\tpl
          x_0}}(\toAlg{q_0})$ if $t\in\tpl x_0\cup\Ind$ and $\tpl x=\tpl x_0$
      (we assume w.l.o.g.\ that the order of variables in $\tpl x$ and
      $\tpl x_0$ is the same), and
    \item $\toAlg q=\pi_{1,\dots,|\tpl x_0|,\valref x{\tpl x_0}}(\toAlg{q_0})$
      if $t\in\Vars\setminus\tpl x_0$ and $\tpl x=\tpl x_0 t$ (we assume
      w.l.o.g.\ that $t$ is added as the last variable to $\tpl x_0$).
    \end{itemize}
  \item If $q(\tpl x)=q_1(\tpl x_1)\land q_2(\tpl x_2)$, for
    $\tpl x=\tpl x_1\cup\tpl x_2$, then
    $$\toAlg q=\pi_{\valref{x_1}{\tpl x_1\tpl x_2},\dots,\valref{x_{|\tpl
          x|}}{\tpl x_1\tpl x_2}}(\sigma_{x\in\tpl x_1\cap\tpl x_2\colon\valref
      x{\tpl x_1}=|\tpl x_1|+\valref x{\tpl
        x_2}}(\toAlg{q_1}\times\toAlg{q_2})).$$
  \item If $q(\tpl x)=\exists\tpl y.q_0(\tpl x,\tpl y)$, then
    $\toAlg q=\pi_{1,\dots,|\tpl x|}(\toAlg{q_0})$ (we assume w.l.o.g.\ that in $q_0$
    variables in $\tpl y$ come after variables in $\tpl x$).
  \item If $q(\tpl x)=q_1(\tpl x)\lor q_2(\tpl x)$, then
    $\toAlg q=\toAlg{q_1}\cup\toAlg{q_2}$.
  \item If $q(\tpl x)=q_1(\tpl x)\veedot q_2(\tpl x)$, then
    $\toAlg q=\toAlg{q_1}\uplus\toAlg{q_2}$.
  \item If $q(\tpl x)=q_1(\tpl x)\setminus q_2(\tpl x)$, then
    $\toAlg q=\toAlg{q_1}-\toAlg{q_2}$.
  \end{itemize}
  It is straightforward to check that, for each $\balgonee$-query $q$ and each
  bag interpretation $\I$, we have $\ans q \I= \toAlg{q}(I_\I)$.
\end{proof}


\BALGComplete*
\begin{proof}
  The complexity class $\balgonee$ is defined in~\cite{GrumbachJCSS96} by the
  problem of checking, given a bag database instance $I$, a tuple of individuals
  $\tpl b$, a number $n\ge 0$, and a fixed $\balgonee$ algebra expression $E$,
  whether the multiplicity of $\tpl b$ in bag $E(I)$ is exactly $n$.
  %
  %
  %
  We next reduce the problem of checking
  $\ans{q}{\can{\langle\emptyset,\A\rangle}}(\tpl a)\ge k$ to the above
  problem. Without loss of generality we assume that $k \in \nati \cup \{0\}$, since 
  inequality  
  $\ans{q}{\can{\langle\emptyset,\A\rangle}}(\tpl a)\ge k$ is always false
  whenever $k = \infty$. 
  For this consider the following definitions:
  \begin{itemize}
  \item[--] 
    Let $\I$ be the extension of
    $\can{\langle\emptyset,\A\rangle}$ such that
\begin{align*}
\Delta^\I    & = \Delta^{\can{\langle\emptyset,\A\rangle}},\\
S^\I(\tpl t) & = 
\begin{cases}
k & \text{, if } \tpl t = \tpl a\\
0 & \text{, otherwise}
\end{cases}, \text{ for some } S\in\Concepts\cup\Roles \text{ not occurring in 
} \A,\\
        T^\I & = T^{\can{\langle\emptyset,\A\rangle}}, \text{ for each } 
        T\in\Concepts\cup\Roles \text{ different from } S.
\end{align*}
  Note that $\I$ is finite relative to the finite set $\cal S$ consisting of 
  the predicate symbols in $\A$ and symbol $S$, hence $I_\I$ is a bag database 
  instance with schema $\cal S$.
  \item[--] Let $\tpl b=\tpl a$ and $n=0$.
  \item[--] Let $E$ be the algebra expression corresponding to query $S(\tpl x) 
  \setminus q(\tpl x)$ according to Proposition~\ref{prop:calc-to-alg}.
  \end{itemize}
  Then, $\ans{q}{\can{\langle\emptyset,\A\rangle}}(\tpl a)\ge k$ if and only if
  the multiplicity of $\tpl a$ in $E(I_\I)$ is 0.
  Indeed, suppose
  $\ans{q}{\can{\langle\emptyset,\A\rangle}}(\tpl a)< k$. Then
  $\ans{(S(\tpl x)\setminus q(\tpl x))}{\I}(\tpl a)>0$, thus by 
  Proposition~\ref{prop:calc-to-alg} the multiplicity of $\tpl a$ in $E(I_\I)$ 
  is greater than 0; the other direction is analogous.

  The above many-one reduction can be seen to be computable, for each $\A$,
  $\tpl a$, and $k$, by a Boolean circuit whose depth depends only on $q$. We
  conclude that the language
  $\set{\langle \A,\tpl
    a,k\rangle\mid\ans{q}{\can{\langle\emptyset,\A\rangle}}(\tpl a)\ge k}$ is
  contained in $\balgonee$ under $\logspace$-uniform $\aczero$ reductions, as
  required.
\end{proof}

In the following, for a CQ possibly with inequalities
$q(\tpl x) = \exists \tpl y.\, \phi(\tpl x, \tpl y)$
and a bag interpretation $\I$, 
we call a valuation 
$\lambda: \tpl x \cup \tpl y \cup \Ind \to \Delta^\I$ 
a \emph{homomorphism} from $q$ to $\I$ if 
for every atom $P(\tpl t)$ of $\phi(\tpl x, \tpl y)$, it holds
$P^\I(\lambda(\tpl t)) \ge 1$.
\begin{proposition}
\label{prop:anch-entry-points}
For any rooted CQ 
$q(\tpl x) = \exists \tpl y.\, \phi(\tpl x, \tpl y)$,
$\dlliteCbag$ ontology $\K$,
and equality-consistent subset $\tpl z \subseteq \tpl y$, 
we have
\begin{equation}
\label{eq:rw-proof-q-eq}
[q, \tpl z]^{\can\K} = 
[\exists \tpl y.\, 
\phi_{\tpl {\bar z}} \wedge 
\bigwedge_{\text{ma-connected } \tpl z' \subseteq \tpl z} \; 
\Big(\phi_{\tpl z'} \wedge
\bigwedge_{y \in \tpl t_{\tpl z'} \cap \Vars, t \in \tpl t_{\tpl z'}} (y = 
t)\Big),\
\tpl z]^{\can\K}
\end{equation}
where
\emph{(i)} $\tpl {\bar z}$ are all the terms of $\phi(\tpl x, \tpl y)$ not 
appearing in $\tpl z$,  
\emph{(ii)} $\phi_{\tpl {\bar z}}$ is the subconjunction of 
$\phi(\tpl x, \tpl y)$ that consists of atoms and equalities mentioning only 
terms in $\tpl {\bar z}$,
\emph{(iii)} $\phi_{\tpl z'}$ is the subconjunction of 
$\phi(\tpl x, \tpl y)$ that consists of atoms and equalities mentioning a 
variable in $\tpl z'$, and
\emph{(iv)} $\tpl t_{\tpl z'}$ are all the terms of $\tpl {\bar z}$ appearing 
in $\phi_{\tpl z'}$.
\end{proposition}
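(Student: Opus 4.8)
The plan is to compare the two bags through their defining sums from Definition~\ref{def:eval-nulls}. Denote by $q'$ the right-hand query. Since $q'$ is obtained from $q$ by appending only equality atoms $(y=t)$, the two queries carry exactly the same concept and role atoms, so the products $\prod_{S(\tpl t)} S^{\can\K}(\eval(\tpl t))$ that appear in $\ansy{q}{\tpl z}{\can\K}(\tpl a)$ and in $\ansy{q'}{\tpl z}{\can\K}(\tpl a)$ are literally the same function of the valuation $\eval$; the equalities only restrict the index set. Writing $\Lambda_{\tpl z}$ and $\Lambda'_{\tpl z}$ for the respective sets of valuations, the extra equalities of $q'$ involve only terms of $\tpl t_{\tpl z'}\subseteq\tpl{\bar z}$, hence do not touch $\tpl z$ and $\Lambda'_{\tpl z}\subseteq\Lambda_{\tpl z}$ is immediate. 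So it remains to show that every $\eval\in\Lambda_{\tpl z}$ whose product is non-zero already lies in $\Lambda'_{\tpl z}$: such $\eval$ are exactly the homomorphisms of the concept/role atoms of $q$ into $\can\K$, and the valuations in $\Lambda_{\tpl z}\setminus\Lambda'_{\tpl z}$ then contribute $0$ and may be discarded, giving \eqref{eq:rw-proof-q-eq}.

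The core of the argument is a structural claim: for every homomorphism $\eval\in\Lambda_{\tpl z}$ and every ma-connected $\tpl z'\subseteq\tpl z$, all terms of $\tpl t_{\tpl z'}$ are sent by $\eval$ to one and the same individual. I would establish it from the forest shape of the anonymous part of $\can\K$ dictated by Definition~\ref{def:canon}. Each anonymous element $w^j_{u,R}$ has the unique \emph{parent} $u$; the only role edges incident with an anonymous element are parent--child edges; and, crucially, an anonymous element is adjacent to an individual if and only if it is a level-$1$ element whose parent is that individual, because children of anonymous elements are always fresh anonymous elements and ABox edges join only individuals. Thus, together with the parent--child relation, the anonymous elements form a forest whose roots are level-$1$ elements hanging off individuals.

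Next I would analyse $\eval(\tpl z')$. As $\eval\in\Lambda_{\tpl z}$, every $z\in\tpl z'$ maps to an anonymous element while every term of $\tpl t_{\tpl z'}$ maps to an individual. By Definition~\ref{def:ma-connected} the classes of $\tpl z'$ are connected in the Gaifman graph through nodes of $\eqc{\tpl z}$, so the connecting atoms run between $\tpl z'$-variables and, being satisfied by $\eval$, map to anonymous--anonymous (parent--child) edges. Hence $\eval(\tpl z')$ is a connected subset of the anonymous forest, i.e.\ a subtree $T$ inside a single tree rooted at a level-$1$ element $r$ with individual parent $p$. Since $q$ is rooted, $\phi_{\tpl z'}$ contains a linking atom joining some $z\in\tpl z'$ to a term $t\in\tpl t_{\tpl z'}$, which $\eval$ sends to an (anonymous, individual)-edge; the anonymous endpoint therefore is a level-$1$ element of $T$, forcing it to be $r$ and its individual neighbour to be $p$. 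As the same reasoning applies to each $t\in\tpl t_{\tpl z'}$, every such $t$ satisfies $\eval(t)=p$, proving the claim.

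Finally, the claim yields $\eval\in\Lambda'_{\tpl z}$ for every homomorphism $\eval\in\Lambda_{\tpl z}$, so the two sums agree and \eqref{eq:rw-proof-q-eq} follows. The main obstacle I anticipate is precisely this structural claim: making rigorous that the image of a ma-connected block lies in a single tree of the anonymous forest and can touch the individuals only through that tree's unique level-$1$ root. The equality-consistency of $\tpl z$ enters here to guarantee that $\Lambda_{\tpl z}$ is not spuriously empty and that the partition of $\phi$ into $\phi_{\tpl{\bar z}}$ and the $\phi_{\tpl z'}$ is compatible with the equalities.
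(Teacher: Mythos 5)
Your proof is correct and follows essentially the same route as the paper's: reduce the statement to showing that every valuation in $\Lambda_{\tpl z}$ with non-zero contribution satisfies the added equalities, and establish this via the structural fact that the image of each ma-connected block $\tpl z'$ is a connected piece of the anonymous forest that can touch individuals only through the unique individual generating that tree. The only cosmetic difference is that you argue this directly from the forest/root structure, whereas the paper derives the same fact by contradiction (assuming two linking terms map to distinct individuals and contradicting the fact that anonymous elements are characterised by their generating individual and role).
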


\begin{proof}
First observe that by Definition~\ref{def:ma-connected} of ma-connected subsets 
of $\tpl z$ the following hold: 
\emph{(i)} if a variable $z$ belongs to a ma-connected subset $\tpl z'$ of 
$\tpl z$, then $\tpl z'$ contains all variables in $\eqc z$ plus all variables 
in the equivalence classes that are reachable from $\eqc z$ in the Gaifman 
graph of $q$ through nodes in $\eqc {\tpl z}$;
\emph{(ii)} any two ma-connected subsets of $\tpl z$ do not have any atom or 
equality in common.
Combining the above observations with the fact that $q$ is rooted, we derive 
that the query appearing on the right-hand side of 
equation~\eqref{eq:rw-proof-q-eq}, name it $q'$, contains 
exactly query $q$ plus a number of equalities between terms connecting 
$\phi_{\tpl {\bar z}}$ with the subconjunction corresponding to a ma-connected 
subset of $\tpl z$.
Let $\tpl a$ be a tuple of individuals and $\Lambda_{\tpl z}$ be the set of 
homomorphisms for $[q, \tpl z]^{\can\K}$ and $\tpl a$.
First, suppose that $\Lambda_{\tpl z} = \emptyset$, that is, 
$[q, \tpl z]^{\can\K}(\tpl a) = 0$. This means that there is no homomorphism 
$\lambda$ from $q$ to $\can\K$ with $\eval(\tpl x) = \tpl a$ that 
satisfies the equalities of $q$. But then, the same is true for $q'$, hence 
$[q', \tpl z]^{\can\K}(\tpl a) = 0$ as well.
Suppose now that there is a homomorphism $\lambda$ satisfying the equalities of 
$q$. This means that $\lambda$ satisfies the equalities of $q'$ as well except 
possibly the extra ones. 
We next prove that $\lambda$ satisfies these extra equalities for each 
non-empty ma-connected subset $\tpl z'$ of $\tpl z$ as well.
For this, consider such a subset $\tpl z'$.
Because $q$ is rooted, 
$\phi_{\tpl z'}$ should be connected with 
$\phi_{\tpl {\bar z}}$, hence, $\phi_{\tpl z'}$ contains atoms
$P_i(t_i, z_i)$ (resp., $P_i(z_i, t_i)$) such that $t_i \not\in \tpl z'$, 
$z_i \in \tpl z'$, and $i \in [1, n]$
(note that if $\phi_{\tpl {\bar z}}$ is empty the above still holds, because if 
we assume the opposite, then 
$\bigwedge_{\text{ma-connected } \tpl z' \subseteq \tpl z}\phi_{\tpl z'}$, 
as a rooted query, should contain a distinguished variable or an equality 
mentioning an individual, but since for every $\tpl z'$ we have 
$\tpl z' \subseteq \tpl z \subseteq \tpl y$ and $\tpl z$ is 
equality-consistent, we derive a contradiction in both cases). 
Suppose by contradiction that there is a pair $i, j$ 
such that
$\eval(t_i) \not= \eval(t_j)$.
Because $\eval(t_i), \eval(t_j) \in \Ind$, 
by definition of canonical models, we have
$\eval(z_i) = w_{\eval(t_i), P_i}$ and 
$\eval(z_j) = w_{\eval(t_j), P_j}$. 
But then, because $\tpl z'$ is ma-connected, $\phi_{\tpl z'}$ contains an atom 
that is sent by $\lambda$ to a tuple $(w_1, w_2)$ such that $w_1$ is either 
$w_{\eval(t_i), P_i}$ or an anonymous generated by $w_{\eval(t_i), P_i}$
and $w_2$ is either $w_{\eval(t_j), P_j}$ or an anonymous generated by $w_2$. 
Given that the anonymous elements of canonical models are characterised by the 
individual and the role that generated them and there can be no tuple having 
anonymous elements generated from different combination, the above situation is 
impossible.
Hence, not only we have $\eval(t_i) = \eval(t_j)$ for every $i, j \in [1, n]$, 
but also that $P_i = P_j$ holds. 
From the former it follows that the extra equalities in $q'$ are satisfied by 
all homomorphisms in $\Lambda_{\tpl z}$.
\end{proof}

\newcommand{\canaux}{{\can{\langle \T, \A' \rangle}}}
\newcommand{\qazp}{{q^a_{\tpl z'}}}

\begin{proposition}
\label{prop:unique-match}
Let $q(\tpl x) = \exists \tpl y.\, \phi(\tpl x, \tpl y)$
be a rooted CQ,
$\K = \langle \T, \A \rangle$ a $\dlliteCbag$ ontology, and
$\tpl z$ an equality-consistent subset of $\tpl y$.
For all non-empty, ma-connected, and realisable by $\T$ subsets $\tpl z'$ of 
$\tpl z$, we have
$\qazp^\canaux(\emptytpl) = 1$
where individual $a$ and ABox $\A'$ are picked according to 
Section~\ref{sec:q-rw}.
\end{proposition}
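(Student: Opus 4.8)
The plan is to prove the equality $\qazp^\canaux(\emptytpl)=1$ by splitting it into two tasks: first, showing that every contributing valuation carries multiplicity exactly $1$, so that $\qazp^\canaux(\emptytpl)$ merely counts homomorphisms; and second, showing that there is exactly one such homomorphism. Realisability (Definition~\ref{def:realisable-ma-connected}) already supplies at least one, so the real content lies in uniqueness. Note also that $\canaux$, and hence the whole statement, depends on $\K$ only through $\T$.

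First I would record that all bags occurring in $\canaux$ have multiplicity at most $1$. This follows by induction on the stages $\pcan{\langle \T, \A'\rangle}{i}$ of Definition~\ref{def:canon}: the ABox $\A'$ consists of a single assertion ($P(a,b)$ or $P(b,a)$) of multiplicity $1$, so every concept and role multiplicity at stage $0$ lies in $\{0,1\}$; the concept closure $\ccl{\cdot}{\cdot}{\T}$ only takes maxima of values already present, hence stays $\le 1$; and every freshly generated role tuple is assigned multiplicity exactly $1$. Consequently, in the defining sum of Definition~\ref{def:CQ-eval} each factor $S^\canaux(\eval(\tpl t))$ is $0$ or $1$. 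Therefore a valuation respecting the equalities $t=a$ and the inequalities $z\neq a$ of $\qazp$ contributes $0$ unless it is a homomorphism, in which case it contributes exactly $\prod 1 = 1$; so $\qazp^\canaux(\emptytpl)$ equals the number of such homomorphisms.

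The heart of the argument, and the step I expect to be the main obstacle, is the deterministic forest shape of $\canaux$: I would prove that for every element $u$ and every role $R$ there is at most one $R$-successor, i.e.\ at most one $v$ with $R^\canaux(u,v)=1$. For an anonymous element $w=w^j_{u,R_0}$ this is precisely where the \emph{restricted} chase is essential: the very edge that created $w$ already witnesses $\exists R_0^-$ at $w$, so the closure never forces a second neighbour along $R_0^-$, while along every other role $w$ acquires at most one child. The individuals $a,b$ are handled by observing that the single ABox edge already saturates the relevant existential requirement (so, when $\A'=\{P(a,b)\}$, $a$ acquires no further $P$-successor and $b$ no $P^-$-successor, and symmetrically otherwise). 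Hence ``being the $R$-successor of'' is a partial function for each role $R$, including inverse roles.

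Given this determinism, uniqueness follows by tracing. The equalities of $\qazp$ identify all entry points with $a$, and the linking atom $\alpha_{\tpl z'}$, say $P(t,z_0)$, then forces $\eval(z_0)$ to be the unique $P$-successor of $a$, namely $b$; the inequality $z_0\neq a$ is automatically satisfied and rules out the degenerate alternative (symmetrically for $\alpha_{\tpl z'}=P(z,t)$). Because $\tpl z'$ is ma-connected, every variable of $\tpl z'$ is joined to $z_0$ through atoms of $\phi_{\tpl z'}$ using only $\tpl z'$-nodes, and each such atom pins the image of its not-yet-determined endpoint as the unique $R$-successor or $R^-$-successor of the already-determined one. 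Propagating along a spanning tree of the connected Gaifman graph of $\phi_{\tpl z'}$ thus determines $\eval$ completely from $\eval(z_0)=b$, so there is at most one homomorphism; this is exactly the forest-locality phenomenon already isolated in Lemma~\ref{lemma:difference-locality}, which one could alternatively invoke directly. Combined with realisability (at least one homomorphism) and the multiplicity-$1$ count, we conclude $\qazp^\canaux(\emptytpl)=1$.
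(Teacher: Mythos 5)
Your proposal is correct and follows essentially the same route as the paper's proof: realisability supplies at least one homomorphism, the fact that all multiplicities in the canonical model of $\langle\T,\A'\rangle$ are at most $1$ reduces the count to counting homomorphisms, and uniqueness follows from connectivity of $\phi_{\tpl z'}$ together with the at-most-one-$R$-successor property of that canonical model. The only difference is presentational (you propagate determinism directly along a spanning tree, while the paper derives a contradiction from two distinct homomorphisms), and you supply inductive justification for the set-valuedness and successor-uniqueness claims that the paper states as observations.
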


\begin{proof}
Consider a non-empty, ma-connected, and realisable by $\T$ subset $\tpl z'$ of 
$\tpl z$. 
Let us first inspect query $\qazp$. 
For $\tpl x' = \tpl t_{\tpl z'} \cap \Vars$, 
$a$ an individual in $\tpl t_{\tpl z'}$ if it exists or a fresh individual 
otherwise, we have
$
\qazp() =
       \exists \tpl x'.\, \exists \tpl z'. \;
       \phi_{\tpl z'} \land \; 
       \bigwedge_{t \in \tpl t_{\tpl z'}} (t = a) \; 
       \land \; 
       \bigwedge_{z \in \tpl z'} (z \neq a)
$
where, for $t \in \tpl t_{\tpl z'}$ and $z \in \tpl z'$, $\A'$ is the bag ABox 
having either only assertion $P(a, b)$ (with multiplicity 1), when 
$\alpha_{\tpl z'} = P(t, z)$, or only assertion $P(b, a)$, 
when $\alpha_{\tpl z'} = P(z, t)$.
Since $\tpl z'$ is realisable by $\T$,
by Definition~\ref{def:realisable-ma-connected} we have 
$(\qazp)^\canaux(\emptytpl) \ge 1$. Suppose that
$(\qazp)^\canaux(\emptytpl) > 1$. 
Observe that every concept/role extension under $\canaux$ 
is a set, hence, the multiplicity of a tuple in any such extension is $1$.
This means that all homomorphisms from $\qazp$ to $\canaux$ that satisfy the 
equalities and inequalities of $\qazp$ contribute multiplicity $1$ for 
$\emptytpl$ in bag $(\qazp)^\canaux$, hence, there exist at least two such 
homomorphisms.
Because $\tpl z'$ is ma-connected, the subgraph of the Gaifman graph of $q$ 
induced by the set of the equivalence classes of $\tpl z'$ is connected, and as 
a result, the Gaifman graph of $\phi_{\tpl z'}$ in $\qazp$ contains a single 
connected component. 
But then, because $\qazp$ contains equalities between all terms in $\tpl 
t_{\tpl z'}$ and individual $a$, all homomorphisms from $\qazp$ to $\canaux$ 
send all atoms containing a term in $\tpl t_{\tpl z'}$ to the assertion of 
$\A'$. Hence, $\phi_{\tpl z'}$ is essentially rooted.
Therefore, in order to have two homomorphisms from $\qazp$ to $\canaux$, it 
is necessary that $\qazp$ contains an atom $P(x, y)$ such that it can be sent 
to two different tuples, say $(u, v_1)$ and $(u, v_2)$ in $P^{\canaux}$.
However, observe that for every element $u$ of $\Delta^\canaux$ and for every 
role $P$, $\canaux$ may contain at most one $P$-successor for $u$, which 
implies that every atom $P(x, y)$ in $\qazp$ is sent to a tuple in $P^\canaux$ 
in a unique way.
Combining all of the above, we conclude that there is only one homomorphism from
$\qazp$ to $\canaux$ satisfying the equalities and inequalities of $\qazp$, 
which proves the claim.
\end{proof}

\BALGRewriteLemma*
\begin{proof}
Let $\Lambda_{\tpl z}$ be the set of valuations
$\lambda: \tpl x \cup \tpl y \cup \Ind \to \Delta^{\can\K}$
corresponding to Definition~\ref{def:eval-nulls} for $[q, \tpl z]^{\can\K}$.

\emph{1.}
Because variables $\tpl z$ are realisable by $\T$, by 
Definition~\ref{def:realisability} we have that $\tpl z$ are 
equality-consistent and all non-empty ma-connected subsets $\tpl z'$ of $\tpl 
z$ are realisable by $\T$. 
The former implies that there is no equality
$z = t$ with $z \in \tpl z$ and $t \not\in \tpl z$ in any $\phi_{\tpl z'}$. 
The latter implies that for any non-empty ma-connected subset $\tpl z'$ with 
$\tpl x' = \tpl t_{\tpl z'} \cap \Vars$, 
$a$ an individual in $\tpl t_{\tpl z'}$ if it exists or a fresh individual 
otherwise, and
query 
$
\qazp() =
       \exists \tpl x'.\, \exists \tpl z'. \;
       \phi_{\tpl z'} \land \; 
       \bigwedge_{t \in \tpl t_{\tpl z'}} (t = a) \; 
       \land \; 
       \bigwedge_{z \in \tpl z'} (z \neq a)
$,
we have 
$(\qazp)^{\canaux}(\emptytpl) \ge 1$
where, for $t_{\tpl z'} \in \tpl t_{\tpl z'}$ and $z_{\tpl z'} \in \tpl z'$, 
$\A'$ is the bag ABox 
having either only assertion $P(a, b)$ (with multiplicity 1), when 
$\alpha_{\tpl z'} = P(t_{\tpl z'}, z_{\tpl z'})$, or only assertion $P(b, a)$, 
when $\alpha_{\tpl z'} = P(z_{\tpl z'}, t_{\tpl z'})$
(note that by the proof of Proposition~\ref{prop:anch-entry-points}, atom 
$\alpha_{\tpl z'}$ always exists).
This means that there exists a homomorphism $\nu$ from $\phi_{\tpl z'}$ to 
$\canaux$ 
such that $\nu$ maps a term $t$ of $\qazp$ to $a$ if and only if $t \in \tpl 
t_{\tpl z'}$.
Consider now a valuation $\lambda \in \Lambda_{\tpl z}$, the subconjunction 
$\phi_{\tpl z'}$ of $q$, and atom $\alpha_{\tpl z'} = P(t_{\tpl z'}, z_{\tpl 
z'})$ (resp.,  
$\alpha_{\tpl z'} = P(z_{\tpl z'}, t_{\tpl z'})$) of $\phi_{\tpl z'}$.
By Definition~\ref{def:eval-nulls},
$\lambda$ maps term $t_{\tpl z'}$ to individuals and variable $z_{\tpl z'}$ to 
anonymous elements.
Therefore, whenever $\lambda$ is a homomorphism from $q$ to $\can\K$, it means 
that $P^{\can\K}$ contains a tuple 
$(a', w_{a', P})$ for some individual $a'$. But then, this means that 
$\can\K$ contains an isomorphic copy of $\canaux$ modulo the individuals $a$ 
and $b$ in $\A'$.
Hence, inequality 
$(q^{a'}_{\tpl z'})^{\can\K}(\emptytpl) \ge 1$
holds as well, which due to the above observations it implies that either $a'$ 
is the 
only individual contained in $\phi_{\tpl z'}$ and $a = a'$ or $\phi_{\tpl z'}$ 
does not mention any individual and the choice of $a$ above is irrelevant.
By Proposition~\ref{prop:unique-match}, we have 
$(\qazp)^{\canaux}(\emptytpl) = 1$, hence, 
$(q^{a'}_{\tpl z'})^{\can\K}(\emptytpl) = 1$ as well, which implies that the 
images of 
all atoms in $\phi_{\tpl z'}$ under $\lambda$ have multiplicity $1$.
Based on this fact, we derive the following equivalence for every ma-connected 
subset $\tpl z'$ of $\tpl z$.
\begin{equation}
\label{eq:rw-proof-1}
[\exists \tpl z'.\,\phi_{\tpl z'} \wedge 
\bigwedge_{y \in \tpl t_{\tpl z'} \cap \Vars, t \in \tpl t_{\tpl z'}} (y = t),\
\tpl z']^{\can\K}
=
[\exists z_{\tpl z'}.\,\alpha_{\tpl z'} \wedge 
\bigwedge_{y \in \tpl t_{\tpl z'} \cap \Vars, t \in \tpl t_{\tpl z'}} (y = t),\
z_{\tpl z'}]^{\can\K}
\end{equation}
We now inspect the form of $q_{\tpl z}$. 
Let 
$\tpl {\bar z}$ be all terms of $\phi(\tpl x, \tpl y)$ not appearing in $\tpl 
z$, 
$\tpl y_{\tpl {\bar z}} = \Vars \cap \tpl {\bar z}$,  
$\phi_{\tpl {\bar z}}$ the subconjunction of $\phi(\tpl x, \tpl y)$ consisting 
of atoms and equalities mentioning only terms in $\tpl {\bar z}$, 
and
$\tpl z_{\tpl z} = \bigcup_{\text{ma-connected } \tpl z' \subseteq \tpl 
z}\{z_{\tpl z'} \mid z_{\tpl z'} \text{ appears in } \alpha_{\tpl z'} \}$
(note that $\tpl z_{\tpl z} = \tpl z \cap \tpl y'$). Then, 
$q_{\tpl z}$ takes the following form:
\begin{equation}
\label{eq:rw-proof-q_z}
q_{\tpl z}(\tpl x) = \exists {\tpl y_{\tpl {\bar z}}}.\,
\exists \tpl z_{\tpl z}.\,
\phi_{\tpl {\bar z}} \wedge 
\bigwedge_{\text{ma-connected } \tpl z' \subseteq \tpl z} \Big(\alpha_{\tpl z'} 
\land 
\bigwedge_{y \in \tpl t_{\tpl z'} \cap \Vars, t \in \tpl t_{\tpl z'}} (y = 
t)\Big)
\end{equation}
Notice that, for all non-empty ma-connected subsets $\tpl z'$ of $\tpl z$, the 
query on the left-hand side of equation~\eqref{eq:rw-proof-1} corresponds to a 
conjunction w.r.t.\ $\tpl z'$ in the query at the right-hand side of 
equation~\eqref{eq:rw-proof-q-eq} of Proposition~\ref{prop:anch-entry-points}, 
while both map their common variables in $\tpl z$ to anonymous elements 
when evaluated over $\can\K$. 
By Definition~\ref{def:ma-connected} all ma-connected subsets of 
$\tpl z$ are pairwise disjoint, while their union makes up $\tpl z$.
Hence, considering equation~\eqref{eq:rw-proof-1} for all ma-connected subsets 
of $\tpl z$ and combining it with~\eqref{eq:rw-proof-q-eq} 
and~\eqref{eq:rw-proof-q_z}, 
we immediately derive
$[q, \tpl z]^{\can\K} = [q_{\tpl z}, \tpl z]^{\can\K}$.

\emph{2.}
Since $\tpl z$ is not realisable by $\T$, by Definition~\ref{def:realisability} 
we have that either 
$\phi(\tpl x, \tpl y)$ contains an equality $z = t$ with $z \in \tpl 
z$ and $t \not\in \tpl z$ 
or that
there is a non-empty ma-connected subset $\tpl z' \subseteq \tpl z$ 
which is not realisable by $\T$.
For the former, by Definition~\ref{def:eval-nulls}
all $\lambda \in \Lambda_{\tpl z}$ are such that
$\lambda(z) \not= \lambda(t)$, hence, equality $z = t$ is not satisfied by any 
of them, which results in 
$[q, \tpl z]^{\can\K} = \emptybag$.
For the latter, this means that, 
for $\tpl x' = \tpl t_{\tpl z'} \cap \Vars$, 
$a$ an individual in $\tpl t_{\tpl z'}$ if it exists or a fresh individual 
otherwise, and
query 
$
\qazp() =
       \exists \tpl x'.\, \exists \tpl z'. \;
       \phi_{\tpl z'} \land \; 
       \bigwedge_{t \in \tpl t_{\tpl z'}} (t = a) \; 
       \land \; 
       \bigwedge_{z \in \tpl z'} (z \neq a)
$,
we have 
$(\qazp)^{\canaux}(\emptytpl) = 0$
where, for $t \in \tpl t_{\tpl z'}$ and $z \in \tpl z'$, $\A'$ is the bag ABox 
having either only assertion $P(a, b)$ (with multiplicity 1), when 
$\alpha_{\tpl z'} = P(t, z)$, or only assertion $P(b, a)$, when $\alpha_{\tpl 
z'} = P(z, t)$.
We distinguish two cases: 
\emph{(i)}
either there is no homomorphism $\nu$ from $\phi_{\tpl z'}$ to $\canaux$ or 
\emph{(ii)}
there is one but it violates some equality or inequality of $\qazp$.
Assume there exists $\lambda \in \Lambda_{\tpl z}$ that is a 
homomorphism from $\phi_{\tpl z'}$ to $\can\K$ (otherwise we trivially have
$[q, \tpl z]^{\can\K} = \emptybag$).
From the existence of $\lambda$ and atom $\alpha_{\tpl z'}$ in 
$\phi_{\tpl z'}$, we have that $\can\K$ contains an isomorphic copy of 
$\canaux$, modulo individuals $a$ and $b$ (see also the proof of statement 1.). 
But then, considering case~\emph{(i)}, if there is no homomorphism $\nu$ from 
$\phi_{\tpl z'}$ to $\canaux$, this means there is no homomorphism $\lambda$ 
from $\phi_{\tpl z'}$ to $\can\K$ either, which is a contradiction (this is 
easily seen by the fact that there is a homomorphism from the image of 
$\phi_{\tpl z'}$ under $\lambda$ to $\canaux$ that preserves individuals and 
the fact that the composition of homomorphisms is another homomorphism).
Considering case~\emph{(ii)}, it means that
either there are two different individuals in the query connected with 
variables in $\tpl z'$ or that $\nu$ maps $z$ to $a$.
But then, the former means that $\phi_{\tpl z'}$ contains atoms $P_1(a, z_1)$ 
and $P_2(a', z_2)$ such that $a \not= a'$ with $P_1, P_2$ not necessarily 
distinct that contradicts Proposition~\ref{prop:anch-entry-points}, which 
requires all $t \in \tpl t_{\tpl z'}$ be mapped to the same individual by 
$\lambda \in \Lambda_{\tpl z}$.
On the other hand, if $\nu(z) = a$, then $\lambda(z) = a$ if $a \in \tpl 
t_{\tpl z'}$, otherwise $\lambda(z) = a'$ for some individual $a'$ in 
$\Delta^{\can\K}$. In either case, $\lambda(z) \in \Ind$, from which we conclude
that $\lambda \not\in \Lambda_{\tpl z}$, which is a contradiction.
Therefore, there is a subquery $\phi_{\tpl z'}$ in $q$ for which there is no 
$\lambda \in \Lambda_{\tpl z}$ that is a homomorphism from $\phi_{\tpl z'}$ to 
$\can\K$ and satisfies the equalities of $\phi_{\tpl z'}$, from which it 
follows that  
$[q, \tpl z]^{\can\K} = \emptybag$.
\end{proof}

For a $\dlliteCbag$ TBox $\T$, a concept $C$, a role $R$, and a term $t$, 
where $\zeta_{C_0}(t)$ is defined in Section~\ref{sec:q-rw}, we 
define the expressions $\eta_C(t)$ and $\theta_{\exists R}(t)$ as follows:
\begin{align}
\eta_C(t) & = 
\bigvee_{\T \models C_0 \isa C} \zeta_{C_0}(t) \label{eta} \\
\theta_{\exists R}(t) & = \big(\bigvee_{\T \models C_0 \isa \exists R} 
\zeta_{C_0}(t) \big) \,\backslash \, \zeta_{\exists R}(t) \label{theta}
\end{align}

\begin{lemma}
\label{lemma:rw-atomic-ucq}
For a $\dlliteCbag$ ontology $\K = \langle \T, \A \rangle$ the following hold:
\begin{enumerate}
\item \label{rw-concept}
For a query of the form
$q_C(x) = \zeta_C(x)$ and an individual $a$, we have 
$q_C^{\can\K}(a) = \eta_C^{\can{\langle \emptyset, \A\rangle}}(a)$.

\item \label{rw-role}
For a role $R \in \Roles$, queries of the form 
$q_R(x, y) = R(x, y)$ and
$q_{R^-}(x, y) = R(y, x)$, and a pair of individuals $\tpl a$, we have 
$q_R^{\can\K}(\tpl a) = q_R^{\can{\langle \emptyset, \A\rangle}}(\tpl a)$ and 
$q_{R^-}^{\can\K}(\tpl a) = q_{R^-}^{\can{\langle \emptyset, \A\rangle}}(\tpl 
a)$.

\item \label{rw-role-join}
For a role $R$, a query of the form $q(x) = R(x, x)$, and an individual $a$, 
we have 
$q^{\can\K}(a) = q^{\can{\langle \emptyset, \A\rangle}}(a)$.
\end{enumerate}
\end{lemma}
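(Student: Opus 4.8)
The plan is to fix the two canonical models in play and then reduce all three claims to a single stabilisation property of $\can{\K}$ at the individuals. First I would record the shape of $\can{\langle\emptyset,\A\rangle}$: since the empty TBox entails $C_0\isa C$ only when $C_0$ coincides with $C$, every concept closure $\ccl{u}{\cdot}{\emptyset}$ degenerates to the bare concept value, so no anonymous elements are ever created and $\can{\langle\emptyset,\A\rangle}$ is just the bag interpretation with domain $\Ind$ and each predicate interpreted by its ABox multiplicity. Hence for any basic concept $C_0$ and individual $a$ we have $\zeta_{C_0}^{\can{\langle\emptyset,\A\rangle}}(a)=C_0^{\pcan{\K}{0}}(a)$ (the value read off the ABox), and for any role $R$ and individuals $a,b$ we have $R^{\can{\langle\emptyset,\A\rangle}}(a,b)=\A(R(a,b))$. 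Using the max-union semantics of $\vee$, this yields $\eta_C^{\can{\langle\emptyset,\A\rangle}}(a)=\max\{C_0^{\pcan{\K}{0}}(a)\mid \T\models C_0\isa C\}=\ccl{a}{\pcan{\K}{0}}{\T}(C)$, i.e.\ the right-hand side of claim 1 rewritten as a level-$0$ closure.

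Claims 2 and 3 I would dispatch immediately. Because $\T$ is a $\dlliteC$ TBox it has no role inclusions, and the construction only ever adds \emph{fresh anonymous} pairs to role extensions; so for individuals $a,b$ the value $R^{\pcan{\K}{i}}(a,b)$ never changes with $i$ and equals $R^{\pcan{\K}{0}}(a,b)=\A(R(a,b))$. Evaluating $R(x,y)$, $R(y,x)$, or $R(x,x)$ at tuples of individuals therefore returns the same ABox multiplicity over $\can{\K}$ and over $\can{\langle\emptyset,\A\rangle}$, which is exactly what these claims assert.

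The heart of the proof, hence claim 1, is the stabilisation statement: for every individual $a$ and every basic concept $C$, $\ccl{a}{\pcan{\K}{i}}{\T}(C)=\ccl{a}{\pcan{\K}{0}}{\T}(C)$ for all $i$, whence $C^{\can{\K}}(a)=\ccl{a}{\pcan{\K}{0}}{\T}(C)$ and claim 1 follows by combining with the identity above (noting $q_C^{\can{\K}}(a)=C^{\can{\K}}(a)$). I would argue by induction on $i$. The ``$\geq$'' direction is monotonicity: each step can only increase multiplicities, since $A^{\pcan{\K}{i}}(a)=\ccl{a}{\pcan{\K}{i-1}}{\T}(A)\geq A^{\pcan{\K}{i-1}}(a)$ and $(\exists R)^{\pcan{\K}{i}}(a)$ adds $\delta\geq 0$ successors, so closures are non-decreasing. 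The ``$\leq$'' direction carries the real work: I would use the minimal-repair identities $A^{\pcan{\K}{i}}(a)=\ccl{a}{\pcan{\K}{i-1}}{\T}(A)$ and $(\exists R)^{\pcan{\K}{i}}(a)=\ccl{a}{\pcan{\K}{i-1}}{\T}(\exists R)$ (the latter because exactly $\delta$ fresh successors are created to reach the closure), rewrite these level-$(i{-}1)$ closures as level-$0$ closures by the induction hypothesis, and then invoke transitivity of $\T\models\cdot\isa\cdot$: for any $C_0$ with $\T\models C_0\isa C$, each concept $D$ witnessing the level-$0$ closure of $C_0$ satisfies $\T\models D\isa C$, so $D^{\pcan{\K}{0}}(a)\leq\ccl{a}{\pcan{\K}{0}}{\T}(C)$. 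Taking the maximum over all such $C_0$ closes the induction.

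The main obstacle I anticipate is precisely the existential case of the ``$\leq$'' direction: one must ensure that the anonymous $R$-successors of an individual are all introduced at the first level and that no later level inflates $(\exists R)^{\pcan{\K}{i}}(a)$ beyond the level-$0$ closure of a super-concept. This is exactly what the minimal-repair identity together with the inductive hypothesis delivers, but it is the step that genuinely uses the bag-specific ``add $\delta$ successors'' construction rather than a purely set-theoretic entailment argument; everything else is bookkeeping on finite multiplicities, which stay finite because the bag ABox is finite.
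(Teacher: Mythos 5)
Your proof is correct and takes essentially the same route as the paper's: you characterise $\can{\langle\emptyset,\A\rangle}$ as the bare ABox interpretation over $\Ind$, reduce claim 1 to the identity $C^{\can\K}(a)=\ccl{a}{\pcan{\K}{0}}{\T}(C)$ (which the paper declares ``immediate to verify'' and you establish by an explicit induction via the minimal-repair identities and transitivity of $\T\models\cdot\isa\cdot$), and dispatch claims 2 and 3 by observing that, absent role inclusions, role multiplicities at pairs of individuals never change during the construction. The only difference is expository: you spell out the stabilisation step that the paper leaves to the reader.
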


\begin{proof}
1. Recalling the definition of canonical models, it is immediate to verify that 
$C^{\can{\K}}(a) = C^{\pcan{\K}{1}} = \ccl{a}{\pcan{\K}{0}}{\T}(C)$ holds.
Observe that the latter quantity is equal to 
$\bigcup_{\T \models C_0 \isa C}(C_0^{\pcan{\K}{0}}(a))$.
Last, notice that 
$C_0^{\pcan{\K}{0}}(a) = \A(C_0(a))$ holds
whenever $C_0 \in \Concepts$, and
$C_0^{\pcan{\K}{0}}(a) = \sum_{c \in \Ind} \A(R(a, c))$ holds
whenever $C_0 = \exists R$ with $R \in \Roles$ (the case is analogous when $C_0 
= \exists R^-$ with $R \in \Roles$). 
Recalling the semantics of CQs, this means that
$C_0^{\pcan{\K}{0}}(a) = \zeta_C^{\can{\langle \emptyset, \A\rangle}}(a)$ holds.
Combining this with
$C^{\can{\K}}(a) = \bigcup_{\T \models C_0 \isa C}(C_0^{\pcan{\K}{0}}(a))$ and 
the definition of equation~\eqref{eta}, we derive 
$C^{\can\K}(a) = \eta_C^{\can{\langle \emptyset, \A\rangle}}(a)$, which is 
equivalent to,
$q_C^{\can\K}(a) = \eta_C^{\can{\langle \emptyset, \A\rangle}}(a)$.

2. Since $\T$ does not contain role inclusion axioms, 
for every $i \ge 0$ in the definition of the canonical model for $\K$,
every role $P$, and every pair of individuals $\tpl a$, we have
$P^{\pcan{\K}{i}}(\tpl a) = P^{\pcan{\K}{0}}(\tpl a)$. 
Moreover, by definition of canonical models, we have
$\can\K = \bigcup_{i\ge0}\pcan{\K}{i}$, from which and the definition of
$\cup$ for bag interpretations, we get
$P^{\can{\K}} = \bigcup_{i\ge0} P^{\pcan{\K}{i}}$. 
Combining this with the fact that
$P^{\pcan{\K}{i}}(\tpl a) = P^{\pcan{\K}{0}}(\tpl a)$, we derive
$P^{\can{\K}}(\tpl a) = P^{\pcan{\K}{0}}(\tpl a)$.
Last, because $P^{\pcan{\K}{0}}(\tpl a) = \A(P(\tpl a))$, we get 
$q_P^{\can\K}(\tpl a) = q_P^{\can{\langle \emptyset, \A\rangle}}(\tpl a)$, 
which proves the claim.

3. The claim follows from the facts that 
$\T$ does not contain any role inclusion axioms 
and that canonical interpretations do not add to role extentions tuples 
with repeated elements.
\end{proof}

\begin{lemma}
\label{lemma:rw-abox}
For a rooted CQ 
$q(\tpl x) = \exists \tpl y.\, \phi(\tpl x, \tpl y)$, a $\dlliteCbag$ ontology 
$\K = \langle \T, \A \rangle$, and variables $\tpl z \subseteq \tpl y$ that are 
realisable by $\T$, we have
$[q_{\tpl z}, \tpl z]^{\can\K} = ({\bar q}_{\tpl z})^{\can{\langle \emptyset, 
\A\rangle}}$.
\end{lemma}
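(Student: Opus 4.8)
The plan is to fix a tuple $\tpl a$ of individuals and show that both bags assign it the same multiplicity, by factorising each side into the same product of per-atom contributions and then matching these contributions via Lemma~\ref{lemma:rw-atomic-ucq}. Recall that $\phi_{\tpl z}$ is the conjunction of $\phi_{\tpl{\bar z}}$ (the atoms and equalities over terms that $\Lambda_{\tpl z}$ sends to individuals) together with, for each ma-connected $\tpl z' \subseteq \tpl z$, a single linking atom $\alpha_{\tpl z'}$ of the form $P(t_{\tpl z'}, z_{\tpl z'})$ or $P(z_{\tpl z'}, t_{\tpl z'})$ and the equalities identifying the terms in $\tpl t_{\tpl z'}$; and that $\psi_{\tpl z}$ is obtained from it by replacing each concept atom $A(t)$ by $\eta_A(t)$, leaving each role atom of $\phi_{\tpl{\bar z}}$ intact, and replacing each $\alpha_{\tpl z'}$ of the first (resp.\ second) form by $\theta_{\exists P}(t_{\tpl z'})$ (resp.\ $\theta_{\exists P^-}(t_{\tpl z'})$).

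First I would decompose the valuations. Since the ma-connected subsets are pairwise disjoint and each variable $z_{\tpl z'}$ occurs in $\phi_{\tpl z}$ only in $\alpha_{\tpl z'}$ (there are no concept atoms over $\tpl z$, and the added equalities involve only $\tpl t_{\tpl z'} \subseteq \tpl{\bar z}$), every $\lambda \in \Lambda_{\tpl z}$ is determined by its restriction $\lambda_0$ to the terms of $\tpl{\bar z}$ (sending $\tpl x$ to $\tpl a$, individuals to themselves, and $\tpl y_{\tpl z}$ into $\Ind$, subject to the equalities) together with an independent choice of an anonymous element $\lambda(z_{\tpl z'})$ for each $\tpl z'$. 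Distributing the product over these independent choices gives
\[
[q_{\tpl z}, \tpl z]^{\can\K}(\tpl a)
\;=\; \sum_{\lambda_0}
\Big( \prod_{S(\tpl t) \textnormal{ in } \phi_{\tpl{\bar z}}} S^{\can\K}(\lambda_0(\tpl t)) \Big)
\cdot \prod_{\tpl z'} \Big( \sum_{w} P^{\can\K}(\lambda_0(t_{\tpl z'}), w) \Big) ,
\]
where $w$ ranges over anonymous elements (and the inner factor is $\sum_{w} P^{\can\K}(w, \lambda_0(t_{\tpl z'}))$ in the other orientation of $\alpha_{\tpl z'}$).

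Then I would evaluate the factors. For each $\tpl z'$, the only anonymous $P$-successors of the individual $a' = \lambda_0(t_{\tpl z'})$ in $\can\K$ are the elements $w^j_{a', P}$, each with $P^{\can\K}(a', w^j_{a', P}) = 1$, so by Definition~\ref{def:canon} the inner sum equals $\ccl{a'}{\pcan{\K}{0}}{\T}(\exists P) - (\exists P)^{\pcan{\K}{0}}(a')$. On the other side, evaluating $\theta_{\exists P}(t_{\tpl z'})$ over $\can{\langle\emptyset,\A\rangle}$ gives, by the bag difference and by Lemma~\ref{lemma:rw-atomic-ucq}(1) applied to the max-union $\eta_{\exists P}$, exactly $\ccl{a'}{\pcan{\K}{0}}{\T}(\exists P) - (\exists P)^{\pcan{\K}{0}}(a')$ (the subtrahend being $(\exists P)^{\can{\langle\emptyset,\A\rangle}}(a') = (\exists P)^{\pcan{\K}{0}}(a')$, as the empty TBox introduces no anonymous elements); the inverse-role case is symmetric. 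For the atoms of $\phi_{\tpl{\bar z}}$, whose terms $\lambda_0$ sends to individuals, Lemma~\ref{lemma:rw-atomic-ucq}(1) gives $A^{\can\K}(\lambda_0(t)) = \eta_A^{\can{\langle\emptyset,\A\rangle}}(\lambda_0(t))$ for concept atoms, while parts (2) and (3) give $R^{\can\K}(\lambda_0(\tpl t)) = R^{\can{\langle\emptyset,\A\rangle}}(\lambda_0(\tpl t))$ for the role atoms left intact. Expanding $(\bar q_{\tpl z})^{\can{\langle\emptyset,\A\rangle}}(\tpl a)$ with the $\balgonee$-semantics (join as product, $\exists \tpl y_{\tpl z}$ as a sum over individual valuations $\mu_0$) yields the same sum of products once $\mu_0$ is identified with $\lambda_0$, so the two bags coincide. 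The main obstacle is the linking atoms: one must argue carefully that the only anonymous elements available to $z_{\tpl z'}$ are the direct $P$-successors (or $P$-predecessors) of $\lambda_0(t_{\tpl z'})$, each of multiplicity one, and that the difference $\setminus \zeta_{\exists P}(t)$ in $\theta_{\exists P}$ removes precisely the individual successors counted in the ABox, so that $\theta_{\exists P}$ counts exactly these anonymous witnesses and no double-counting with the other summands of~\eqref{partition_eq} occurs; everything else reduces to the per-atom identities of Lemma~\ref{lemma:rw-atomic-ucq} and the disjointness of the ma-connected subsets.
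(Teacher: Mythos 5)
Your proposal is correct and takes essentially the same route as the paper's proof: the same factorisation of valuations into a restriction $\lambda_0$ over $\tpl{\bar z}$ plus independent anonymous-witness choices for each ma-connected $\tpl z'$, the same appeal to Lemma~\ref{lemma:rw-atomic-ucq} (cases 1--3) for the atoms of $\phi_{\tpl{\bar z}}$, and the same identification of each linking atom's contribution with $\ccl{a'}{\pcan{\K}{0}}{\T}(\exists R_{\tpl z'}) - (\exists R_{\tpl z'})^{\pcan{\K}{0}}(a')$, which is exactly what $\theta_{\exists R_{\tpl z'}}(t_{\tpl z'})$ evaluates to over $\can{\langle\emptyset,\A\rangle}$. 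The only difference is presentational: you write out the sum--product distribution explicitly, while the paper argues subquery-by-subquery via the displayed forms of $q_{\tpl z}$ and $\bar q_{\tpl z}$.
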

\begin{proof}
Denote by $\tpl {\bar z}$ all terms of $\phi(\tpl x, \tpl y)$ not appearing in 
$\tpl z$, by $\phi_{\tpl {\bar z}}$ the subconjunction of $\phi(\tpl x, \tpl 
y)$ that consists of all atoms and equalities mentioning only terms in $\tpl 
{\bar z}$, and set $\tpl y_{\tpl {\bar z}} = \tpl y \setminus \tpl z$. 
Then, recalling the definition of $\phi_{\tpl z'}$ in Section~\ref{sec:q-rw}, 
$q$ is written as
\[ 
q(\tpl x) = \exists \tpl y_{\tpl {\bar z}}.\, \exists \tpl z.\,
\phi_{\tpl {\bar z}} \wedge 
\bigwedge_{\text{ma-connected } \tpl z' \subseteq \tpl z} \phi_{\tpl z'}.
\]
For a ma-connected subset $\tpl z'$ of $\tpl z$ and for $P \in \Roles$, let
$R_{\tpl z'} = P$,   if $\alpha_{\tpl z'} = P(t, z)$, and
$R_{\tpl z'} = P^-$, if $\alpha_{\tpl z'} = P(z, t)$.
Last, denote term $t$ and variable $z$ 
appearing in $\alpha_{\tpl z'}$ by $t_{\tpl z'}$ and $z_{\tpl z'}$, 
respectively, and let 
$\tpl z_{\tpl z} = \bigcup_{\text{ma-connected } \tpl z' \subseteq \tpl 
z}\{z_{\tpl z'} \mid z_{\tpl z'} \text{ appears in } \alpha_{\tpl z'} \}$.
Based on this, query $q_{\tpl z}$ takes the form
\[
q_{\tpl z}(\tpl x) = \exists \tpl y_{\tpl {\bar z}}.\,
\exists \tpl z_{\tpl z}.\,
\phi_{\tpl {\bar z}} \wedge 
\bigwedge_{\text{ma-connected } \tpl z' \subseteq \tpl z} \Big(\alpha_{\tpl z'} 
\land 
\bigwedge_{y \in \tpl t_{\tpl z'} \cap \Vars, t \in \tpl t_{\tpl z'}} (y = 
t)\Big).
\]
Last, denote by 
${\sf apply}(\phi_{\tpl {\bar z}}, \eta)$ 
the formula obtained from 
$\phi_{\tpl {\bar z}}$ such that each occurrence of an atom $A(t)$ in 
$\phi_{\tpl {\bar z}}$, where $A \in \Concepts$ and $t \in \Ind \cup \Vars$,  
is replaced with $\eta_A(t)$ (defined in equation~\eqref{eta}).
Recalling equation~\eqref{theta} that defines $\theta_{\exists R}(t)$ for a 
role $R$ and a term $t$, query $\bar q_{\tpl z}$ takes the form
\[
\bar q_{\tpl z}(\tpl x) = \exists \tpl y_{\tpl {\bar z}}.\,
{\sf apply}(\phi_{\tpl {\bar z}}, \eta) \wedge 
\bigwedge_{\text{ma-connected } \tpl z' \subseteq \tpl z} 
\Big(\theta_{\exists R_{\tpl z'}}(t_{\tpl z'}) \land 
\bigwedge_{y \in \tpl t_{\tpl z'} \cap \Vars, t \in \tpl t_{\tpl z'}} (y = 
t)\Big).
\]
Consider now Definition~\ref{def:eval-nulls} and the set of valuations 
accounting for bag
$[q_{\tpl z}, \tpl z]^{\can{\K}}$.
All such valuations map a variable in $\phi_{\tpl {\bar z}}$ to an individual.
Because ${\sf apply}(\phi_{\tpl {\bar z}}, \eta)$ replaces unary atoms 
$A(t)$ in $\phi_{\tpl {\bar z}}$ with $\eta_A(t)$ leaving any binary atoms 
intact, by Lemma~\ref{lemma:rw-atomic-ucq} (Case~1 for unary atoms and Case~2), it follows that the evaluation of $\phi_{\tpl {\bar z}}$ over $\can\K$ 
coincides with the evaluation of ${\sf apply}(\phi_{\tpl {\bar z}}, \eta)$ over 
$\can{\langle \emptyset, \A\rangle}$. 
It remains to be shown that, for each ma-connected subset $\tpl z'$ of $\tpl 
z$, equality
$[\exists z_{\tpl z'}.\,\alpha_{\tpl z'}, z_{\tpl z'}]^{\can\K} = 
(\theta_{\exists R_{\tpl z'}}(t_{\tpl z'}))^{\can{\langle \emptyset, 
\A\rangle}}$ holds.
Then, the claim will follow because $q_{\tpl z}$ and $\bar q_{\tpl z}$ share 
the same equalities.
To see this last equivalance, observe that for an individual $a \in \Ind$ 
(or for $a = t_{\tpl z'}$ if $t_{\tpl z'} \in \Ind$),
the multiplicity for
$[\exists z_{\tpl z'}.\,\alpha_{\tpl z'}, z_{\tpl z'}]^{\can\K}(a)$
corresponds to the number of anonymous elements associated with $a$ in
extension $\exists R_{\tpl z'}$ under $\can\K$.
Recalling the definition of canonical models and the proof of 
Lemma~\ref{lemma:rw-atomic-ucq},
number 
$[\exists z_{\tpl z'}.\,\alpha_{\tpl z'}, z_{\tpl z'}]^{\can\K}(a)$ can be 
written successively as follows:
\begin{align*}
[\exists z_{\tpl z'}.\,\alpha_{\tpl z'}, z_{\tpl z'}]^{\can\K}(a) &=
\ccl{a}{\pcan{\K}{0}}{\T}(\exists R_{\tpl z'}) - 
(\exists R_{\tpl z'})^{\pcan{\K}{0}}(a)\\
& = \Big(\bigcup_{\T \models C_0 \isa \exists R_{\tpl z'}} 
C_0^{\pcan{\K}{0}}(a)\Big) -
[\exists z_{\tpl z'}.\,\alpha_{\tpl z'}, \emptyset]^{\pcan{\K}{0}}(a)\\
& = \big(\eta_{\exists R_{\tpl z'}}(t_{\tpl z'})\big)^{\can{\langle \emptyset, 
\A\rangle}}(a) -
\big(\zeta_{\exists R_{\tpl z'}}(t_{\tpl z'})\big)^{\can{\langle \emptyset, 
\A\rangle}}(a)\\
& = \big((\eta_{\exists R_{\tpl z'}}(t_{\tpl z'})) \setminus \zeta_{\exists 
R_{\tpl z'}}(t_{\tpl z'})\big)^{\can{\langle \emptyset, \A\rangle}}(a)\\
& = \big(\theta_{\exists R_{\tpl z'}}(t_{\tpl z'})\big)^{\can{\langle 
\emptyset, \A\rangle}}(a).
\end{align*}
\end{proof}

\BALGRewrite*
\begin{proof}
Let $q$ be of the form $q(\tpl x) = \exists \tpl y. \phi(\tpl x, \tpl y)$.
From~\eqref{partition_eq} and then by Lemmas~\ref{lemma:rw-realisable} 
and~\ref{lemma:rw-abox}, we have
\[
q^{\can\K} 
 = \biguplus_{{\tpl z} \subseteq \tpl y} [q, \tpl z]^{\can\K}
 = \biguplus_{\tpl z \text{ is realisable by } \T} [q_{\tpl z}, {\tpl 
 z}]^{\can\K}
 = \biguplus_{\tpl z \text{ is realisable by } \T} ({\bar q}_{\tpl 
 z})^{\can{\langle \emptyset, \A\rangle}}
 = \big(\bigveedot_{\tpl z \text{ is realisable by } \T} {\bar q}_{\tpl 
  z}\big)^{\can{\langle \emptyset, \A\rangle}}
 = {\bar q}^{\can{\langle \emptyset, \A\rangle}}.
\]
\end{proof}

\BALGRewriteCor*
\begin{proof}
The claim is an immediate consequence of Theorems~\ref{thm:chase-univ} 
and~\ref{thm:cq-rw}, 
and the fact that rewriting of a CQ $q$ depends only on $q$ and TBox.
\end{proof}

\newcommand{\guessi}{({\sf g1}){ }}
\newcommand{\guessv}{({\sf g2}){ }}
\newcommand{\verv}{({\sf v1}){ }}
\newcommand{\verm}{({\sf v2}){ }}
\QAComplexity*
\begin{proof}
For combined complexity, $\np$-hardness comes from reducing CQ query 
answering in $\dlliteC$ to CQ query answering for rooted queries in 
$\dlliteCbag$. It is known that given a CQ $q$ in $\dlliteC$, an ontology $\K = 
\langle \T, \A \rangle$, and a tuple of individuals $\tpl a$, the problem of 
deciding whether $\tpl a \in q^{\K}$ is $\np$-complete 
even when all variables in $q$ are free (i.e., $q$ is rooted). Then, 
$\np$-hardness follows from Proposition~\ref{thm:back-compat}. 

We now discuss membership in $\np$.
$\BagCert[\textup{rooted CQs},\dlliteCbag]$ decides for any given 
ontology $\K = \langle \T, \A\rangle$, rooted query 
$q(\tpl x) = \exists \tpl y.\, \phi(\tpl x, \tpl y)$, 
tuple of individuals $\tpl a$, and number $k \in \natinfz$, 
whether $q^\K(\tpl a) \geq k$. 
Without loss of generality in the following we assume that $k$ is a positive 
integer since for the cases of $k$ being $0$ or $\infty$, 
$\BagCert[\textup{rooted CQs},\dlliteCbag]$ is always true or false, 
respectively. 
By Theorem~\ref{thm:chase-univ} and the definition of universal models, we have 
$q^\K(\tpl a) = q^{\can\K}(\tpl a)$. Hence, in the following, we focus on the 
canonical model of $\K$.
By the semantics of bag query answering, we 
might have a number of valuations from the terms of $q$ to $\Delta^{\can\K}$ 
that contribute to the multiplicity of $q^{\can\K}(\tpl a)$.
In the worst case, each valuation may contribute $1$ to this multiplicity, 
which means that to verify whether
$q^{\can\K}(\tpl a) \geq k$ holds, we need at most $k$ different valuations 
$\lambda_1,\dots,\lambda_k$
that send the atoms of $q$ to tuples in $\can\K$,
bind $\tpl x$ to $\tpl a$, and 
satisfy the equalities of $q$.
By Theorem~\ref{thm:qa-bounded-can} the images of $q$ under any of these 
valuations fall into $\pcan{\K}{n}$ where $n$ is the number of atoms of $q$. 
Based on these observations, to prove membership in $\np$, we describe how we 
can obtain a tuple
$\langle \J, \lambda_1, \dots, \lambda_k \rangle$
where $\J$ is a subinterpretation of $\pcan{\K}{n}$
and each 
$\lambda_i: \tpl x \cup \tpl y \cup \Ind \to \Delta^\J$
satisfies $\lambda(\tpl x) = \tpl a$,
and verify that the multiplicity of $q^\J(\tpl a)$ with respect to 
$\lambda_1,\dots,\lambda_k$ is at least $k$. 
Then, we also prove that 
$\langle \J, \lambda_1, \dots, \lambda_k \rangle$ 
has size $N$ and 
that verification can be done in time $T$
such that both $N$ and $T$ are some polynomials with respect to the size of 
$\K$, $q$, and number $k$.
To obtain 
$\langle \J, \lambda_1, \dots, \lambda_k \rangle$ 
we guess
\guessi an interpretation 
$\J = \pcan{\K}{1} \cup \bigcup_{i=1}^k \J_i$ where each $\J_i$ is a  
subinterpretation of $\bigcup_{j=2}^n \pcan{\K}{j}$ and
\guessv
$k$ different valuations $\lambda_1,\dots,\lambda_k$ such that
$\lambda_i: \tpl x \cup \tpl y \cup \tpl I \to \Delta^\J$
and
$\lambda(\tpl x) = \tpl a$.
For the verification,
we
\verv 
check which of the valuations $\lambda_1, \dots, \lambda_k$ satisfy the 
equalities of $q$ letting $\Lambda_=$ be the corresponding subset and
\verm
compute quantity 
$m = \sum_{\lambda \in \Lambda_=} 
     \prod_{S(\tpl t) \text{ in } \phi(\tpl x, \tpl y)} 
     S^{\J}(\lambda(\tpl t))$ 
for checking whether $m \ge k$. 

We now elaborate on the guessing of $\J$. 
The guessed $\J$ is such that $\Delta^\J$ is a finite set comprising all 
individuals appearing in $\A$ and a number of anonymous elements of the form 
$w^j_{u,R}$ where $j$ is a positive number, $u$ an element of $\Delta^\J$, and 
$R$ a role in $\T$. 
Further $\J$ contains finite bag extensions for every concept or role $S$ 
appearing in $\T$ or $\A$.
The part of $\J$ corresponding to $\pcan{\K}{1}$ can be trivially computed from 
the assertions of $\A$ and the axioms of $\T$. To avoid an exponential 
computation, however, we guess the remaining interpretations $\J_1,\dots,\J_k$ 
using a non-deterministic algorithm having $n-1$ steps for each $\J_i$. 
Initially, $\J_i$ is set to $\pcan{\K}{1}$.
At each step, 
the algorithm picks
a tuple $\tpl t$ from an extension $S^{\J_i}$ with
$S \in \Concepts \cup \Roles$ and
a concept $D$ appearing in $\T$ 
such that the following conditions are satisfied:
\begin{itemize}
\item[--]
if $S \in \Concepts$, then
$\T \models S \isa D$,
$\tpl t = w^j_{u,R}$, 
and $D^{\J_i}(w^j_{u,R}) = 0$
where
$w^j_{u,R} \in \Delta^{\J_i}$;

\item[--]
if $S \in \Roles$, then
$\T \models \exists S^- \isa D$,
$\tpl t = (u, w^j_{u,R})$,
and
$D^{\J_i}(w^j_{u,R}) = 0$
where
$u,w^j_{u,R} \in \Delta^{\J_i}$
(resp., $\T \models \exists S \isa D$, $\tpl t = (w^j_{u,R^-}, u)$, 
$w^j_{u,R^-} \in \Delta^{\J_i}$).
\end{itemize}
Then, 
if $D \in \Concepts$, it sets $D^{\J_i}(w^j_{u,R}) = 1$; 
if $D = \exists P$ with $P\in\Roles$, it sets
$P^{\J_i}((w^j_{u,R}, w^1_{w^j_{u,R}, P})) = 1$
and adds
$w^1_{w^j_{u,R}, P}$ to $\Delta^{\J_i}$; and
if $D = \exists P^-$ with $P\in\Roles$, it sets
$P^{\J_i}((w^1_{w^j_{u,R}, P^-}, w^j_{u,R})) = 1$
and adds
$w^1_{w^j_{u,R}, P^-}$ to $\Delta^{\J_i}$.
It can be readily verified that each $\J_i$ can be any subinterpretation of 
$\pcan{\K}{n}$ that always includes $\pcan{\K}{1}$ and potentially tuples 
${\tpl t}_1, \dots, {\tpl t}_l$ that would have been created respectively in 
$\pcan{\K}{1},\dots,\pcan{\K}{l}$ such that 
${\tpl t}_j \cap {\tpl t}_{j+1} \not= \emptyset$ 
with $1\leq j < l$ and $l \in [2, n]$.


What remains to be shown is that the size, $N$, of tuple
$\langle \J, \lambda_1,\dots,\lambda_k \rangle$
and time, $T$, needed to verify that this tuple is a certificate 
for $q^{\can\K}(\tpl a) \ge k$ are polynomials 
in the size of $\K$, $q$, and number $k$.
Consider first size $N$.
It is easy to see that $\pcan{\K}{1}$ has a size that is 
polynomial in the size of $\A$ and $\T$. The remaining parts of 
$\J_1,\dots,\J_k$ are linear in the size of $q$ by construction. Therefore, 
$\J$ is of polynomial size with respect to $\K$, $q$, and number $k$.
As for the size of $\lambda_1,\dots,\lambda_k$, because $q$ has $n$ atoms, it 
contains at most $2n$ terms, hence
each valuation can be represented by $2n$ pairs $(t, d)$ 
where $t$ is a term in $q$ and $d \in \Delta^\J$. 
Therefore, overall, $N$ is polynomial in the size of $\A$, $q$, and number $k$.
Consider now quantity $T$. 
Step \verv takes time $\Theta(n)$.
Step \verm takes polynomial time in the size of $q$, $\A$, 
and number $k$. To see this, first observe that retrieval of the multiplicities 
involved in a product for a specific valuation takes $O(n \times |\J|)$ time 
where $|\J|$ is the sum of the cardinalities of all extensions in $\J$. 
Each such number $l$ 
is determined by the maximum multiplicity in $\A$ and can be represented in 
binary using $\log l$ bits. 
Second, multiplication of $n$ such numbers can be done in 
polynomial time, while the result, $m$, can be represented using 
$n\log l$ bits. Since $|\J|$ is a polynomial determined by the input,
overall verification can be done in polynomial time.
This proves that 
$\BagCert[\textup{rooted CQs},\dlliteCbag] \in \np$. 
Since we have also showed that it is $\np$-hard, we conclude that
$\BagCert[\textup{rooted CQs},\dlliteCbag]$ is $\np$-complete.

For data complexity, it suffices to prove that for any fixed $\dlliteCbag$ TBox 
$\T$ and any fixed rooted CQ $q$, 
the problem of checking whether 
$q^{\langle \T, \A \rangle}(\tpl a) \ge k$ 
is in $\logspace$
for an input ABox $\A$, a tuple of individuals $\tpl a$, and a $k \in \natinfz$.
The claim follows from Theorem~\ref{thm:chase-univ} and 
Theorem~\ref{thm:cq-rw}, which in combination, allow us to decide whether 
$q^{\langle \T, \A \rangle}(\tpl a) \ge k$ by computing the rewriting $\bar q$ 
of $q$ in constant time and then deciding whether 
${\bar q}^{\can{\langle \emptyset, \A\rangle}}(\tpl a) \ge k$ holds. 
By Proposition~\ref{prop:balg-complete}, the latter problem is $\aczero$ 
reducible to $\balgonee$, which is known to be strictly included in 
$\logspace$, 
hence, the claim follows immediately.
Computation of $\bar q$ can be done in constant time because of the following:
\emph{(i)} the number of all possible subsets $\tpl z$ of  $\tpl y$ 
participating in the realisability check by $\T$ is constant;
\emph{(ii)} computing all ma-connected subsets of $\tpl z$ can be done in 
constant time;
\emph{(iii)} checking whether each ma-connected subset of $\tpl z$ is 
realisable by $\T$ can be done in constant time by employing 
Theorem~\ref{thm:qa-bounded-can} for bounding the depth of $\canaux$ to a 
constant number (in particular, to the number of atoms of $q$); and
\emph{(iv)} constructing the rewriting of the original query for each 
realisable ma-connected subset of $\tpl z$ can be done in constant time.
\end{proof}

\QARolescoNP*
\begin{proof}
We prove that there exists a $\dlliteRbag$ TBox $\T$ and a rooted CQ $q$ 
such that checking whether $q^{\langle \T, \A \rangle}(\emptytpl) \ge k$ 
for an 
input bag 
ABox $\A$ and $k \in \natinfz$ is $\conp$-hard. 
To prove this claim, we give a similar reduction to the proof of 
Theorem~\ref{thm:core-conph}.
We show that if $G = \langle V, E \rangle$ is an undirected and connected graph 
with no self-loops, then 
$G$ is not 3-colourable if and only if 
$q^{\langle \T, \A_G \rangle}(r) \ge 3\times|V| + 2$
where
$\T$ is the TBox
$
\{ 
             Vertex \isa \exists hasColour, 
          hasColour \isa Assign
\}
$,
$\A_G$ is an ABox constructed based on $G$, and $q(w)$ is the rooted query
\begin{align*}
\exists x.\,\exists y.\,\exists z.\,\exists k.\,\exists l.\,
Edge(x, y) \wedge hasColour(x, z) \wedge & hasColour(y, z) \wedge\\
& Assign(x, w)  \wedge Assign(y, w) \wedge Reachable(x, k) \wedge Assign(k, l).
\end{align*}

Let $\Ind \supseteq V \cup \{a, r, g, b\}$. ABox $\A_G$ is defined so that it 
contains the following assertions:
\begin{itemize}
\item[--]
$Vertex(u)$ for each $u \in V$,

\item[--] 
$Edge(u, v)$, $Edge(v, u)$ for each $\pair{u, v} \in E$,

\item[--] 
$Assign(u, r)$, $Assign(u, g)$, $Assign(u, b)$ for each $u \in V$,

\item[--] 
$Vertex(a)$, $Edge(a, a)$, $hasColour(a, r)$, $Assign(a, r)$ for an 
auxiliary vertex $a \notin V$,

\item[--]
$Reachable(a, a)$, $Reachable(a, u)$ and $Reachable(u, a)$ for every $u \in V$, 
$Reachable(u, v)$ and $Reachable(v, u)$ for every $u, v \in V$ with $u \not=v$.
\end{itemize}

Role $hasColor$ plays the role of a colour assignment to the vertices of $G$; 
this is also imposed by axiom $Vertex \isa \exists hasColour$.
Role $Assign$ provides a pre-defined list of colours for every vertex of $G$ 
that favours 3-colour assignments based on the colours $r$, $g$, and $b$. Any 
proper assignment of $G$ shall use at most $|V|$ times each one of the colours. 
However, if any assignment is not proper and exhausts the number of available 
colours (i.e., by assigning multiple colours to the same vertex) or uses an 
additional colour, these will have to be added to role $Assign$ due to the 
axiom 
$hasColour \isa Assign$. 
Role $Reachable$ plays the role of an accessibility relation of an individual 
from any other individual. This property is used for counting the total number 
of available colours among all vertices.

We next show that 
$G$ is not 3-colourable if and only if 
$q^{\langle \T, \A_G \rangle}(r) \ge 3\times|V| + 2$.

``$\Rightarrow$''
Let $G$ be non-3-colourable. 
Consider a model $\I$ of $\langle \T, \A_G \rangle$ (which exists since 
$\langle \T, \A_G \rangle$ is satisfiable) such that, if 
$\gamma: V \to \{r, g, b\}$ 
is an assignment of colours to the vertices of $G$, then for $u \not= a$,
$hasColour^\I(\pair{u^\I, c^\I}) = 1$ if and only if $\gamma(u) = c$ with $c 
\in \{r, g, b\}$.
Since $G$ is not 3-colourable, then, for all assignments $\gamma$, there exists 
at least an edge $\pair{u, v} \in E$ with $\gamma(u) = \gamma(v) = c$. Without 
loss of generality assume that $c = r$. 
Consequently, for all models $\I$, $hasColour^\I$ contains tuples
$\pair{u^\I, c^\I}$ and $\pair{v^\I, c^\I}$, and hence, subquery
\[
q_1(x, y, z, w) = 
Edge(x, y) \wedge hasColour(x, z) \wedge 
hasColour(y, z) \wedge Assign(x, w) \wedge Assign(y, w)
\]
matches at least two times, each one contributing multiplicity equal to $1$; 
one match corresponds to valuation 
$\nu_1 = \{ x/u^\I, y/v^\I, z/c^\I, w/r^\I \}$
and one to 
$\nu_2 = \{ x/a^\I, y/a^\I, z/r^\I, w/r^\I \}$ 
(note that we are considering only valuations $\nu$ with $\nu(w) = r^\I$).
Extending the above query to $q(w)$, we observe that $\nu_1$ can be extended 
with variables $k$ and $l$ in $3\times|V| - 2$ ways. To see this, observe that 
every node in $V$ is related to $|V|$ other nodes in $Reachable^\I$ of 
which $|V| - 1$ are related to at least $3$ colours in $Assign^\I$ while the 
other one, namely $a^\I$, is related to at least $1$. 
Similarly, $\nu_2$ can be extended with variables $k$ and $l$ in $3\times|V| + 
1$ ways.
Therefore, $q$ has at least $6\times|V| - 1$ matches for every model $\I$ 
following a proper 3-colour assignment, and hence, $3\times|V| + 2$ is a 
certain multiplicity for $r$, as required. 
Clearly, the same statement holds for all of the 
models that add additional elements in $Vertex$, $Edge$, or assign multiple 
colours to some vertices exceeding the number of available colours. 
What is left to consider is those models that assign additional colours to 
vertices and not just one among $r$, $g$, and $b$. For such colour assignments, 
$G$ might turn out to be colourable. Suppose $G$ is 4-colourable (if it is not, 
then the above discussion carries over) and let $p \in \Ind$. Then, there 
exists a model that follows a 4-colour assignment 
$\gamma: V \to \{r, g, b, p\}$ such that 
$\gamma(u) \not= \gamma(v)$ for every 
$\pair{u, v} \in E$. 
Therefore, for that model we would get just one match for subquery $q_1(x, y, 
z, w)$
corresponding to valuation
$\nu_2$. 
On the other hand, given the observations above, that model would have 
associated at least one vertex to colour $p$ in the extension of 
$hasCol^\I$, and hence in $Assign^\I$, effectively 
increasing by one the number of colours to which that vertex is associated. 
Therefore, extending the above subquery to $q$, we observe that $\nu$ can be 
extended with variables $k$ and $l$ in at least $3\times|V| + 2$ ways.
Clearly, the same holds for models that make use of further colours.
Therefore, $q^{\langle \T, \A_G \rangle}(\emptytpl) \ge 3\times|V|+2$.

``$\Leftarrow$'' 
Let $G$ be 3-colourable. It suffices to show that 
there exists a model $\I$ for which $q^\I(r) = m$ with 
$m < 3\times|V| + 2$.
Since $G$ is 3-colourable, there is an assignment 
$\gamma: V \to \{r, g, b\}$ 
such that, for every $\pair{u, v} \in E$, $\gamma(u) \not= \gamma(v)$.
Consider an interpretation $\I_\gamma$ defined as follows:
\begin{align*}
\Delta^{\I_\gamma} & = \{ d_c \mid c \in V \cup \{a, r, g, b\}\},\\
c^{\I_\gamma}      & = d_c ,\text{ for } c \in V \cup \{a, r, g, b\},\\
Vertex^{\I_\gamma} & = \{d_u \mid u \in V \cup \{a\}\}, \\
Edge^{\I_\gamma}   & = \{\pair{d_u, d_v}, \pair{d_v, d_u}, \mid \pair{u, v} \in 
E \} \cup \{\pair{d_a, d_a}\},\\
hasColour^{\I_\gamma} & = \{\pair{d_u, d_c} \mid u \in V, c = \gamma(u) \} \cup 
\{\pair{d_a, d_r}\},\\
Assign^{\I_\gamma} & = \{\pair{d_u, d_r}, \pair{d_u, d_g}, \pair{d_u, d_b} \mid 
u \in V \} \cup \{\pair{d_a, d_r}\},\\
Reachable^{\I_\gamma} & = \{ \pair{d_u, d_v}, \pair{d_v, d_u} \mid u, v \in V 
\text{ and } u \not= v \}\ \cup\ 
\{ \pair{d_a, d_u}, \pair{d_u, d_a} \mid u \in V \} \cup 
\{\pair{d_a, d_a}\}.\\
\end{align*}
Interpretation $\I_\gamma$ is defined based on the contents of $V$, $E$, and 
the 3-colour assignment $\gamma$.
It is easy to verify that $\I_\gamma$ is a model of $\langle \T, \A_G \rangle$.
Next, we show that $q^{\I_\gamma}(r) = 3\times|V| + 1$.
First, we observe that subquery $q_1(x, y, z, w)$
matches exactly once, i.e., under valuation
$\nu = \{ x/d_a, y/d_a, z/d_r, w/d_r \}$. 
This holds because $\gamma$ is a proper 
3-colouring of $G$ and, for every $\pair{u, v} \in E$, 
$\gamma(u) \not= \gamma(v)$.
Note also that extending the above subquery to $q(w)$, valuation $\nu$ can be 
extended with variables $k$ and $l$ in $3\times|V| + 1$ ways. Consequently,
$q^{\I_\gamma}(r) = 3\times|V| + 1$.
\end{proof}

\begin{remark}
When the UNA is dropped, we can use a similar argumentation to the one given in 
Remark~\ref{rem:una-conp-core} to reduce the problem of non-3-colourability 
of undirected graphs to that of query answering over $\dlliteRbag$ ontologies. 
\end{remark}

\end{document}